\documentclass[11pt]{article}
\usepackage[T1]{fontenc}
\usepackage[utf8]{inputenc}
\usepackage{lmodern}
\usepackage[margin=1in]{geometry}
\usepackage{amsmath,amssymb,amsthm,mathtools}
\usepackage{needspace}
\usepackage{microtype}
\usepackage{booktabs,tabularx,array}
\usepackage{algorithm,algpseudocode}
\usepackage{xcolor}
\usepackage{graphicx}
\usepackage{tikz}
\usepackage[colorlinks=true,linkcolor=blue!55!black,citecolor=blue,urlcolor=blue!55!black]{hyperref}
\newtheorem{theorem}{Theorem}[section]
\newtheorem*{geometricratetheorem}{Theorem~\ref{thm:general-geometric-rate} (restated)}
\newtheorem{lemma}[theorem]{Lemma}
\newtheorem{proposition}[theorem]{Proposition}
\newtheorem{corollary}[theorem]{Corollary}
\theoremstyle{definition}

\theoremstyle{remark}

\newcommand{\CAL}{\mathsf{Cal}}
\newcommand{\diam}{\operatorname{diam}}

\newcommand{\R}{\mathbb R}
\newcommand{\cY}{\mathcal Y}
\newcommand{\cL}{\mathcal L}

\newcommand{\E}{\mathbb E}

\allowdisplaybreaks[1]
\hypersetup{pdftitle={High-dimensional online calibration from harmonic weights},
 pdfauthor={Maxwell Fishelson and Mehryar Mohri}}
\title{High-dimensional online calibration from harmonic weights}
\author{Maxwell Fishelson\thanks{Institute for Advanced Study.}\and
Mehryar Mohri\thanks{Google Research and Courant Institute of Mathematical Sciences, New York.}}
\date{\today}

\begin{document}
\maketitle
\begin{abstract}
We study the online calibration of multidimensional forecasts over an
arbitrary convex set $\cY\subseteq\R^d$ relative to an arbitrary
error norm $\|\cdot\|_{\cL}$. For forecasting \(d\) binary outcomes simultaneously \((\mathcal Y=[0,1]^d)\),
we give the first algorithm that achieves
$\varepsilon$-calibration in a number of rounds that is
polynomial in $d$ for every fixed accuracy. It requires
$d^{O(1/\varepsilon)}$ rounds, exponentially improving the dimension
dependence of previous bounds~\cite{perchet2014approachability,fishelson2025full,fishelson2025high}. For multi-class forecasting \((\mathcal Y=\Delta_d)\), we obtain the same $d^{O(1/\varepsilon)}$ rate, improving the
$d^{\widetilde O(1/\varepsilon^2)}$ bounds of Peng~\cite{peng2025high} and
Fishelson et al.~\cite{fishelson2025high}.

Our algorithm is simple: on each round, it outputs a harmonically weighted
distribution over harmonically smoothed past outcomes. The same algorithm
works for every forecast set and norm. More generally, it achieves $\varepsilon$-calibration after
$\exp(O(\gamma(\cY,\cL)/\varepsilon))$ rounds, where \(\gamma(\mathcal Y,\mathcal L)\) is a geometric parameter defined by a matrix discrepancy problem.
The harmonic weights are motivated by the fact that the discrete Hilbert transform matrix achieves the optimal discrepancy up to a universal constant, simultaneously for every \(\mathcal L\). This optimality result may be of independent interest.

\end{abstract}

\section{Introduction}\label{sec:introduction}
Consider a forecaster who reports a probability of rain each day. What
should it mean for these probabilities to be reliable? A natural requirement
is that, among the days on which the forecaster predicts a $40\%$ chance of
rain, it should rain on approximately $40\%$ of them. The same requirement
should hold for every probability the forecaster reports. This agreement
between forecasts and observed frequencies is called \emph{calibration}.
It gives a way to assess probabilistic predictions over time, since a single
day's outcome says little about the reliability of a $40\%$ forecast.

We study calibration in an online setting: on each round, the
forecaster makes a prediction, observes the outcome, and uses this feedback
to form subsequent predictions. To measure its error, group together the
rounds on which it made the same prediction, compare that prediction with
the empirical frequency in the group, and average the discrepancies with
weights proportional to the group sizes. A forecaster is approximately
calibrated when this average error is small. The online question is whether
one can guarantee such agreement without assuming a stochastic model for
the outcomes. Classical work on randomized forecasting shows that
calibration is possible even against adversarial sequences under suitable
information timing \cite{foster1998asymptotic,dagan2024breaking}. Calibration also provides a guarantee for downstream decision makers:
agents who respond to calibrated forecasts have vanishing regret \cite{foster1997calibrated,perchet2014approachability,kleinberg2023u,roth2024forecasting,hu2024predict,bairaktari2026testable}.

Early calibration algorithms studied the forecasting of a single binary
outcome \cite{foster1998asymptotic}. Many forecasting problems, however,
require higher-dimensional predictions. In
\emph{multi-event forecasting}, a forecaster predicts several binary
events concurrently: for example, whether it will rain in each of $d$
cities. The forecast is a vector in $[0,1]^d$, and the outcome records
which cities receive rain.
Calibration asks that, on rounds sharing the same entire
forecast vector, the empirical frequency of each event agree with its
predicted probability. We measure the discrepancy in the norm
$\|\cdot\|_\infty$, which controls all $d$ events simultaneously.

In \emph{multi-class forecasting}, the forecaster predicts a single event
with $d$ possible outcomes: for example, assigning a probability to each
possible next token in a vocabulary of size $d$. Calibration compares
the empirical outcome frequencies with the predicted probabilities,
here in the norm $\|\cdot\|_1$.

The central difficulty in higher-dimensional forecasting is the dependence
on dimension. Standard algorithms discretize the forecast set using an
$\varepsilon$-net and predict over this finite set
\cite{foster1998asymptotic,perchet2014approachability,fishelson2025full}. Such a grid requires
$(1/\varepsilon)^{O(d)}$ points and yields algorithms with calibration rates that are exponential
in $d$ even for fixed accuracy. This dependence is prohibitive for modern
forecasting tasks with many events or classes.

Recent breakthroughs improve the dimension dependence in important
settings, at the cost of a worse dependence on accuracy.
Peng~\cite{peng2025high} (FOCS 2025) gives a multi-class calibration algorithm that
achieves $\varepsilon$-calibration in
$d^{\widetilde O(1/\varepsilon^2)}$ rounds.
The TreeCal algorithm of Fishelson et al.~\cite{fishelson2025high}
(NeurIPS 2025, oral) generalizes this approach to arbitrary convex forecast sets $\cY$ and
error norms $\|\cdot\|_{\cL}$, obtaining a sufficient horizon of
$\exp(\widetilde O(\rho/\varepsilon^2))$, where $\rho$ is a geometric
parameter related to online linear optimization rates. This recovers the
multi-class bound and gives a dimension-independent horizon of
$\exp(\widetilde O(1/\varepsilon^2))$ for Euclidean forecasting ($\cY = B_2^d$).
Notably, for multi-event forecasting, the resulting bound is
$\exp(\widetilde O(d/\varepsilon^2))$, which remains exponential in
dimension. Discretization-based methods give the stronger bound
$(1/\varepsilon)^{O(d)}$ in this setting
\cite{perchet2014approachability}, but still incur exponential dimension
dependence. We improve the dimension and accuracy dependence of these
guarantees using a single elementary algorithm.

Our principal results concern the following distributional protocol.
On round $t$, the forecaster announces a finite distribution $\mu_t$ over
points in $\cY$, and the adversary chooses $y_t\in\cY$ after seeing it.
This interaction repeats for $T$ rounds. At its conclusion, we evaluate
the calibration error by comparing each forecast $p\in\cY$ with its
associated average outcome. On round $t$, the forecast $p$ receives
weight $\mu_t(p)$ and the outcome is $y_t$, so its weighted average
outcome over all rounds is
\[
 \nu_p=\frac{\sum_t\mu_t(p)y_t}{\sum_t\mu_t(p)},
\]
whenever its total weight is positive. For an error norm
$\|\cdot\|_{\cL}$ with unit ball $\cL$, the
\emph{$\cL$-calibration error} is
\[
 \CAL_{\cL,T}
 =\sum_p\left(\sum_t\mu_t(p)\right)\|\nu_p-p\|_{\cL}
 =\sum_p\left\|\sum_t\mu_t(p)(y_t-p)\right\|_{\cL},
\]
where the sum is over distinct forecast points with positive total
weight \cite{fishelson2025high}. Thus $\CAL_{\cL,T}/T$ is the average
calibration error. We say that an algorithm achieves
$\varepsilon$-calibration in $T$ rounds if $\CAL_{\cL,T}/T\le\varepsilon$
for every realized transcript $(\mu_0,y_0),\ldots,(\mu_{T-1},y_{T-1})$.
Our goal is to achieve $\varepsilon$-calibration in as few rounds $T$
as possible.

In some of the literature, this setting is called \emph{pseudo-calibration}.
We simply refer to it as \emph{calibration} and call the setting in which
the forecaster must sample a single forecast each round
\emph{sampled calibration}. The distributional notion is the more natural
formulation, as it suffices for downstream regret guarantees. For sampled calibration,
Appendix~\ref{app:sampling} obtains the same headline rates in expectation.

\paragraph{Our results.}
Our algorithm works for every convex forecast set $\cY$ and error norm
$\|\cdot\|_{\cL}$. We first present its guarantees in three standard
forecasting settings, then describe the algorithm and state its general
geometric guarantee.
For multi-event forecasting, we obtain $\varepsilon$-calibration in
$d^{O(1/\varepsilon)}$ rounds. This is the first algorithm for multi-event
calibration whose horizon is polynomial in $d$ for each fixed accuracy.
For multi-class forecasting we obtain the same horizon, sharpening the
accuracy dependence in the bounds of Peng~\cite{peng2025high} and
Fishelson et al.~\cite{fishelson2025high} from
$d^{O(\varepsilon^{-2}\log(1/\varepsilon))}$ to
$d^{O(1/\varepsilon)}$. On the Euclidean unit ball, we improve the horizon
of Fishelson et al.~\cite{fishelson2025high} from
$\exp(O(\varepsilon^{-2}\log(1/\varepsilon)))$ to
$\exp(O(1/\varepsilon))$, independently of $d$, again improving the
accuracy dependence in the exponent.
Table~\ref{tab:rates} summarizes the improved rates.

\Needspace{12\baselineskip}
\begin{samepage}
\begin{theorem}[Principal consequences]\label{thm:headlines}
A single explicit anytime algorithm, stated formally as
Algorithm~\ref{alg:harmonic}, achieves average calibration error
$\CAL_{\cL,T}/T\le\varepsilon$ whenever
\[
 T\ge
 \begin{cases}
 d^{O(1/\varepsilon)}
   & \text{for multi-event forecasting }(\cY=[0,1]^d,\ \cL=B_\infty^d),\\[3pt]
 d^{O(1/\varepsilon)}
   & \text{for multi-class forecasting }(\cY=\Delta_d,\ \cL=B_1^d),\\[3pt]
 \exp(O(1/\varepsilon))
   & \text{for Euclidean forecasting }(\cY=\cL=B_2^d),
 \end{cases}
\]
where $B_q^d=\{x\in\R^d:\|x\|_q\le1\}$ is the unit ball of the
$\ell_q$ norm in $d$ dimensions.
\end{theorem}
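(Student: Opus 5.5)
The plan is to derive Theorem~\ref{thm:headlines} as a corollary of the general geometric guarantee that Algorithm~\ref{alg:harmonic} achieves $\varepsilon$-calibration once $T\ge\exp(O(\gamma(\cY,\cL)/\varepsilon))$. What remains is to bound the discrepancy parameter $\gamma$ in each of the three settings:
\[
\gamma([0,1]^d,B_\infty^d)=O(\log d),\qquad \gamma(\Delta_d,B_1^d)=O(\log d),\qquad \gamma(B_2^d,B_2^d)=O(1).
\]
Substituting these into $\exp(O(\gamma/\varepsilon))$ gives $d^{O(1/\varepsilon)}$, $d^{O(1/\varepsilon)}$ and $\exp(O(1/\varepsilon))$, respectively. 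Anytime validity is inherited from the general theorem, because the algorithm does not depend on $T$.

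To bound $\gamma$, I would use the optimality fact mentioned in the abstract: the discrete Hilbert transform matrix $H$, with entries $H_{ij}=1/(i-j)$ for $i\neq j$, achieves the matrix discrepancy defining $\gamma$ up to a universal constant, simultaneously for every $\cL$. It therefore suffices to evaluate the harmonic-weight discrepancy directly in each geometry. I expect $\gamma$ to reduce to a quantity such as
\[
\sup\ \E\Big\|\sum_s H_{ts}(y_s-p_s)\Big\|_{\cL}
\]
over sequences in $\cY$, normalized by $\diam(\cY)$ or by $\log T$-type harmonic sums. This would let me apply a type/cotype or martingale argument in the norm $\cL$.

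\emph{Euclidean case.} $\ell_2$ is a Hilbert space, so vector-valued harmonic sums obey the same bounds as scalar ones. The boundedness of the Hilbert transform on $\ell_2$ (Hilbert's inequality, with operator norm $\pi$) then gives a dimension-free constant, so $\gamma=O(1)$.

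\emph{Multi-event case.} For $\cL=B_\infty^d$ and $\cY=[0,1]^d$, each coordinate is a scalar problem with $O(1)$ discrepancy. A union bound over the $d$ coordinates, or equivalently passing through $\ell_{\log d}$, where the Hilbert transform has norm $O(\log d)$ and $\|x\|_\infty\le e\|x\|_{\log d}$, gives $\gamma=O(\log d)$.

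\emph{Multi-class case.} For $\cL=B_1^d$, duality with $\ell_\infty$ over $\Delta_d$ reduces the problem to the same $O(\log d)$ bound. The reduction uses the facts that test functions in $\{-1,1\}^d$ paired with simplex points are convex combinations of vertices, and that $\ell_1$ distances on $\Delta_d$ match the total variation structure. Alternatively, one could bound $\gamma(\Delta_d,B_1^d)$ via $2$-uniform smoothness of $\ell_q$ with $q=1+1/\log d$, whose smoothness constant is $O(\log d)$.

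The main obstacle will be the multi-class case. Unlike $\ell_\infty$ and $\ell_2$, the $\ell_1$ norm has no useful smoothness, so the bound must exploit the geometry of $\Delta_d$, presumably through the $\ell_q$ renorming just described. If that renorming loses too much, my fallback is to bound $\gamma(\Delta_d,B_1^d)$ by its value for $([0,1]^d,B_\infty^d)$ via an explicit reduction map, paying only a constant factor in the exponent.
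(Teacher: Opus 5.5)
Your overall structure is the paper's. Theorem~\ref{thm:general-geometric-rate} reduces everything to bounding $\gamma$, and your Euclidean and cube arguments are essentially Lemma~\ref{lem:three-bodies}: apply the scalar bound for $K_T^{\mathsf T}$ to each coordinate row of $A$, interchange the sums over time and coordinates, use H\"older over $s$, and take $q\approx\log d$. Note that the factor $d^{1/q}\le e$ enters on the input side, through $\|\tilde a_t\|_q\le d^{1/q}\|\tilde a_t\|_\infty$; the inequality $\|x\|_\infty\le e\|x\|_{\log d}$ you wrote is the trivial direction. The paper proves the $O(q)$ bound (Lemma~\ref{lem:harmonic-finite-moments}) rather than citing M.~Riesz. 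Citing it is fine if you note two things: $K_T$ is a compression of a bi-infinite Toeplitz operator, so the bound is uniform in $T$; and the shifted kernel differs from $1/(2(t-s))$ by an absolutely summable kernel.

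Three framing points are off. First, $\gamma$ is a deterministic supremum, $\sup_{A\in\widetilde\cY^T}\frac1T\sum_s\|(AK_T)_{:,s}\|_{\cL}$. There is no expectation, so type/cotype, martingale, or uniform-smoothness arguments are not the relevant tools; the only analytic input is the $\ell_q\to\ell_q$ norm of the scalar matrix $K_T$. Second, the optimality theorem compares $K_T$ with other matrices and is not what bounds $\Gamma(K_T)$. Third, a literal union bound over coordinates gives $O(d)$. What yields $\log d$ is the $O(q)$ growth of $\|K_T\|_{q\to q}$, not the scalar $O(1)$ discrepancy.

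The genuine gap is the multi-class case, which you rightly flagged. Your duality paragraph asserts the reduction, but the facts it cites (extreme points of $B_\infty^d$, total variation on $\Delta_d$) do not produce one. You must bound $\sum_s\|\sum_t(K_T)_{ts}\tilde a_t\|_1$ with $\tilde a_t\in\widetilde\Delta_d$. Here the $\ell_1$ norm sits on the output side, and any coordinatewise argument with $q\ge2$ loses a factor $d^{1-1/q}$ there. Your alternative attributes the $O(\log d)$ to 2-uniform smoothness, which controls martingale sums, not this bilinear form.

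The missing idea, which is the ``explicit reduction map'' of your fallback, is the substitution $(a_t,b_s)\mapsto(b_{T-1-t},a_{T-1-s})$. Reversing both indices turns $K_T$ into $K_T^{\mathsf T}$. Hence this map leaves $\sum_{t,s}(K_T)_{ts}\langle a_t,b_s\rangle$ unchanged while exchanging the two test sets, so $\Gamma^{(T)}_{\mathcal A,\mathcal B}(K_T)=\Gamma^{(T)}_{\mathcal B,\mathcal A}(K_T)$. It follows that $\gamma(B_1^d,B_1^d)=\gamma(B_\infty^d,B_\infty^d)$, and then $\widetilde\Delta_d\subseteq2B_1^d$ gives $\gamma(\Delta_d,B_1^d)\le2\gamma(B_1^d,B_1^d)=O(\log d)$.

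Your choice $q=1+1/\log d$ can also be rescued by replacing smoothness with operator duality: $\|K_T^{\mathsf T}\|_{q\to q}=\|K_T\|_{q^*\to q^*}=O(q^*)=O(\log d)$. Combine this with $\|\tilde a_t\|_q\le\|\tilde a_t\|_1\le2$ and $\|z\|_1\le d^{1/q^*}\|z\|_q\le e\|z\|_q$. This is the paper's $\ell_p$ argument (Theorem~\ref{thm:lp}) at $p=1$.
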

\end{samepage}

\begin{table}[htbp]
\centering
\small
\renewcommand{\arraystretch}{1.25}
\begin{tabular}{|c|c|c|c|}
\hline
\textbf{Setting} &
$\boldsymbol{\varepsilon}$\textbf{-net approaches} \cite{perchet2014approachability,fishelson2025full} &
\textbf{High-dimensional regime} & \textbf{This work}\\
\hline
Multi-event & $(1/\varepsilon)^{O(d)}$ & --- & $d^{O(1/\varepsilon)}$\\
\hline
Multi-class & $(1/\varepsilon)^{O(d)}$ &
$d^{O(\varepsilon^{-2}\log(1/\varepsilon))}$ \cite{peng2025high,fishelson2025high} &
$d^{O(1/\varepsilon)}$\\
\hline
Euclidean & $(1/\varepsilon)^{O(d)}$ &
$\exp\!\bigl(O(\varepsilon^{-2}\log(1/\varepsilon))\bigr)$ \cite{fishelson2025high} &
$\exp(O(1/\varepsilon))$\\
\hline
\end{tabular}
\caption{Rounds sufficient to achieve $\varepsilon$-calibration in the
three forecasting settings of Theorem~\ref{thm:headlines}.}
\label{tab:rates}
\end{table}

\paragraph{An elementary algorithm.}
The algorithm has two operations. As it observes outcomes $y_0,\ldots,y_t$,
it stores harmonically weighted average outcomes $p_0,\ldots,p_t$.
At each round, it outputs a harmonically weighted distribution over
previously stored forecasts. Both operations use weights that decay
inversely with age, as in $1,\frac13,\frac15,\ldots,\frac1{2t+1}$,
normalized to sum to $1$.\footnote{Alternatively, one could use weights
$1,\frac12,\frac13,\ldots,\frac1t$. The shift to odd denominators is
convenient for the discrete Hilbert transform analysis; both choices
give the same rates.}

On the first round, we choose $p_{\mathrm{init}}\in\cY$ and announce
$\mu_0=\delta_{p_{\mathrm{init}}}$. For subsequent rounds $t\ge1$, we first compute and store a harmonically
weighted average of the past outcomes:
\[
 p_{t-1}\propto\sum_{s\le t-1}\frac{y_s}{2(t-s)-1}.
\]
Here $\propto$ means that the sum is normalized by its total weight,
$1+\frac13+\cdots+\frac1{2t-1}$. Thus $p_{t-1}\in\cY$.
We then construct our output $\mu_t$ for the round as a distribution
supported on the stored forecasts $p_0,\ldots,p_{t-1}$.
Let $\delta_{p_s}$ denote the Dirac distribution that puts all its mass
on $p_s$. Using the same harmonically decaying weights, we set
\[
 \mu_t\propto\sum_{s\le t-1}\frac{\delta_{p_s}}{2(t-s)-1}.
\]
Algorithm~\ref{alg:harmonic} gives the formal statement. The procedure
is independent of the norm, the horizon, and the target accuracy.

\paragraph{A general geometric guarantee.}
In fact, the same algorithm admits a general singly exponential horizon
bound governed by a matrix discrepancy objective.
For nonempty compact convex sets $\mathcal A,\mathcal B\subseteq\R^d$ and a
real $T\times T$ matrix $H$, define
\begin{equation}\label{eq:matrix-gamma}
 \Gamma_{\mathcal A,\mathcal B}^{(T)}(H)=
 \sup_{\substack{a_0,\ldots,a_{T-1}\in\mathcal A\\b_0,\ldots,b_{T-1}\in\mathcal B}}
                \sum_{t,s=0}^{T-1}H_{ts}\langle a_t,b_s\rangle.
\end{equation}

This objective relates to our calibration rates as follows.
Let $c_{\cY}$ be the center of mass of $\cY$, and define
\[
 \widetilde\cY=\cY-c_{\cY}.
\]
Let $\cL^\circ$ be the polar of the norm ball $\cL$:
\[
 \cL^\circ=\{r\in\R^d:\langle\ell,r\rangle\le1
                          \text{ for every }\ell\in\cL\}.
\]
Let $K_T$ be the shifted discrete Hilbert transform matrix:
\begin{equation}\label{eq:harmonic-kernel}
 (K_T)_{ts}=\frac1{2(t-s)-1}\qquad(0\le t,s<T).
\end{equation}
The quantity $\Gamma_{\widetilde\cY,\cL^\circ}^{(T)}(K_T)$ governs our
calibration rates. As we will see in Section~\ref{sec:harmonic},
$\Gamma_{\widetilde\cY,\cL^\circ}^{(T)}(K_T)=O_{\cY,\cL}(T)$ for every
forecast set $\cY$ and norm ball $\cL$.
Thus we define the geometric coefficient
\begin{equation}\label{eq:matrix-geometric-coefficient}
 \boxed{\qquad
 \gamma(\cY,\cL)=\sup_{T\ge1}\frac{\Gamma_{\widetilde\cY,\cL^\circ}^{(T)}(K_T)}T.
 \qquad}
\end{equation}
Our main algorithm obtains the following guarantee.

\begin{theorem}[General geometric guarantee]\label{thm:general-geometric-rate}
Let $\cL$ be any norm unit ball, let $\cY$ be a nonempty compact convex
forecast set.
For an absolute constant $C$, Algorithm~\ref{alg:harmonic}
achieves $\varepsilon$-calibration for every
\[
 T\ge\exp\!\left(\frac{C\,\gamma(\cY,\cL)}{\varepsilon}\right).
\]
\end{theorem}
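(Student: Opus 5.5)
The plan is to write the calibration error of Algorithm~\ref{alg:harmonic} as a bilinear form of exactly the type \eqref{eq:matrix-gamma}, with matrix $K_T$ divided by the normalizer $Z_t=\sum_{k=1}^{t}\frac1{2k-1}\asymp\log t$. The bound would then read $\CAL_{\cL,T}\lesssim \gamma(\cY,\cL)\,T/\log T$ plus lower-order terms. This gives $\CAL_{\cL,T}/T\le\varepsilon$ once $\log T\ge C\gamma/\varepsilon$.

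\textbf{Step 1 (reduction to a bilinear form).} For $t\ge1$, the stored forecast $p_s$ receives mass $\mu_t(p_s)=\frac{1}{Z_t(2(t-s)-1)}$ on every round $t>s$. If some $p_s$ coincide, the triangle inequality only helps, so I treat them as distinct. Dualizing each norm, I pick $r_s\in\cL^\circ$ with
\[
\Bigl\|\sum_{t>s}\mu_t(p_s)(y_t-p_s)\Bigr\|_{\cL}=\sum_{t>s}\mu_t(p_s)\langle y_t-p_s,r_s\rangle .
\]
Because the weights are affine combinations, I may replace $y$ and $p$ by $\tilde y=y-c_\cY\in\widetilde\cY$ and $\tilde p_s$. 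Substituting $\tilde p_s=Z_{s+1}^{-1}\sum_{u\le s}\tilde y_u/(2(s-u)+1)$, each $\tilde y_u$ pairs with $r_s$ through two terms:
\begin{itemize}
\item a future term $\frac{1}{Z_u(2(u-s)-1)}=\frac{(K_T)_{us}}{Z_u}$ when $u>s$;
\item a past term $-\frac{\mathrm{mass}(p_s)}{Z_{s+1}(2(s-u)+1)}$ when $u\le s$, where $\mathrm{mass}(p_s)=\sum_{t>s}\mu_t(p_s)$.
\end{itemize}
Since $(K_T)_{us}=-1/(2(s-u)+1)$ for $u\le s$, the past term has the same sign pattern as $K_T$ below the diagonal. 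The point is that the whole expression is $\sum_{u,s}(K_T)_{us}\lambda_{us}\langle\tilde y_u,r_s\rangle$, with scalar weights $\lambda_{us}\ge0$ of order $1/\log T$. The initial round contributes $O(\diam\cY)$, which is negligible.

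\textbf{Step 2 (removing the variable normalizers).} This is the step I expect to be the main obstacle. The factors $\lambda_{us}$ are not of product form: the future part carries $1/Z_u$, and the past part carries $\mathrm{mass}(p_s)/Z_{s+1}$. My plan has two parts.

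\emph{(a) Discard the early forecasts.} I drop all $s<\sqrt T$ crudely. Their total mass is $\sum_t\sum_{s<\sqrt T}\mu_t(p_s)$. For $t\gg\sqrt T$ the inner sum is $O(\log(t/(t-\sqrt T))/\log t)$, so the total is $O(\sqrt T\log T)$, and their contribution to the error is at most $\diam\cY\cdot O(\sqrt T\log T)=o(T/\log T)$.

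\emph{(b) Compare the remaining normalizers to $\log T$.} For $s,u\ge\sqrt T$, all normalizers lie in $[\tfrac14\log T,\log T]$. One can check that $\mathrm{mass}(p_s)=\frac{\log(T-s)+O(1)}{2Z}$ is also $\Theta(1)$ except near $s\approx T$. I then write $\lambda_{us}=\frac{2}{\log T}\alpha_u\beta_s+E_{us}$ with $\alpha_u,\beta_s\in[0,1]$. The rank-one factors are absorbed into the vectors: $\alpha_u\tilde y_u\in\widetilde\cY$ since $0=c_\cY\in\widetilde\cY$, and $\beta_sr_s\in\cL^\circ$. The main term is therefore at most $\frac{2}{\log T}\Gamma^{(T)}_{\widetilde\cY,\cL^\circ}(K_T)\le\frac{2\gamma T}{\log T}$.

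\textbf{Step 3 (error terms).} The discrepancy $E_{us}$ is governed by $|1/Z_u-1/Z_s|\lesssim\frac{\log(u/s)}{\log^2T}$. Against the kernel $1/|u-s|$ this sums to $O(T/\log^2 T)$ times $\sup|\langle \tilde y,r\rangle|$. If a direct sum is not good enough, I would instead split into dyadic blocks in $s$, on each of which the normalizers are constant up to $1+O(1/\log T)$, and apply $\Gamma$ blockwise. The separation $\sup_{\tilde y\in\widetilde\cY,\,r\in\cL^\circ}\langle\tilde y,r\rangle$ is itself $O(\gamma)$, by taking $T=1$ in \eqref{eq:matrix-geometric-coefficient}, since $K_1=(-1)$ and $\widetilde\cY,\cL^\circ$ are symmetric enough up to the sign choice. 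Hence all error terms are $O(\gamma T/\log T)$.

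Collecting the three steps gives $\CAL_{\cL,T}\le C'\gamma T/\log T$, which yields the theorem with $C=C'$.
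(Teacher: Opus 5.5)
Your outline is, in substance, the paper's argument. It has the same three parts: a bilinear representation of $\CAL_{\cL,T}$ against a normalized Hilbert kernel (Theorem~\ref{thm:matrix-calibration}), an entrywise-mass comparison of that kernel with $K_T/h_T$ (Proposition~\ref{prop:harmonic-matrix}), and absorption of the resulting additive error into the main term (Lemma~\ref{lem:harmonic-kernel-comparison}). Two of your choices differ harmlessly from the paper.

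\begin{itemize}
\item You keep a single exact bilinear form with matrix $M=Q-P^{\mathsf T}\operatorname{diag}(\mathrm{mass}(p_s))$. The paper instead uses $H=Q-P^{\mathsf T}$ and splits into two $\Gamma$ terms by the triangle inequality, so your version even saves a constant.
\item You absorb additive errors through $\rho:=\max_{a\in\widetilde\cY}\|a\|_{\cL}=\Gamma^{(1)}_{\widetilde\cY,\cL^\circ}(K_1)\le\gamma(\cY,\cL)$. The paper instead proves the fixed-horizon bound $\Gamma^{(T)}_{\widetilde\cY,\cL^\circ}(K_T)\ge\rho T/2$ with explicit test vectors. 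Your shortcut suffices here, while the paper reuses its fixed-$T$ comparison for Corollary~\ref{cor:optimality-harmonic-weights}.
\end{itemize}

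Step~3, however, is wrong as written, although its conclusion survives. The residual is not governed by $|1/Z_u-1/Z_s|$, and it is not $O(T/\log^2T)$.

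\begin{itemize}
\item In column $s$, the rows $u\le s$ carry total weight $\mathrm{mass}(p_s)$, roughly $\log(T-s)/\log T$. The rows $u>s$ carry $1/Z_u$ with no such factor.
\item Whether or not you put $\mathrm{mass}(p_s)$ into $\beta_s$, one side of the diagonal is off by a relative amount $|1-\mathrm{mass}(p_s)|$. The resulting residual has mass of order $T/\log T$, the same order in $T$ as the main term, not lower order.
\item The dyadic-block fallback does not help either. The future rows of column $s$ run over all of $(s,T)$, so the relevant normalizers are not nearly constant on an $s$-block.
\end{itemize}

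What closes the argument is a crude count. Take $\alpha\equiv\beta\equiv1$ with prefactor $1/h_T$, where $h_u=Z_u$:
\[
 \|M-K_T/h_T\|_1
 =\sum_{u=1}^{T-1}\Bigl(1-\frac{h_u}{h_T}\Bigr)
 +\sum_{s=0}^{T-1}\Bigl|\mathrm{mass}(p_s)-\frac{h_{s+1}}{h_T}\Bigr|
 \le\frac{2T}{h_T}.
\]
This uses \eqref{eq:harmonic-sum-identities} together with $\sum_s\mathrm{mass}(p_s)=T-1$ and $\mathrm{mass}(p_s)\ge h_{T-1-s}/h_T$. These two facts give $\sum_s|1-\mathrm{mass}(p_s)|\le T/h_T$. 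Every pairing is at most $\rho\le\gamma$, so you get $\CAL_{\cL,T}\le2\rho+3\gamma T/h_T$ for every $T$. With $h_T\ge\frac12\log T$, this proves the theorem. The count covers all columns at once, so discarding $s<\sqrt T$ in Step~2(a) is unnecessary.
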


\paragraph{Why harmonic weights?}
In Section~\ref{sec:matrices}, we consider a generalized version of our
algorithm with arbitrary admissible weights encoded by a matrix $H$.
Its calibration bound is governed by $\Gamma_{\widetilde\cY,\cL^\circ}^{(T)}(H)$, so we seek
weights that minimize this objective.

Section~\ref{sec:harmonic-optimality} proves that, for sets
$\mathcal A,\mathcal B$ with $0\in\mathcal A$ and $\mathcal B=-\mathcal B$,
$K_T$ minimizes $\Gamma_{\mathcal A,\mathcal B}^{(T)}$ up to an absolute
constant factor over our class of matrices $\mathcal H_T$:
\[
 \Gamma_{\mathcal A,\mathcal B}^{(T)}(K_T)
 \le C\min_{H\in\mathcal H_T}\Gamma_{\mathcal A,\mathcal B}^{(T)}(H).
\]
The class $\mathcal H_T$ consists of matrices with nonnegative entries
below the diagonal, nonpositive entries on and above the diagonal, and
the same total absolute mass as $K_T$. We defer the precise definition
to Section~\ref{sec:harmonic-optimality}.

Section~\ref{sec:harmonic} explains why this choice gives sublinear
calibration, with polynomial dimension dependence for many classes of
interest. Our core tool is the following bound, proved in
Lemma~\ref{lem:harmonic-finite-moments}.

\begin{lemma}[Finite Hilbert matrix moment bounds]\label{lem:intro-hilbert-moments}
There is an absolute constant $C$ such that, for every $T\ge1$,
every $q\in\{2,4,8,\ldots\}$, and every $x\in\R^T$,
\[
 \|K_Tx\|_q\le Cq\,\|x\|_q.
\]
\end{lemma}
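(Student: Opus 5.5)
The plan is to prove the bound for $q=2^k$ by induction on $k$, using the classical Cotlar-type identity for Hilbert-transform kernels; this is essentially Riesz's argument for the discrete Hilbert transform. Write $K=K_T$ with entries $k(n)=1/(2n-1)$ for $n=t-s\in\mathbb Z$, and extend $x$ by zero to all of $\mathbb Z$. Truncation to $\{0,\ldots,T-1\}$ only shrinks the output, since $(K_Tx)_t=(k*x)_t$ for $0\le t<T$. So it suffices to prove $\|k*x\|_{\ell^q(\mathbb Z)}\le Cq\|x\|_q$ for finitely supported $x$, uniformly in the support. The shift by $1/2$ is convenient: $k(n)=\tfrac12\cdot\frac{1}{n-1/2}$ is the antisymmetric kernel $\frac{1}{m}$ on the lattice $m\in\mathbb Z+\tfrac12$, with no diagonal singularity.

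\textbf{Base case $q=2$.} The kernel $1/(n-\tfrac12)$ has Fourier series $\sum_n e^{in\theta}/(n-\tfrac12)$. Its symbol is bounded, equal to $\pm\pi i\,e^{i\theta/2}$ up to a unimodular factor (a standard computation from the partial fractions of $\pi\cot$ or $\pi/\sin$). Hence $\|k*x\|_2\le\frac{\pi}{2}\|x\|_2$ by Plancherel. In the real-variable language used for the induction, the same constant follows from Schur's test with weights, but the Fourier route is cleanest.

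\textbf{Cotlar identity.} For real $x$, set $y=Kx$ (bi-infinite convolution, $\tilde K=2K$ having antisymmetric kernel on half-integers). I aim for an identity of the form
\[
(\tilde Kx)^2 = x^2\cdot c + 2\tilde K(x\,\tilde Kx) + R x,
\]
where $c$ is a constant and $R$ is a bounded remainder. This comes from the algebraic identity
\[
\frac{1}{(t-a)(t-b)}=\frac{1}{b-a}\left(\frac1{t-b}-\frac1{t-a}\right),
\]
applied with $t-a,t-b\in\mathbb Z+\tfrac12$. Here $b-a$ is an integer, which is nonzero off the diagonal, and the diagonal terms $a=b$ give $\sum_{m}1/m^2$-type constants. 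Since $b-a\in\mathbb Z$ rather than $\mathbb Z+\tfrac12$, the factor $1/(b-a)$ is the \emph{unshifted} discrete Hilbert kernel, say $\mathcal H$. The identity will therefore read roughly $y^2 = c\,x^2 + 2\,x\cdot\mathcal H$-terms mixed with $\tilde K(x\,\mathcal H x)$. I would handle this by proving the induction jointly for both kernels (shifted and unshifted), or by noting that $\mathcal H=\tilde K\circ\tilde K$-type relations and difference kernels with $O(1/n^2)$ decay give bounded operators on all $\ell^q$ with constant $O(1)$.

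\textbf{Induction step.} With $A_q=\sup\|\tilde K\|_{q\to q}$ (and similarly for $\mathcal H$), the identity yields
\[
\|y\|_{2q}^2=\|y^2\|_q\le c\|x\|_{2q}^2+2A_q\|x\|_{2q}\|y\|_{2q}+O(1)\|x\|_{2q}^2,
\]
using Hölder on $\|x\,\mathcal Hx\|_q$. Solving the quadratic inequality gives $A_{2q}\le A_q+\sqrt{A_q^2+O(1)}\le 2A_q+O(1)$, hence $A_{2^k}=O(2^k)$. This gives the linear-in-$q$ bound $Cq$.

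\textbf{Main obstacle.} The delicate step is the exact bookkeeping in the Cotlar identity for the discrete, half-shifted kernel: the mismatch between the lattices $\mathbb Z$ and $\mathbb Z+\tfrac12$, and the error terms. I would first verify the identity symbolically on the full lattice and show that every leftover term is convolution with a kernel of size $O(1/(1+n^2))$. Such kernels are bounded on every $\ell^q$ by Young's inequality with an absolute constant, which keeps the recursion at $A_{2q}\le 2A_q+O(1)$.
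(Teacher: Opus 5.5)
Your outline is the paper's proof: the $q=2$ bound with constant $\pi/2$ from a bounded Fourier symbol, a Cotlar-type squaring identity, H\"older, and a dyadic recurrence giving $O(q)$. The half-shift step you defer does close, but not through a relation like $\mathcal H=\tilde K\circ\tilde K$, which is false. On $\ell^2(\mathbb Z)$ the symbol $i\pi e^{i\theta/2}$ ($0<\theta<2\pi$) of $\tilde K$ squares to $-\pi^2e^{i\theta}$, so $\tilde K^2$ is $-\pi^2$ times the unit shift.

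What works is your remark about rapidly decaying difference kernels. Your partial fractions give exactly $(\tilde Kx)^2=\tilde K^{(2)}(x^2)+2\tilde K(x\,\mathcal Hx)$. Here $\tilde K^{(2)}$ is convolution with $(n-\tfrac12)^{-2}$, of total mass $\pi^2$, so the diagonal term is a convolution of $x^2$, not $c\,x^2$. The kernel $\tfrac1{n-1/2}-\tfrac1n\mathbf 1_{n\ne0}$ of $\tilde K-\mathcal H$ has $\ell^1$ mass exactly $4$. Hence $\|\mathcal Hx\|_{2q}\le\|\tilde Kx\|_{2q}+4\|x\|_{2q}$. H\"older then gives, for each fixed $x$, a quadratic inequality in $\|\tilde Kx\|_{2q}$ whose error term is $(\pi^2+8A_q)\|x\|_{2q}^2$, not the $O(1)\|x\|_{2q}^2$ you wrote. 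This still yields $A_{2q}\le A_q+\sqrt{A_q^2+8A_q+\pi^2}\le 2A_q+4+\pi$. So your argument is complete once this is written out, and the per-$x$ form needs no a priori finiteness.

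The paper avoids the unshifted kernel altogether. It uses the exact entrywise identity $K_{ts}K_{tu}=\tfrac12(K_{ts}-K_{tu})(K_{su}-K_{us})+K_{ts}K_{tu}/(1-4(s-u)^2)$, that is, $(Kx)^2=K\bigl(x(K-K^{\mathsf T})x\bigr)+R(x)$. The inner operator is the antisymmetrization $K-K^{\mathsf T}$, whose kernel $4n/(4n^2-1)$ differs from $1/n$ by $O(|n|^{-3})$. The remainder satisfies $\|R(x)\|_q\le\tfrac{\pi^2}{2}\|x\|_{2q}^2$, independently of $c_q=\|K\|_{q\to q}$.

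The price is that $\|K^{\mathsf T}x\|_{2q}$ enters, so one must pass to operator norms. This uses $\|K^{\mathsf T}\|_{q\to q}=\|K\|_{q\to q}$ (reverse the coordinates) and $c_{2q}<\infty$, both automatic for the finite matrix $K_T$; this is why the paper never passes to $\mathbb Z$. The gain is the clean recurrence $c_{2q}\le c_q+\sqrt{c_q^2+\pi^2/2}$ and the sharper constant $c_q\le\frac{\pi}{\sqrt2}\cot\frac{\pi}{2q}\le\sqrt2\,q$.
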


\paragraph{Human acknowledgements.}
The authors thank Jon Schneider and Yuval Dagan for many helpful discussions
throughout this work.

\paragraph{AI acknowledgements.}
The authors utilized Gemini, ChatGPT, and Claude to assist with proving
various lemmas and the creation of this manuscript. However, despite the
simplicity of the main algorithm, finding such an algorithm was outside of
the one-shot capabilities of the current best commercially available
models. Here are two experiments demonstrating that GPT-6 Astra Ultra and
Claude Fable 5.1 Max are unable to resolve even a simplified version of the
problem (orthant calibration) given a massive in-depth prompt detailing
all the relevant past work and algorithmic ideas known.

\begin{itemize}
\item Prompt: \url{https://maxkfish.com/wp-content/uploads/2026/10/prompt.txt}
\item GPT: \url{https://chatgpt.com/share/6ac4672a-5968-83e8-8b96-25e4ad40b3ab}
\item Fable: \url{https://claude.ai/share/2ec24e46-ee20-4cbd-b682-1fee29989d2c}
\end{itemize}

A more detailed account of AI usage will be provided in a future version
of the manuscript. We will discuss the algorithmic insights we had that
helped us steer our AI usage over the course of 2 years of work.

\section{A general matrix formulation}\label{sec:matrices}

We now present a family of forecasting algorithms that includes
our main algorithm (Algorithm~\ref{alg:harmonic}) as a special case. Each algorithm is
specified by matrices $P,Q\in\R^{T\times T}$, chosen before observing
any outcomes. The matrix $P$ is lower triangular and row stochastic.
The matrix $Q$ is strictly lower triangular and row stochastic
(except for the first row, which is all zero). That is,
\begin{equation}\label{eq:matrix-decomposition}
 \begin{gathered}
 P_{ts},Q_{ts}\ge0,\qquad
 P_{ts}=0\quad(s>t),\qquad Q_{ts}=0\quad(s\ge t),\\
 \sum_{s=0}^{t}P_{ts}=1\quad(0\le t<T),\qquad
 \sum_{s=0}^{t-1}Q_{ts}=1\quad(1\le t<T).
 \end{gathered}
\end{equation}
Our generalized algorithm is simply
\begin{equation}\label{eq:general-matrix-algorithm}
 \boxed{\qquad
 p_t=\sum_{s\le t}P_{ts}y_s\quad(t\ge0),\qquad
 \mu_t=\sum_{s<t}Q_{ts}\delta_{p_s}\quad(t\ge1).
 \qquad}
\end{equation}
We choose $\mu_0=\delta_{p_{\mathrm{init}}}$ for an arbitrary
$p_{\mathrm{init}}\in\cY$.
Each stored forecast $p_t$ is a convex combination of
$y_0,\ldots,y_t$, so it belongs to $\cY$.
For $t\ge1$, $\mu_t$ is a probability distribution supported on
$p_0,\ldots,p_{t-1}$ and therefore depends only on
$y_0,\ldots,y_{t-1}$. Thus the algorithm is feasible and causal.

We now show that this algorithm has a calibration guarantee in terms
of the matrix objective from the introduction. To state the guarantee,
define
\[
 H=Q-P^{\mathsf T}.
\]
The constraints on $P,Q$ in \eqref{eq:matrix-decomposition} can equivalently
be written as constraints on $H$. Define
\begin{equation}\label{eq:admissible-matrices}
 \widehat{\mathcal H}_T=\left\{H\in\R^{T\times T}:\;
 \begin{array}{ll}
 H_{ts}\ge0 &(s<t),\\[3pt]
 H_{ts}\le0 &(t\le s),\\[3pt]
 \displaystyle \sum_{s=0}^{t-1}H_{ts}=1 &(1\le t<T),\\[6pt]
 \displaystyle -\sum_{t=0}^{s}H_{ts}=1 &(0\le s<T)
 \end{array}\right\}.
\end{equation}
Indeed, given $H\in\widehat{\mathcal H}_T$, we can recover $P$ and $Q$, defining a valid instance
of \eqref{eq:general-matrix-algorithm}. We have the following guarantee.

\begin{theorem}\label{thm:matrix-calibration}
For every $H\in\widehat{\mathcal H}_T$, the algorithm
\eqref{eq:general-matrix-algorithm} parameterized by $H$ satisfies
\begin{equation}\label{eq:matrix-calibration-bound}
 \CAL_{\cL,T}
 =O\!\left(\Gamma_{\widetilde\cY,\cL^\circ}^{(T)}(H)\right)
\end{equation}
for every outcome sequence $y_0,\ldots,y_{T-1}\in\cY$.
\end{theorem}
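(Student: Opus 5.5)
The plan is to reduce the calibration error to a single bilinear expression in $H$, and then recover $\Gamma_{\widetilde\cY,\cL^\circ}^{(T)}(H)$ by choosing the vectors $a_t,b_s$ appropriately. First I would drop the merging of coincident forecast points. For $t\ge1$ the mass that $\mu_t$ places on $p_s$ is $Q_{ts}$, and round $0$ puts mass $1$ on $p_{\mathrm{init}}$. By the triangle inequality for $\|\cdot\|_{\cL}$, merging terms that share the same point can only decrease the sum, so
\[
 \CAL_{\cL,T}\le \|y_0-p_{\mathrm{init}}\|_{\cL}+\sum_{s=0}^{T-1}\Bigl\|\sum_{t>s}Q_{ts}(y_t-p_s)\Bigr\|_{\cL}.
\]
Next I would center. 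Write $\tilde y_t=y_t-c_{\cY}\in\widetilde\cY$ and $\tilde p_s=p_s-c_{\cY}=\sum_{r\le s}P_{sr}\tilde y_r\in\widetilde\cY$, which uses that $P$ is row stochastic. Differences are unchanged by centering, so $y_t-p_s=\tilde y_t-\tilde p_s$.

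The key algebraic step uses the column sums $w_s=\sum_{t>s}Q_{ts}$ of $Q$. I would write
\[
 \sum_{t}Q_{ts}(\tilde y_t-\tilde p_s)=\Bigl(\sum_t Q_{ts}\tilde y_t-\tilde p_s\Bigr)+(1-w_s)\tilde p_s .
\]
The first bracket equals $\sum_t H_{ts}\tilde y_t$, because $\tilde p_s=\sum_t P_{st}\tilde y_t=\sum_t (P^{\mathsf T})_{ts}\tilde y_t$. Let $R=\sup_{a\in\widetilde\cY}\|a\|_{\cL}$. For the first bracket I would use the duality $\|v\|_{\cL}=\sup_{b\in\cL^\circ}\langle v,b\rangle$, since $\cL$ is a symmetric norm ball and its polar is the dual ball, and choose a maximizer $b_s\in\cL^\circ$ separately for each $s$. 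Summing over $s$ then gives
\[
 \sum_s\Bigl\|\sum_t H_{ts}\tilde y_t\Bigr\|_{\cL}=\sum_{t,s}H_{ts}\langle \tilde y_t,b_s\rangle\le \Gamma_{\widetilde\cY,\cL^\circ}^{(T)}(H),
\]
with $a_t=\tilde y_t$. Hence
\[
 \CAL_{\cL,T}\le \Gamma^{(T)}_{\widetilde\cY,\cL^\circ}(H)+R\sum_s|1-w_s|+2R .
\]

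The remaining step, which I expect to be the main obstacle, is to show that the error terms $R\sum_s|1-w_s|$ and $2R$ are themselves $O(\Gamma)$. The constraints on $H$ fix the row sums of $Q$ but not its column sums, so this needs an argument. My approach is to test $\Gamma$ on constant sequences. Take $a_t=a$ for all $t$, with $a\in\widetilde\cY$ attaining $R$, and take $b\in\cL^\circ$ with $\langle a,b\rangle=R$. The column sum of $H$ in column $s$ is $w_s-1$, because column $s$ of $P^{\mathsf T}$ is row $s$ of $P$, which sums to $1$. Set $b_s=\operatorname{sign}(w_s-1)\,b$; this is allowed because $\cL^\circ=-\cL^\circ$. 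This choice yields
\[
 \Gamma^{(T)}_{\widetilde\cY,\cL^\circ}(H)\ge R\sum_s|w_s-1| .
\]

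For the additive $2R$ I would need a lower bound $\Gamma\ge cR$. One option is the same test with a single nonzero column. Alternatively, one could use $\sum_s w_s=T-1$ together with the fact that the final column has $w_{T-1}=0$, which gives $\sum_s|w_s-1|\ge1$ for $T\ge1$. Combining the two displays gives $\CAL_{\cL,T}\le 4\,\Gamma^{(T)}_{\widetilde\cY,\cL^\circ}(H)$, which is the claimed bound.
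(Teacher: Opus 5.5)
Your proof is correct and follows the paper's argument essentially step for step. It uses the same splitting into per-stored-forecast buckets, the same centering and rewriting through $H=Q-P^{\mathsf T}$ (your term $(1-w_s)\widetilde p_s$ is exactly the paper's $(\sum_t H_{ts})\widetilde p_s$ up to sign), and the same constant-sequence test giving $R\sum_s|w_s-1|\le\Gamma_{\widetilde\cY,\cL^\circ}^{(T)}(H)$. The only difference is the initial-round term: the paper bounds it directly by $\Gamma$ via the test $a_0=y_0-c_{\cY}$, $a_1=p_{\mathrm{init}}-c_{\cY}$ against $H_{00}=-1$, $H_{10}=1$ (constant $3$), whereas you bound it by $2R$ and use $w_{T-1}=0$ to get $R\le\Gamma$ (constant $4$), which also covers $T=1$ without a separate case.
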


\begin{proof}
\textbf{Keep a separate bucket for each stored forecast.}
On each round $t\ge1$, Algorithm~\eqref{eq:general-matrix-algorithm}
announces a forecast distribution $\mu_t$ supported on the stored forecasts
$p_0,\ldots,p_{t-1}$, assigning weight $Q_{ts}$ to $p_s$.
Thus, for each forecast value $p$ and round $t\ge1$, the weight placed on
$p$ is
\[
 \mu_t(p)=\sum_{s:p_s=p}Q_{ts}.
\]

Two stored forecasts $p_s$ and $p_{s'}$ may happen to be the same
vector. Mass placed on these forecasts would have one shared contribution to the total calibration error,
which would only reduce the error by the triangle inequality.
We upper bound by treating all stored forecasts and $p_{\mathrm{init}}$ as
distinct.

\[
 \begin{aligned}
 \CAL_{\cL,T}
 &=\sum_{p\in\cY}\left\|\sum_{t=0}^{T-1}\mu_t(p)(y_t-p)\right\|_{\cL}\\
 &\le\|y_0-p_{\mathrm{init}}\|_{\cL}
       +\sum_{p\in\cY}\left\|\sum_{t=1}^{T-1}
                    \sum_{s:p_s=p}Q_{ts}(y_t-p_s)\right\|_{\cL}\\
 &\le\|y_0-p_{\mathrm{init}}\|_{\cL}
       +\sum_s\left\|\sum_{t=1}^{T-1}Q_{ts}(y_t-p_s)\right\|_{\cL}.
 \end{aligned}
\]
Note that $p_s=\sum_{t}P_{st}y_t$ and $\sum_{t}P_{st}=1$.
Therefore, $\sum_{t}P_{st}(y_t-p_s)=0$, and
\begin{equation}\label{eq:matrix-residual-identity}
 \left\|\sum_t Q_{ts}(y_t-p_s)\right\|_{\cL}
 =\left\|\sum_t(Q_{ts}-P_{st})(y_t-p_s)\right\|_{\cL}
 =\left\|\sum_t H_{ts}(y_t-p_s)\right\|_{\cL}.
\end{equation}
Center at $c_{\cY}$ and write
$\widetilde y_t=y_t-c_{\cY}$ and
$\widetilde p_s=p_s-c_{\cY}$.
Both vectors belong to $\widetilde\cY$.
Then
\[
 \begin{aligned}
 \left\|\sum_t H_{ts}(y_t-p_s)\right\|_{\cL}
 &=\left\|\sum_t H_{ts}(\widetilde y_t-\widetilde p_s)\right\|_{\cL}\\
 &\le\left\|\sum_t H_{ts}\widetilde y_t\right\|_{\cL}
       +\left\|\sum_t H_{ts}\widetilde p_s\right\|_{\cL}.
 \end{aligned}
\]
We bound the sum over $s$ of each term by
$\Gamma_{\widetilde\cY,\cL^\circ}^{(T)}(H)$. Recall the definition of
$\Gamma_{\widetilde\cY,\cL^\circ}^{(T)}(H)$ from \eqref{eq:matrix-gamma}:
\[
 \Gamma_{\widetilde\cY,\cL^\circ}^{(T)}(H)
 =\sup_{\substack{a_t\in\widetilde\cY\\b_s\in\cL^\circ}}
       \sum_{s,t}H_{ts}\langle a_t,b_s\rangle
 =\sup_{a_t\in\widetilde\cY}\sum_s
       \left\|\sum_{t}H_{ts}a_t\right\|_{\cL}.
\]
The first term is therefore bounded directly
\[
 \sum_s\left\|\sum_{t}H_{ts}\widetilde y_t\right\|_{\cL}
 \le\Gamma_{\widetilde\cY,\cL^\circ}^{(T)}(H).
\]
For the second term, take every $a_t$ to be the same vector
$a\in\widetilde\cY$ maximizing $\|a\|_{\cL}$. The same definition gives
\[
 \Gamma_{\widetilde\cY,\cL^\circ}^{(T)}(H)
 \ge\sum_s\left\|\left(\sum_{t}H_{ts}\right)a\right\|_{\cL}
 =\max_{y\in\widetilde\cY}\|y\|_{\cL}
       \sum_s\left|\sum_{t}H_{ts}\right|.
\]
Consequently,
\[
 \begin{aligned}
 \sum_s\left\|\sum_{t}H_{ts}\widetilde p_s\right\|_{\cL}
 &=\sum_s\left|\sum_{t}H_{ts}\right|
                  \|\widetilde p_s\|_{\cL}\\
 &\le\Gamma_{\widetilde\cY,\cL^\circ}^{(T)}(H).
 \end{aligned}
\]
Combining the two bounds with the initial-round contribution yields
\[
 \CAL_{\cL,T}
 \le\|y_0-p_{\mathrm{init}}\|_{\cL}
       +2\Gamma_{\widetilde\cY,\cL^\circ}^{(T)}(H).
\]
For $T\ge2$, we have $H_{00}=-1$ and $H_{10}=1$.
Taking $a_0=y_0-c_{\cY}$, $a_1=p_{\mathrm{init}}-c_{\cY}$,
and $a_t=0\in\widetilde\cY$ for $t\ge2$ therefore gives
$\Gamma_{\widetilde\cY,\cL^\circ}^{(T)}(H)
\ge\|y_0-p_{\mathrm{init}}\|_{\cL}$.
Thus $\CAL_{\cL,T}\le3\Gamma_{\widetilde\cY,\cL^\circ}^{(T)}(H)$.
For $T=1$, the bound holds with constant $2$ by the triangle inequality.
\end{proof}

In the next section, we show that setting $H=\widehat K_T$, a row-column
normalized shifted discrete Hilbert transform matrix, attains the desired
sublinear rates for $\Gamma_{\widetilde\cY,\cL^\circ}^{(T)}(H)$. In the section after that, we
demonstrate that $\widehat K_T$ minimizes $\Gamma_{\widetilde\cY,\cL^\circ}^{(T)}(H)$ over
$H\in\widehat{\mathcal H}_T$ up to an absolute constant factor, for every
forecast set $\cY$, norm ball $\cL$, and horizon $T\ge2$.
Thus, fixing the harmonic weights in our main algorithm
(Algorithm~\ref{alg:harmonic}) gives, up to an absolute constant factor,
as good a calibration bound as optimizing the weights in the general
algorithm \eqref{eq:general-matrix-algorithm}.

\section{Harmonic growth and geometric calibration rates}
\label{sec:harmonic}

We now state our main algorithm formally.
Let $h_t=\sum_{j=0}^{t-1}1/(2j+1)$, and define
\begin{equation}\label{eq:harmonic-algorithm-kernel}
 (\widehat K_T)_{ts}=
 \begin{cases}
  1/[h_t(2(t-s)-1)],&s<t,\\
  -1/[h_{s+1}(2(s-t)+1)],&t\le s,
 \end{cases}
 \qquad 0\le t,s<T.
\end{equation}
Each positive lower row sums to $1$, and each negative upper column
sums to $-1$, so $\widehat K_T\in\widehat{\mathcal H}_T$.
Our main algorithm is exactly algorithm~\eqref{eq:general-matrix-algorithm}
with $H=\widehat K_T$.

\begin{algorithm}[H]
\caption{Harmonic averaging and replay}\label{alg:harmonic}
\begin{algorithmic}[1]
\State Choose $p_{\mathrm{init}}\in\cY$; announce $\mu_0=\delta_{p_{\mathrm{init}}}$; observe $y_0$.
\State Store $p_0=y_0$ and set $h=1$.
\For{$t=1,2,\ldots$}
  \State Announce $\mu_t\gets\sum_{s=0}^{t-1}\delta_{p_s}/[h(2(t-s)-1)]$.
  \Statex \hspace{\algorithmicindent}
     (Aggregate masses of equal point values if needed.)
  \State Observe $y_t\in\cY$ and append it to the outcome list.
  \State $h\gets h+1/(2t+1)$.
  \State Store $p_t\gets\sum_{s=0}^{t}y_s/[h(2(t-s)+1)]$.
\EndFor
\end{algorithmic}
\end{algorithm}

The algorithm requires neither $\cL$ nor $T$ as input.

We compare the algorithm matrix $\widehat K_T$ with the unnormalized shifted discrete
Hilbert transform matrix $K_T$ introduced in
\eqref{eq:harmonic-kernel}. Recall that
\[
 (K_T)_{ts}=\frac1{2(t-s)-1}\qquad(0\le t,s<T).
\]
In Appendix~\ref{app:harmonic-normalization}, we prove the following lemma
by a simple accounting for the normalization terms.

\begin{lemma}[Comparison of harmonic normalizations]
\label{lem:harmonic-kernel-comparison}
For nonempty compact convex sets $\mathcal A,\mathcal B\subseteq\R^d$ with
$\mathcal B=-\mathcal B$, and every $T\ge1$,
\begin{equation}\label{eq:harmonic-kernel-comparison}
 \frac13\,\Gamma_{\mathcal A,\mathcal B}^{(T)}(K_T)
 \le h_T\Gamma_{\mathcal A,\mathcal B}^{(T)}(\widehat K_T)
 \le3\,\Gamma_{\mathcal A,\mathcal B}^{(T)}(K_T).
\end{equation}
\end{lemma}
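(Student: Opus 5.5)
The plan is to compare $h_T\widehat K_T$ with $K_T$ entry by entry, control their difference by a crude bound, and then show that this crude bound is itself dominated by $\Gamma^{(T)}_{\mathcal A,\mathcal B}(K_T)$. We will use only sublinearity of $H\mapsto\Gamma^{(T)}_{\mathcal A,\mathcal B}(H)$, which holds because it is a supremum of linear functionals of $H$.

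\textbf{Step 1: the entrywise difference.} Comparing \eqref{eq:harmonic-algorithm-kernel} with \eqref{eq:harmonic-kernel}, the upper part agrees in shape: for $t\le s$, $(K_T)_{ts}=-1/(2(s-t)+1)$. Hence
\[
 (h_T\widehat K_T)_{ts}=\frac{h_T}{h_t}(K_T)_{ts}\ (s<t),\qquad
 (h_T\widehat K_T)_{ts}=\frac{h_T}{h_{s+1}}(K_T)_{ts}\ (t\le s).
\]
Set $D=h_T\widehat K_T-K_T$. The lower entries of $D$ are $(h_T/h_t-1)(K_T)_{ts}$ and the upper entries are $(h_T/h_{s+1}-1)(K_T)_{ts}$. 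All scaling factors are at least $1$, but they are unbounded when $t$ or $s$ is small. So no entrywise constant comparison is possible, and the argument has to rely on total mass instead.

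\textbf{Step 2: bounding the difference.} Let $M=\sup_{a\in\mathcal A,b\in\mathcal B}\langle a,b\rangle$. Because $\mathcal B=-\mathcal B$, we have $M\ge0$ and $|\langle a,b\rangle|\le M$ for all $a\in\mathcal A$, $b\in\mathcal B$. Therefore $\Gamma(D)\le M\sum_{t,s}|D_{ts}|$.

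For the lower part, the absolute entries in row $t$ sum to $h_t$, so that row contributes $(h_T/h_t-1)h_t=h_T-h_t$. The same computation applies to the upper part column by column, giving $h_T-h_{s+1}$ for column $s$. Using $\sum_{t<T}(h_T-h_t)=\sum_{j<T}j/(2j+1)\le T/2$, both parts together give $\Gamma(D)\le TM$.

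By sublinearity,
\[
 h_T\Gamma(\widehat K_T)\le\Gamma(K_T)+TM
 \quad\text{and}\quad
 \Gamma(K_T)\le h_T\Gamma(\widehat K_T)+TM.
\]

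\textbf{Step 3: a linear lower bound on $\Gamma(K_T)$.} This is the main step. Fix $a\in\mathcal A$ and $b\in\mathcal B$ with $\langle a,b\rangle=M$. Put $a_t=a$ for all $t$, and choose $b_s=\sigma_s b$ with signs $\sigma_s\in\{\pm1\}$, which is allowed since $\mathcal B=-\mathcal B$. Choosing $\sigma_s$ to match the sign of the $s$-th column sum of $K_T$ gives
\[
 \Gamma(K_T)\ge M\sum_s\bigl|h_{T-1-s}-h_{s+1}\bigr|.
\]
Since $h_n=\tfrac12\log n+O(1)$, this sum behaves like $\tfrac{T}{2}\int_0^1|\log\tfrac{1-x}{x}|\,dx=T\log2\approx0.69\,T$.

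To close the upper inequality we need $TM\le2\Gamma(K_T)$, which gives $h_T\Gamma(\widehat K_T)\le3\Gamma(K_T)$. This requires the column-sum bound $\sum_s|h_{T-1-s}-h_{s+1}|\ge T/2$ uniformly in $T$.

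For the lower inequality we need $TM\le\tfrac23\Gamma(K_T)$. Then $\Gamma(K_T)\le h_T\Gamma(\widehat K_T)+\tfrac23\Gamma(K_T)$, which rearranges to $\tfrac13\Gamma(K_T)\le h_T\Gamma(\widehat K_T)$. The asymptotic constant $\log 2$ is short of the required $3/2$. If this route fails, I will instead lower-bound $h_T\Gamma(\widehat K_T)$ directly by the same sign-choice trick applied to the columns of $h_T\widehat K_T$, whose upper column sums are exactly $-h_T$. I will also sharpen the Step 2 bound: exploiting sign cancellation between $D$ and $K_T$ should give $\Gamma(D)\le\tfrac12 TM$ rather than $TM$.

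The main obstacle is pinning down these absolute constants uniformly for all $T$, especially small $T$. I would handle $T\le T_0$ by direct inspection; for instance, $\widehat K_1=K_1=(-1)$ and $h_1=1$. For larger $T$, I would use explicit bounds $\tfrac12\log(2n)\le h_n\le1+\tfrac12\log n$ to control the column-sum estimate.
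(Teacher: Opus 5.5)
Your skeleton matches the paper's. You bound the correction $D=h_T\widehat K_T-K_T$ by $M$ times its mass; the paper gets exactly $M(T-h_T)$, and your $TM$ suffices. You then absorb it using two lower bounds, $\Gamma(K_T)\ge TM/2$ and $h_T\Gamma(\widehat K_T)\ge TM/2$. The gap is that you prove neither lower bound, and they are the substance of the lemma.

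Your first route to the left inequality $\frac13\Gamma(K_T)\le h_T\Gamma(\widehat K_T)$ cannot work. It needs $\Gamma(K_T)\ge\frac32TM$, which is false. Take $\mathcal A=\{a\}$ and $\mathcal B=\{\lambda b:|\lambda|\le1\}$. Then $\Gamma(K_T)=M\sum_s|h_{T-1-s}-h_{s+1}|$ exactly. This equals $\frac43M<3M$ at $T=2$ and grows like $(\log 2)TM$ in general. So even your hoped-for sharpening $\Gamma(D)\le\frac12TM$, which would require $\Gamma(K_T)\ge\frac34TM$, does not rescue it.

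The fallback of lower-bounding $h_T\Gamma(\widehat K_T)$ directly is the right move, but you give no argument for it. The column sums of $h_T\widehat K_T$ are $-h_T+\sum_{t>s}h_T/(h_t(2(t-s)-1))$, which have no closed form. ``Asymptotics plus inspection of small $T$'' is not a proof uniform in $T$ unless you make every constant and the threshold $T_0$ explicit. The same applies to your unproved inequality $\sum_s|h_{T-1-s}-h_{s+1}|\ge T/2$, which you need for the right-hand inequality.

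The missing idea is to trade the optimal signs for a test vector whose objective can be summed exactly. For $T\ge2$, take $a_t=a$ and the ramp $b_s=\lambda_s b$ with $\lambda_s=(T-1-2s)/(T-1)\in[-1,1]$. This is admissible by convexity and symmetry of $\mathcal B$. Since $\sum_s|c_s|\ge\sum_s\lambda_s c_s$ for any column sums $c_s$, it also proves your sign-choice inequalities.

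For $\widehat K_T$ the computation is clean. Since $\sum_s\lambda_s=0$ and every upper column sums to $-1$, the upper part contributes nothing. Each lower row $t$ is a probability vector under which $t-s$ has mean $t/(2h_t)+\frac12$. Using $h_t\le h_T$, this gives
\[
 h_T\Gamma(\widehat K_T)\ge\frac{Mh_T}{T-1}\sum_{t=1}^{T-1}\frac{t}{h_t}\ge\frac{TM}{2}.
\]
For $K_T$, an exact summation using $\sum_{t,s}(K_T)_{ts}=-h_T$ gives $\Gamma(K_T)\ge M(T^2-h_T)/(2(T-1))\ge TM/2$, since $h_T\le T$.

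With these two bounds, your Step 2 closes both sides as you intended: $h_T\Gamma(\widehat K_T)\le\Gamma(K_T)+TM\le3\Gamma(K_T)$ and $\Gamma(K_T)\le h_T\Gamma(\widehat K_T)+TM\le3h_T\Gamma(\widehat K_T)$. The case $T=1$ is trivial.
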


Combining this lemma with
Theorem~\ref{thm:matrix-calibration} gives our main theorem.
Recall the definition of $\gamma$:
\[
 \gamma(\cY,\cL)=\sup_{T\ge1}
 \frac{\Gamma_{\widetilde\cY,\cL^\circ}^{(T)}(K_T)}T.
\]
In Theorem~\ref{thm:john}, we show that
$\Gamma_{\widetilde\cY,\cL^\circ}^{(T)}(K_T)=O_{\cY,\cL}(T)$ for all $\cY,\cL$.
This shows that $\gamma(\cY,\cL)$ is finite and well-defined.

\begin{geometricratetheorem}
Let $\cL$ be any norm unit ball, let $\cY$ be a nonempty compact convex
forecast set.
For an absolute constant $C$, Algorithm~\ref{alg:harmonic}
achieves $\varepsilon$-calibration for every
\[
 T\ge\exp\!\left(\frac{C\,\gamma(\cY,\cL)}{\varepsilon}\right).
\]
\end{geometricratetheorem}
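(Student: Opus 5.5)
The plan is to chain Theorem~\ref{thm:matrix-calibration}, Lemma~\ref{lem:harmonic-kernel-comparison}, and the definition of $\gamma(\cY,\cL)$. Algorithm~\ref{alg:harmonic} is exactly the general algorithm \eqref{eq:general-matrix-algorithm} with $H=\widehat K_T\in\widehat{\mathcal H}_T$. The proof of Theorem~\ref{thm:matrix-calibration} gives the explicit bound
\[
 \CAL_{\cL,T}\le 3\,\Gamma_{\widetilde\cY,\cL^\circ}^{(T)}(\widehat K_T)
\]
for every transcript.

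Next apply Lemma~\ref{lem:harmonic-kernel-comparison} with $\mathcal A=\widetilde\cY$ and $\mathcal B=\cL^\circ$. Both sets are nonempty compact convex. Moreover $\cL^\circ$ is symmetric because $\cL$ is a norm ball, and $\cL^\circ$ is bounded because $\cL$ contains a neighborhood of $0$. The lemma gives
\[
 \Gamma_{\widetilde\cY,\cL^\circ}^{(T)}(\widehat K_T)\le \frac{3}{h_T}\,\Gamma_{\widetilde\cY,\cL^\circ}^{(T)}(K_T).
\]
By the definition \eqref{eq:matrix-geometric-coefficient}, $\Gamma_{\widetilde\cY,\cL^\circ}^{(T)}(K_T)\le \gamma(\cY,\cL)\,T$. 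Combining the three inequalities,
\[
 \frac{\CAL_{\cL,T}}{T}\le \frac{9\,\gamma(\cY,\cL)}{h_T}.
\]
This step uses that $\gamma$ is finite, which the paper promises in Theorem~\ref{thm:john}; I will cite it.

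It remains to lower bound $h_T=\sum_{j<T}1/(2j+1)$. Comparing with an integral,
\[
 h_T\ge \int_0^{T}\frac{dx}{2x+1}=\tfrac12\ln(2T+1)\ge \tfrac12\ln T.
\]
Hence $\CAL_{\cL,T}/T\le 18\,\gamma(\cY,\cL)/\ln T$. This is at most $\varepsilon$ as soon as $\ln T\ge 18\gamma/\varepsilon$, that is, for every $T\ge \exp(18\gamma(\cY,\cL)/\varepsilon)$, so one may take $C=18$.

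The anytime property deserves a remark. The algorithm does not depend on $T$, and the first $T$ rounds of the infinite run coincide with the $T$-horizon instance parameterized by $\widehat K_T$. Indeed, the entries of $\widehat K_T$ involve only $h_t$ and $h_{s+1}$ for $t,s<T$. So the bound holds simultaneously for every $T$ beyond the threshold.

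I expect no real obstacle here, since the substantive work lives in the cited results. The only point needing care is checking the hypotheses of Lemma~\ref{lem:harmonic-kernel-comparison} for $\mathcal B=\cL^\circ$, namely compactness and symmetry. There is also the degenerate case $\gamma=0$, which occurs only when $\widetilde\cY=\{0\}$, i.e.\ $\cY$ is a single point. In that case every forecast equals the outcome and the calibration error is $0$ for all $T$, so the claim holds trivially.
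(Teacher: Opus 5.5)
Your proof is correct and follows the paper's argument exactly. You chain $\CAL_{\cL,T}\le 3\,\Gamma_{\widetilde\cY,\cL^\circ}^{(T)}(\widehat K_T)$ from the proof of Theorem~\ref{thm:matrix-calibration} with Lemma~\ref{lem:harmonic-kernel-comparison}, the definition of $\gamma$, and $h_T\ge\frac12\log T$ to get $\CAL_{\cL,T}/T\le 18\,\gamma(\cY,\cL)/\log T$, the same constant $C=18$ as the paper; your remarks on the degenerate case $\gamma=0$ and on the prefix-consistency of $\widehat K_T$ for the anytime property are valid details the paper leaves implicit.
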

\begin{proof}[Proof of Theorem~\ref{thm:general-geometric-rate}]
Theorem~\ref{thm:matrix-calibration},
Lemma~\ref{lem:harmonic-kernel-comparison}, and
$h_T\ge\frac12\log T$ give, for every $T\ge2$,
\begin{equation}\label{eq:harmonic-comparison-calibration}
 \CAL_{\cL,T}
 \le3\Gamma_{\widetilde\cY,\cL^\circ}^{(T)}(\widehat K_T)
 \le\frac{18}{\log T}\Gamma_{\widetilde\cY,\cL^\circ}^{(T)}(K_T)
\le\frac{18\,\gamma(\cY,\cL)T}{\log T}.
\end{equation}
Consequently, for every $T\ge\exp(18\,\gamma(\cY,\cL)/\varepsilon)$,
\[
 \varepsilon\ge\frac{18\,\gamma(\cY,\cL)}{\log T}
 \ge\frac{\CAL_{\cL,T}}T
\]
as desired.
\end{proof}

We now establish the following bounds, with additional examples proved in
Appendix~\ref{app:harmonic-refinements}:

\begin{table}[htbp]
\centering
\small
\renewcommand{\arraystretch}{1.2}
\begin{tabular}{lccc}
\toprule
Geometry & $\cY$ & $\cL$ & Bound on $\gamma(\cY,\cL)$ \\
\midrule
Euclidean ball & $B_2^d$ & $B_2^d$ & $O(1)$ \\
Multi-event & $[0,1]^d$ & $B_\infty^d$ & $O(\log(2d))$ \\
Multi-class & $\Delta_d$ & $B_1^d$ & $O(\log(2d))$ \\
$\ell_p$ balls ($2\le p\le\infty$) & $B_p^d$ & $B_p^d$ & $O(\min\{p,\log(2d)\})$ \\
$\ell_p$ balls ($1\le p\le2$) & $B_p^d$ & $B_p^d$ & $O(\min\{1/(p-1),\log(2d)\})$ \\
Symmetric polytopes ($m$ constraints) & $\cL$ & $\cL$ & $O(\log(2m))$ \\
Product of Euclidean balls & $\prod_{j=1}^k B_2^{d_j}$ & $\prod_{j=1}^k B_2^{d_j}$ & $O(\log(2k))$ \\
Arbitrary norm & $\cY$ & $\cL$ & $O(\diam_{\cL}(\cY)\sqrt d)$ \\
\bottomrule
\end{tabular}
\caption{Bounds on $\gamma(\cY,\cL)$ for different geometries.}
\label{tab:geometric-gamma-bounds}
\end{table}

These bounds give dimension-independent calibration horizons for the
Euclidean ball and horizons polynomial in $d$ for multi-event and
multi-class forecasting, at every fixed accuracy.

\subsection{Higher moments of the finite Hilbert matrix}
\label{sec:harmonic-upper-bound}

The following lemma is our main tool for upper bounding $\gamma$.
For $x\in\R^T$ and $1\le q<\infty$, define
\[
 \|x\|_q=\left(\sum_{t=0}^{T-1}|x_t|^q\right)^{1/q}.
\]
\begin{lemma}[Finite Hilbert matrix moment bounds]
\label{lem:harmonic-finite-moments}
There is an absolute constant $C$ such that, for every $T\ge1$,
every $q=2^k$ with integer $k\ge1$, and every $x\in\R^T$,\footnote{\label{fn:hilbert-interpolation}Gemini
points out that Riesz--Thorin interpolation between successive powers
of two yields $\|K_Tx\|_q\le3q\,\|x\|_q$ for every real $q\ge2$.
We keep the elementary dyadic result in the main proof and indicate
later when the full extension is used.}
\begin{equation}\label{eq:harmonic-finite-moments}
 \|K_Tx\|_q\le Cq\,\|x\|_q.
\end{equation}
\end{lemma}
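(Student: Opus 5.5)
The plan is a discrete version of Cotlar's identity $(Hf)^2=f^2+2H(f\,Hf)$ for the Hilbert transform, used to run an induction from $q$ to $2q$.

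\textbf{Reduction to the whole line.} Extend $x$ by zero to $\ell^q(\mathbb Z)$ and let $K$ be the convolution operator on $\mathbb Z$ with kernel $k(n)=1/(2n-1)$. Then $K_Tx$ is the restriction of $Kx$ to $\{0,\dots,T-1\}$. So it suffices to prove $\|K\|_{\ell^q(\mathbb Z)\to\ell^q(\mathbb Z)}\le Cq$, with $C$ independent of $T$.

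The key observation is that $k(t-s)=\tfrac12\cdot\frac1{t-(s+\frac12)}$. Thus $K$ is the Hilbert kernel between the integer lattice and the half-integer lattice $\mathbb Z+\tfrac12$, and its transpose $K^{\mathsf T}$ goes the other way. Let $a_q$ denote the maximum of the $\ell^q$ operator norms of $K$ and $K^{\mathsf T}$; these are equal by symmetry under $n\mapsto 1-n$.

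\textbf{Base case $q=2$.} Use Fourier analysis. The symbol $\sum_n e^{in\theta}/(2n-1)$ is a bounded function; summing the series gives a function of modulus $\pi/2$. Hence $a_2\le\pi/2$.

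\textbf{Cotlar-type identity.} Expand
\[
 (Kx)_t^2=\sum_{s,u}x_sx_u\,k(t-s)k(t-u).
\]
\begin{itemize}
\item The diagonal $s=u$ contributes $\sum_s x_s^2k(t-s)^2$. Its kernel is summable, since $\sum_n(2n-1)^{-2}=\pi^2/4$, so this is bounded in $\ell^{q}$ by $\frac{\pi^2}{4}\|x^2\|_{q}$.
\item For $s\ne u$, apply the three-point identity
\[
 \frac1{(X-Y)(X-Z)}+\frac1{(Y-X)(Y-Z)}+\frac1{(Z-X)(Z-Y)}=0
\]
with $X=t$, $Y=s+\tfrac12$, $Z=u+\tfrac12$.
\end{itemize}
This rewrites the off-diagonal part as
\[
 2\sum_s x_s\,k(t-s)\sum_{u\ne s}\frac{x_u}{2(s-u)}
 \;-\;\Bigl(\text{the analogous term with the roles exchanged}\Bigr).
\]
In operator form this should read, schematically,
\[
 (Kx)^2=\mathcal D(x^2)+2K\bigl(x\cdot Jx\bigr),
\]
where $\mathcal D$ is the diagonal operator above and $J$ is the ordinary discrete Hilbert transform on $\mathbb Z$, with kernel $1/(s-u)$ off the diagonal. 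I would carry out this algebra carefully, because the half-integer shift is exactly what makes the identity exact. If a stray error term appears, it should have a summable kernel and can be absorbed into $\mathcal D$.

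I also need $J$ in the induction. One could either include $J$ in the maximum defining $a_q$, with the analogous identity for $J$ (the classical discrete Cotlar identity), or note that $J=2K^{\mathsf T}K$ plus a summable-kernel correction, which the same three-point identity also gives.

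\textbf{Induction step.} Take $\ell^{q}$ norms of $(Kx)^2$, noting that $\|(Kx)^2\|_q=\|Kx\|_{2q}^2$. By H\"older,
\[
 \|Kx\|_{2q}^2\le \tfrac{\pi^2}{4}\|x\|_{2q}^2+2a_q\|x\|_{2q}\,\|Jx\|_{2q}.
\]
Applying the same bound to $J$ and $K^{\mathsf T}$ and taking suprema over $x$ gives a quadratic inequality of the form
\[
 a_{2q}^2\le c_0+c_1a_qa_{2q}.
\]
Hence $a_{2q}\le c_1a_q+\sqrt{c_0}$. To get linear growth in $q$ rather than $q^{\log_2 c_1}$, the constant $c_1$ must be exactly $2$. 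With $c_1=2$, iterating $a_{2q}\le 2a_q+c$ yields $a_q\le Cq$, which is the statement.

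\textbf{Main obstacle.} The hardest part is getting the constant in front of $a_q$ to be exactly $2$, i.e.\ bounding $Jx$ by the same norm $a_q$ rather than by a constant multiple of it. This is where the precise form of the discrete identity, and the choice of which family of operators to induct on, matters. If this fails, I would fall back to the classical continuous Cotlar argument, transferred to the lattice by comparing the discrete kernel with its continuous analogue and absorbing the difference (which has summable kernel) into bounded error terms.
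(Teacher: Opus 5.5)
Your architecture is the paper's: a Fourier bound $\|K\|_{2\to2}\le\pi/2$, an exact Cotlar-type identity from a partial-fraction identity, H\"older, a quadratic inequality from $q$ to $2q$, and dyadic induction. The gap is in the step you yourself single out, which is mis-stated in two places.

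\emph{The coefficient.} For $s\neq u$ and all integers $t$ one has $K_{ts}-K_{tu}=2(s-u)K_{ts}K_{tu}$. Symmetrizing in $(s,u)$ gives
\[
 (Kx)^2=\mathcal D(x^2)+K\bigl(x\cdot Jx\bigr),\qquad (Jx)_s=\sum_{u\neq s}\frac{x_u}{s-u},
\]
with coefficient $1$, not $2$. The first term in your display is already the whole off-diagonal part, so nothing is subtracted. For example, with $x=e_0+e_1$ (standard basis vectors) and $t=0$, both sides equal $16/9=10/9+6/9$. With your factor $2$, bounding $J$ through $K$ and $K^{\mathsf T}$ would give $c_1=4$ and only $a_q\lesssim q^2$.

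\emph{The fallback.} The alternative ``$J=2K^{\mathsf T}K$ plus a summable-kernel correction'' is false. The symbol of $K$ on $\mathbb Z$ has constant modulus $\pi/2$, so $K^{\mathsf T}K=\frac{\pi^2}{4}I$ exactly; indeed $\sum_t k(t-s)k(t-u)=\frac{1}{2(s-u)}\sum_t\bigl(k(t-s)-k(t-u)\bigr)=0$ for $s\neq u$. But $J$ is not a summable perturbation of a multiple of the identity.

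\emph{How the step closes.} With the correct coefficient the obstacle disappears, because
\[
 K-K^{\mathsf T}=J+E,\qquad E_{su}=\frac{1}{(s-u)\bigl(4(s-u)^2-1\bigr)}\quad(s\neq u),\qquad E_{ss}=0,
\]
and $E$ has summable kernel. Hence $\|Jx\|_{2q}\le(2a_{2q}+O(1))\|x\|_{2q}$ and $a_{2q}^2\le\frac{\pi^2}{4}+a_q(2a_{2q}+O(1))$. This is $c_1=2$ and gives $a_q=O(q)$.

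This is what the paper does, packaged more economically. It uses $K-K^{\mathsf T}$ itself as the inner operator, $(Kx)^2=K\bigl(x\,(K-K^{\mathsf T})x\bigr)+R(x)$. Here $R(x)=\mathcal D(x^2)-K(x\cdot Ex)$ is written as $R_t(x)=\sum_{s,u}K_{ts}K_{tu}x_sx_u/(1-4(s-u)^2)$. It is bounded pointwise by $2\sum_sK_{ts}^2x_s^2$ using $\sum_j|1-4j^2|^{-1}=2$, so no operator norm enters the error term. The recurrence is then $c_{2q}^2\le2c_qc_{2q}+\pi^2/2$, and the cotangent induction gives $c_q\le\frac{\pi}{\sqrt2}\cot\frac{\pi}{2q}$.

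Your other option also works: carry $J$ in the family, using the classical identity $(Jx)^2=2J(x\cdot Jx)$ plus summable-kernel terms, together with $\|J\|_{2\to2}\le\pi$.

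Finally, on $\mathbb Z$ you cannot solve the quadratic inequality for $a_{2q}$ without first knowing $a_{2q}<\infty$. The paper avoids this by running the argument for the finite matrix $K_T$, where the pointwise identity is still exact and all operator norms are trivially finite.
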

\begin{proof}
Fix $T$ and write $K=K_T$. We first prove
$\|Kx\|_2\le(\pi/2)\|x\|_2$ using the Fourier maps $U,V$:
\[
 (Uy)(\theta)=\sum_{t=0}^{T-1}y_t e^{i(2t)\theta},\qquad
 (Vx)(\theta)=\sum_{s=0}^{T-1}x_s e^{i(2s+1)\theta},
 \qquad 0\le\theta\le\pi.
\]
Readers new to Fourier analysis may find Appendix~\ref{app:fourier-factorization}
helpful: it explains how these maps give an analogue of a singular
value decomposition of $K$.

For every integer $k$,
\[
 \frac1\pi\int_0^\pi e^{ik\theta}\,d\theta
 =\begin{cases}
  1,&k=0,\\
  \dfrac{2i}{\pi k},&k\text{ odd},\\
  0,&k\ne0\text{ even}.
 \end{cases}
\]
Thus these maps are \emph{isometries}: they preserve the $2$-norm
when we average over $[0,\pi]$. Indeed,
\[
\begin{aligned}
 \frac1\pi\int_0^\pi|(Uy)(\theta)|^2\,d\theta
 &=\frac1\pi\int_0^\pi
   \left(\sum_{t=0}^{T-1}y_t e^{i(2t)\theta}\right)
   \left(\sum_{u=0}^{T-1}\overline{y_u}e^{-i(2u)\theta}\right)\,d\theta\\
 &=\sum_{t,u<T}y_t\overline{y_u}\,
   \frac1\pi\int_0^\pi e^{i2(t-u)\theta}\,d\theta\\
 &=\sum_{t<T}|y_t|^2=\|y\|_2^2.
\end{aligned}
\]
The same calculation gives
$\frac1\pi\int_0^\pi|(Vx)(\theta)|^2\,d\theta=\|x\|_2^2$.
These maps also give a decomposition of $K$, since
\[
 K_{ts}=\frac1{2(t-s)-1}
 =\frac i2\int_0^\pi
 e^{i(2s+1)\theta}\overline{e^{i(2t)\theta}}\,d\theta.
\]
Consequently, for $x,y\in\mathbb C^T$,
\[
 \langle y,Kx\rangle
 =\frac i2\int_0^\pi(Vx)(\theta)\overline{(Uy)(\theta)}\,d\theta.
\]
Cauchy--Schwarz and the norm identities above give
\[
 |\langle y,Kx\rangle|
 \le\frac\pi2
 \left(\frac1\pi\int_0^\pi|(Vx)(\theta)|^2\,d\theta\right)^{1/2}
 \left(\frac1\pi\int_0^\pi|(Uy)(\theta)|^2\,d\theta\right)^{1/2}
 =\frac\pi2\|x\|_2\|y\|_2.
\]
Taking the supremum over $\|y\|_2=1$ proves
$\|Kx\|_2\le(\pi/2)\|x\|_2$.

The higher moments follow from the finite cancellation identity
\begin{equation}\label{eq:finite-cotlar}
 (Kx)^2=K\bigl(x\,(K-K^{\mathsf T})x\bigr)+R(x),
\end{equation}
where products and squares are coordinatewise and
\[
 R_t(x)=\sum_{s,u<T}
       \frac{K_{ts}K_{tu}x_sx_u}{1-4(s-u)^2}.
\]
Indeed, for all $t,s,u$, including repeated indices,
\[
 K_{ts}K_{tu}
 =\frac12(K_{ts}-K_{tu})(K_{su}-K_{us})
   +\frac{K_{ts}K_{tu}}{1-4(s-u)^2}.
\]
Multiplying by $x_sx_u$, summing over $s,u$, and interchanging $s,u$
in one of the two terms proves \eqref{eq:finite-cotlar}.

To bound the remainder, observe that
\[
 \sum_{j\in\mathbb Z}\frac1{|1-4j^2|}
 =1+2\sum_{j\ge1}\frac1{4j^2-1}=2.
\]
Thus $2|ab|\le|a|^2+|b|^2$ gives
\[
 |R_t(x)|\le2\sum_{s<T}K_{ts}^2|x_s|^2.
\]
Every row and column sum of the matrix $J_{ts}=K_{ts}^2$ is at most
$\sum_{j\in\mathbb Z}(2j-1)^{-2}=\pi^2/4$. Hence, for every $q\ge1$,
\[
 \|R(x)\|_q\le2\|J(|x|^2)\|_q
 \le\frac{\pi^2}{2}\|x\|_{2q}^2.
\]
Write $c_q=\|K\|_{q\to q}$ over complex vectors. Reversing the order
of the coordinates conjugates $K$ to $K^{\mathsf T}$, so their
$q$-operator norms agree. Taking the $q$-norm in
\eqref{eq:finite-cotlar} and using H\"older therefore gives
\[
 \|Kx\|_{2q}^2
 \le c_q\|x\|_{2q}
       \bigl(\|Kx\|_{2q}+\|K^{\mathsf T}x\|_{2q}\bigr)
       +\frac{\pi^2}{2}\|x\|_{2q}^2.
\]
Taking the supremum over $\|x\|_{2q}=1$ yields
$c_{2q}^2\le2c_qc_{2q}+\pi^2/2$, and hence
\begin{equation}\label{eq:finite-hilbert-recurrence}
 c_{2q}\le c_q+\sqrt{c_q^2+\pi^2/2}.
\end{equation}
Starting from $c_2\le\pi/2\le\pi/\sqrt2$, the identity
$\cot(\theta/2)=\cot\theta+\sqrt{1+\cot^2\theta}$ proves by induction that
\begin{equation}\label{eq:finite-dyadic-hilbert}
 c_q\le\frac\pi{\sqrt2}\cot\frac{\pi}{2q}\le\sqrt2\,q,
 \qquad q=2^k,\quad k\in\mathbb Z_{\ge1}.
\end{equation}
This proves the lemma, and the same bounds hold for
$K^{\mathsf T}$.
\end{proof}

\subsection{Geometric bounds}\label{sec:geometry}

We are now ready to prove our bounds on
$\gamma(\cY,\cL)=\sup_{T\ge1}
\frac{\Gamma_{\widetilde\cY,\cL^\circ}^{(T)}(K_T)}T$.
For a matrix $Z$, let $Z_{:,s}$ denote column $s$ and $Z_{i,:}$ denote row $i$.
We want to upper bound quantities of the form
\[
 \Gamma_{\widetilde\cY,\cL^\circ}^{(T)}(K_T)
 =\sup_{\substack{a_t\in\widetilde\cY\\b_s\in\cL^\circ}}
   \sum_{t,s<T}(K_T)_{ts}\langle a_t,b_s\rangle
 =\sup_{A\in\widetilde\cY^T}
   \sum_{s<T}\|(AK_T)_{:,s}\|_{\cL}.
\]
Here $\widetilde\cY^T$ is the set of $d\times T$ matrices whose columns
belong to $\widetilde\cY$.

\begin{lemma}[The three principal examples]\label{lem:three-bodies}
For every $d\ge2$,
\begin{equation}\label{eq:principal-geometric-bounds}
 \gamma(B_2^d,B_2^d)\le\frac\pi2,\qquad
 \gamma(B_\infty^d,B_\infty^d)=\gamma(B_1^d,B_1^d)=O(\log d).
\end{equation}
Consequently, Algorithm~\ref{alg:harmonic} attains $\varepsilon$-calibration
for every
\[
 \begin{aligned}
 T&\ge\exp(C/\varepsilon)
 &&\text{if }\cY=\cL=B_2^d,\\[3pt]
 T&\ge d^{C/\varepsilon}
 &&\text{if }\cY\in\{[0,1]^d,B_\infty^d\},\ \cL=B_\infty^d,\\[3pt]
 T&\ge d^{C/\varepsilon}
 &&\text{if }\cY\in\{\Delta_d,B_1^d\},\ \cL=B_1^d,
 \end{aligned}
\]
for an absolute constant $C$.
\end{lemma}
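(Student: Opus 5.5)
Each bound reduces to an estimate on $\Gamma^{(T)}_{\widetilde\cY,\cL^\circ}(K_T)=\sup_{A}\sum_s\|(AK_T)_{:,s}\|_{\cL}$, which we obtain from Lemma~\ref{lem:harmonic-finite-moments} applied row by row. The calibration consequences then follow from Theorem~\ref{thm:general-geometric-rate}. For $[0,1]^d$ and $\Delta_d$ we compare with $B_\infty^d$ and $B_1^d$. The centered set $[0,1]^d-c=\frac12B_\infty^d$ lies in $B_\infty^d$. For $\Delta_d$, the difference $\Delta_d-c_{\Delta_d}$ lies in $B_1^d$ up to a factor $2$. Since $\Gamma$ is monotone in $\mathcal A$, this gives $\gamma(\cY,\cL)\le2\gamma(\text{ball},\cL)$. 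Then $\exp(C\gamma/\varepsilon)$ becomes $\exp(C'/\varepsilon)$, or $\exp(C'\log d/\varepsilon)=d^{C'/\varepsilon}$.

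\emph{Euclidean case.} Here $\cL^\circ=B_2^d$. Write $\Gamma=\sup\sum_{t,s}K_{ts}\langle a_t,b_s\rangle$ and expand coordinatewise as $\sum_{i}\langle A_{i,:},K B_{i,:}\rangle$, with $A,B$ having columns in $B_2^d$. The $\ell_2$ bound $\|K\|_{2\to2}\le\pi/2$ from the proof of Lemma~\ref{lem:harmonic-finite-moments} gives
\[
\sum_i\langle A_{i,:},KB_{i,:}\rangle\le\frac\pi2\sum_i\|A_{i,:}\|_2\|B_{i,:}\|_2\le\frac\pi2\|A\|_F\|B\|_F\le\frac\pi2 T,
\]
since $\|A\|_F^2=\sum_t\|a_t\|_2^2\le T$. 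This proves $\gamma\le\pi/2$.

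\emph{$\ell_\infty$ case.} Here $\cL^\circ=B_1^d$, and we need $\sum_s\|(AK)_{:,s}\|_\infty\lesssim T\log d$ when $\|a_t\|_\infty\le1$. Set $q=2^k$ with $q\approx\log d$, so that $d^{1/q}=O(1)$. For each column,
\[
\|(AK)_{:,s}\|_\infty\le\Bigl(\sum_i|(AK)_{is}|^q\Bigr)^{1/q}.
\]
By H\"older in $s$, $\sum_s(\cdot)\le T^{1-1/q}\bigl(\sum_{s,i}|(AK)_{is}|^q\bigr)^{1/q}$. The inner sum equals $\sum_i\|K^{\mathsf T}A_{i,:}^{\mathsf T}\|_q^q\le(Cq)^q\sum_i\|A_{i,:}\|_q^q\le(Cq)^q dT$, using the transpose version of Lemma~\ref{lem:harmonic-finite-moments}. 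Hence the total is at most $Cq\,d^{1/q}T=O(T\log d)$.

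\emph{$\ell_1$ case.} Here $\cL^\circ=B_\infty^d$, and we want $\sup\sum_{t,s}K_{ts}\langle a_t,b_s\rangle$ with $\|a_t\|_1\le1$ and $\|b_s\|_\infty\le1$. This is the previous case with the roles exchanged: swap the roles of $a$ and $b$ and replace $K$ by $K^{\mathsf T}$. Reversing time order maps $K^{\mathsf T}$ to $K$, up to relabeling; the paper notes this conjugation. The problem is then the $\ell_\infty$ bound for columns of $BK$ with columns of $B$ in $B_1^d$, i.e.\ $\sup_{B\in(B_1^d)^T,\,A\in(B_\infty^d)^T}\sum_{t,s}K_{st}\langle b_s,a_t\rangle$. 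Since $B_1^d\subseteq B_\infty^d$, this is at most the $\ell_\infty$-case quantity, so the same $O(T\log d)$ bound applies.

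The main obstacle is the $\ell_\infty$ step: one must choose $q$ dyadic of order $\log d$ and use the transpose version of the moment bound. The bound $\|K_Tx\|_q\le Cq\|x\|_q$ is what turns the $d^{1/q}$ loss into a constant. The remaining steps are bookkeeping with polars and centering.
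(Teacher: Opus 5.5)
Your Euclidean and $\ell_\infty$ cases match the paper. Your bilinear Cauchy--Schwarz in the Euclidean case is equivalent to the paper's column power-mean step. Your explicit centering for $[0,1]^d$ and $\Delta_d$ is correct, and it spells out a step the paper leaves implicit.

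The problem is the end of the $\ell_1$ case. There the $a_t$ lie in $B_1^d$ and the $b_s$ lie in $\cL^\circ=B_\infty^d$. Swapping roles and reversing time therefore gives
\[
 \Gamma^{(T)}_{B_1^d,B_\infty^d}(K)=\Gamma^{(T)}_{B_\infty^d,B_1^d}(K^{\mathsf T})=\Gamma^{(T)}_{B_\infty^d,B_1^d}(K),
\]
which is exactly the $\ell_\infty$-case quantity. This is the paper's argument, and it already finishes the case.

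Your next two sentences then put the columns of $B$ in $B_1^d$ and the tests in $B_\infty^d$, which is backwards. The displayed supremum $\sup_{B\in(B_1^d)^T,A\in(B_\infty^d)^T}\sum_{t,s}K_{st}\langle b_s,a_t\rangle$ equals $\sup_B\sum_t\|(BK)_{:,t}\|_1$. So it is the original $\ell_1$ objective with renamed variables, not an $\ell_\infty$ column bound. The inclusion step fails under either reading:
\begin{itemize}
\item Enlarging $B_1^d$ to $B_\infty^d$ in the displayed supremum gives $\Gamma^{(T)}_{B_\infty^d,B_\infty^d}(K)\ge d\,\Gamma^{(T)}_{[-1,1],[-1,1]}(K)$ (test with vectors proportional to $\mathbf 1$). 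That is of order $dT$, not the $\ell_\infty$-case quantity.
\item If you meant $\sup_{B\in(B_1^d)^T}\sum_t\|(BK)_{:,t}\|_\infty=\Gamma^{(T)}_{B_1^d,B_1^d}(K)$, that is a smaller object than $\gamma(B_1^d,B_1^d)$ requires, so bounding it proves nothing about the target.
\end{itemize}
No inclusion can repair this, because on the test side you would need $B_\infty^d\subseteq B_1^d$.

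Delete those two sentences and state the identity above as an equality. That also gives the equality $\gamma(B_1^d,B_1^d)=\gamma(B_\infty^d,B_\infty^d)$ claimed in the statement, which an inclusion argument could only give as an inequality.
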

\begin{proof}
Fix $T$ and write $K=K_T$.
Note the following norm inequality and power mean inequality.
For $x\in\R^n$ and
$1\le p\le q\le\infty$,
\[
 \|x\|_q\le\|x\|_p,
 \qquad
 n^{-1/p}\|x\|_p\le n^{-1/q}\|x\|_q.
\]
These are standard consequences of H\"older's inequality.

\textbf{Euclidean ball ($\gamma(B_2^d,B_2^d)$).}
Set $Z=AK$. We want to bound $\frac1T\sum_{s<T}\|Z_{:,s}\|_2$ uniformly over
$A\in(B_2^d)^T$.
First apply the power-mean inequality with exponents $1$ and $2$
to the column norms:
\[
 \frac1T\sum_{s<T}\|Z_{:,s}\|_2
 \le\left(\frac1T\sum_{s<T}\|Z_{:,s}\|_2^2\right)^{1/2}.
\]
The $q=2$ bound in Lemma~\ref{lem:harmonic-finite-moments}
also holds for $K^{\mathsf T}$ and gives
\[
 \begin{aligned}
 \sum_{s<T}\|Z_{:,s}\|_2^2
 &=\sum_{i=1}^d\|Z_{i,:}\|_2^2
 =\sum_{i=1}^d\|A_{i,:}K\|_2^2\\
 &\le\left(\frac\pi2\right)^2
       \sum_{i=1}^d\|A_{i,:}\|_2^2\\
 &=\left(\frac\pi2\right)^2\sum_{t<T}\|A_{:,t}\|_2^2
 \le\left(\frac\pi2\right)^2T.
 \end{aligned}
\]
Combining the two estimates gives $\frac1T\sum_{s<T}\|Z_{:,s}\|_2\le\pi/2$.
Taking the supremum over the inputs and the horizon gives
$\gamma(B_2^d,B_2^d)\le\pi/2$.

\textbf{Multi-event forecasting ($\gamma(B_\infty^d,B_\infty^d)$).}
Set $Z=AK$. We want to bound
$\frac1T\sum_{s<T}\|Z_{:,s}\|_\infty$ uniformly over
$A\in(B_\infty^d)^T$.
Fix $q\ge2$ to be a power of $2$. First apply the power-mean inequality
with exponents $1$ and $q$ to the column norms, then the norm inequality
with exponents $q$ and $\infty$ within each column:
\[
 \begin{aligned}
 \frac1T\sum_{s<T}\|Z_{:,s}\|_\infty
 &\le\left(\frac1T\sum_{s<T}\|Z_{:,s}\|_\infty^q\right)^{1/q}\\
 &\le\left(\frac1T\sum_{s<T}\|Z_{:,s}\|_q^q\right)^{1/q}.
 \end{aligned}
\]
The bound in Lemma~\ref{lem:harmonic-finite-moments}
also holds for $K^{\mathsf T}$ and gives
\[
 \begin{aligned}
 \sum_{s<T}\|Z_{:,s}\|_q^q
 &=\sum_{i=1}^d\|Z_{i,:}\|_q^q
 =\sum_{i=1}^d\|A_{i,:}K\|_q^q\\
 &\le(Cq)^q\sum_{i=1}^d\|A_{i,:}\|_q^q\\
 &=(Cq)^q\sum_{t<T}\|A_{:,t}\|_q^q
 \le(Cq)^qTd,
 \end{aligned}
\]
where the final inequality follows from the power-mean inequality
with exponents $q$ and $\infty$, since $\|A_{:,t}\|_\infty\le1$.
Combining these estimates gives
\[
 \frac1T\sum_{s<T}\|Z_{:,s}\|_\infty\le Cq\,d^{1/q}.
\]
Choose the least power of two $q\ge\max\{2,\log d\}$.
Then $d^{1/q}\le e$ and $q=O(\log d)$.
Taking the supremum over the inputs and the horizon proves the cube bound.

\textbf{Multi-class forecasting ($\gamma(B_1^d,B_1^d)$).}
Swapping the row and column tests in the matrix objective gives
\[
 \frac1T\Gamma_{B_1^d,B_\infty^d}^{(T)}(K)
 =\frac1T\Gamma_{B_\infty^d,B_1^d}^{(T)}(K^{\mathsf T})
 =\frac1T\Gamma_{B_\infty^d,B_1^d}^{(T)}(K)
 =O(\log d).
\]
The second equality holds because reversing both index orders turns
$K$ into $K^{\mathsf T}$, and permuting indices does not change
the supremum. Thus $\gamma(B_1^d,B_1^d)=\gamma(B_\infty^d,B_\infty^d)$.
Theorem~\ref{thm:general-geometric-rate} now gives the calibration bounds.
\end{proof}

\begin{theorem}[General test sets]\label{thm:john}
Let $\mathcal A,\mathcal B\subseteq\R^d$ be nonempty compact convex sets
with $0\in\mathcal A$ and $\mathcal B=-\mathcal B$, and write
$r=\max_{a\in\mathcal A,\,b\in\mathcal B}\langle a,b\rangle$.
For every $T\ge1$,
\[
 \Gamma_{\mathcal A,\mathcal B}^{(T)}(K_T)
 \le\frac\pi2 r\sqrt d\,T.
\]
In particular, for every nonempty compact convex forecast set $\cY\subseteq\R^d$
and norm ball $\cL\subseteq\R^d$,
\begin{equation}\label{eq:john}
 \gamma(\cY,\cL)\le\frac\pi2\diam_{\cL}(\cY)\sqrt d<\infty.
\end{equation}
\end{theorem}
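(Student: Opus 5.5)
The plan is to reduce to the Euclidean case via John's theorem (hence the label), using a linear change of variables that makes $\mathcal B$ look like a Euclidean ball. Since $\mathcal B$ is compact, convex and symmetric, we may first assume it has nonempty interior; otherwise we restrict to its span, which only lowers the dimension, or perturb and take limits. John's theorem then gives an ellipsoid $E$ with $E\subseteq\mathcal B\subseteq\sqrt d\,E$. Write $E=MB_2^d$ for an invertible $M$. The bilinear objective is invariant under the substitution $a\mapsto M^{\mathsf T}a$, $b\mapsto M^{-1}b$, so
\[
\Gamma^{(T)}_{\mathcal A,\mathcal B}(K_T)=\Gamma^{(T)}_{M^{\mathsf T}\mathcal A,M^{-1}\mathcal B}(K_T)\le\Gamma^{(T)}_{M^{\mathsf T}\mathcal A,\sqrt d B_2^d}(K_T).
\]
The inequality holds because each summand is linear in $b_s$ for fixed $a$, so enlarging $\mathcal B$ can only increase the supremum.

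Next I control $M^{\mathsf T}\mathcal A$ in Euclidean norm. For $a\in\mathcal A$,
\[
\|M^{\mathsf T}a\|_2=\max_{u\in B_2^d}\langle a,Mu\rangle=\max_{e\in E}\langle a,e\rangle\le\max_{b\in\mathcal B}\langle a,b\rangle\le r,
\]
using $E\subseteq\mathcal B$. Here $r\ge0$, since $0\in\mathcal A$ and $\mathcal B$ is symmetric. Hence $M^{\mathsf T}\mathcal A\subseteq rB_2^d$, and by bilinear scaling the objective is at most $r\sqrt d\,\Gamma^{(T)}_{B_2^d,B_2^d}(K_T)$. The Euclidean computation in Lemma~\ref{lem:three-bodies} shows $\Gamma^{(T)}_{B_2^d,B_2^d}(K_T)\le\frac\pi2T$. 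That argument only used that the columns lie in $B_2^d$ together with the $q=2$ bound of Lemma~\ref{lem:harmonic-finite-moments}. Combining gives the bound $\frac\pi2 r\sqrt d\,T$. The hypothesis $0\in\mathcal A$ is not needed for this chain; it only guarantees $r\ge0$, so the scaling makes sense.

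For the corollary, take $\mathcal A=\widetilde\cY$, which contains $0$ because the center of mass of a convex body lies in the body, and $\mathcal B=\cL^\circ$, which is symmetric. Then
\[
r=\max_{a\in\widetilde\cY}\|a\|_{\cL}=\max_{y\in\cY}\|y-c_\cY\|_{\cL}\le\diam_{\cL}(\cY),
\]
again because $c_\cY\in\cY$. Dividing by $T$ and taking the supremum over $T$ gives \eqref{eq:john}.

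The main obstacle is the degenerate case in which $\mathcal B$, or $\cY$, is lower dimensional. Then John's theorem must be applied in the subspace $\mathrm{span}(\mathcal B)$, with $\mathcal A$ replaced by its orthogonal projection onto that subspace; the pairing $\langle a,b\rangle$ is unchanged, and the dimension only decreases. The other point to check carefully is the direction of the John inclusions: the inner ellipsoid must be used to bound $r$, and the outer one to enlarge $\mathcal B$.
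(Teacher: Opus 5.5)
Your proposal is correct and is essentially the paper's proof: restrict to $\operatorname{span}(\mathcal B)$, use the symmetric John ellipsoid to compare $\mathcal B$ with a Euclidean ball up to the factor $\sqrt d$, and finish with the $q=2$ Hilbert bound from the Euclidean case. The paper writes the John step as the norm comparison $\|v\|_{\mathcal B^\circ}\le\|Sv\|_2\le\sqrt d\,\|v\|_{\mathcal B^\circ}$, which is your inclusion $E\subseteq\mathcal B\subseteq\sqrt d\,E$ with $S=\sqrt d\,M^{\mathsf T}$, and it redoes the Euclidean computation inline instead of citing Lemma~\ref{lem:three-bodies}, so the two arguments are equivalent.
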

\begin{proof}
Fix $T$ and write $K=K_T$.
If $\mathcal B=\{0\}$, the general bound is immediate. Otherwise,
project $\mathcal A$ onto $\operatorname{span}(\mathcal B)$ and work
in that subspace. This preserves all dot products and does not increase
the dimension, so we may assume that $\mathcal B$ has nonempty interior.
The norm with unit ball $\mathcal B^\circ$ is
$\|v\|_{\mathcal B^\circ}=\max_{b\in\mathcal B}\langle v,b\rangle$.
John's ellipsoid theorem~\cite[Theorem J and its symmetric consequence]{ball1992ellipsoids}
gives an invertible linear map $S:\R^d\to\R^d$ such that
\[
 \|v\|_{\mathcal B^\circ}\le\|Sv\|_2\le\sqrt d\,\|v\|_{\mathcal B^\circ}
 \qquad(v\in\R^d).
\]
Every $a\in\mathcal A$ satisfies $\|a\|_{\mathcal B^\circ}\le r$,
and hence $\|Sa\|_2\le r\sqrt d$.
Fix $A\in\mathcal A^T$ and set $Z=AK$, so $SZ=(SA)K$.
The first three inequalities below use the ellipsoid comparison,
the power-mean inequality across columns, and the $q=2$ moment bound
on coordinate rows, respectively:
\[
 \begin{aligned}
 \sum_{s<T}\|Z_{:,s}\|_{\mathcal B^\circ}
 &\le\sum_{s<T}\|(SZ)_{:,s}\|_2
 \le\left(T\sum_{s<T}\|(SZ)_{:,s}\|_2^2\right)^{1/2}\\
 &=\left(T\sum_{i=1}^d\|(SZ)_{i,:}\|_2^2\right)^{1/2}
 \le\frac\pi2\left(T\sum_{i=1}^d\|(SA)_{i,:}\|_2^2\right)^{1/2}\\
 &=\frac\pi2\left(T\sum_{t<T}\|(SA)_{:,t}\|_2^2\right)^{1/2}
 \le\frac\pi2 r\sqrt d\,T.
 \end{aligned}
\]
Taking the supremum over $A\in\mathcal A^T$ proves the
general bound. For $\mathcal A=\widetilde\cY$ and $\mathcal B=\cL^\circ$,
we have $r=\max_{a\in\widetilde\cY}\|a\|_{\cL}\le\diam_{\cL}(\cY)$,
because $c_{\cY}\in\cY$. Dividing by $T$ and taking the supremum over
the horizon proves \eqref{eq:john}.
\end{proof}

The $\sqrt d$ bound in Theorem~\ref{thm:john} can be loose: for
$\cY=\cL=B_\infty^d$ it gives $O(\sqrt d)$, whereas the more refined
argument above gives $O(\log d)$. One might therefore conjecture that
$\gamma(\cL,\cL)=O(\log d)$ for every norm ball $\cL\subseteq\R^d$.
However, Theorem~\ref{thm:bourgain-lower-bound} in
Appendix~\ref{app:harmonic-refinements} uses a construction of
Bourgain~\cite{bourgain1983sinus} to give norm balls with
$\gamma(\cL,\cL)=\Omega(d^{1/3})$. Thus the general guarantee in
Theorem~\ref{thm:general-geometric-rate} cannot give a
polynomial-in-$d$ calibration horizon for every norm ball.

We defer the remaining cases, including $\ell_p$ balls, symmetric polytopes,
and products of Euclidean balls, to Appendix~\ref{app:harmonic-refinements}.

\section{Universal Optimality of the Discrete Hilbert Transform Matrix}
\label{sec:harmonic-optimality}

The idea of using harmonic weights for high-dimensional calibration
stems from the TreeCal algorithm~\cite{fishelson2025high,peng2025high}.
The algorithm
organizes historical outcomes into nested blocks of time and mixes
forecasts associated with different levels of the tree.\footnote{This
idea originates in the earlier TreeSwap algorithm for high-dimensional
swap regret~\cite{dagan2024external,peng2024fast}.}
This corresponds to weighting historical data with harmonically
decaying importance. A deeper reason for these weights is revealed by
our proof that harmonic weights in algorithm~\eqref{eq:general-matrix-algorithm}
are optimal up to a constant factor for the matrix objective, a result
that rests on a fundamental property of the Hilbert transform.

Recall the shifted discrete Hilbert transform matrix $K_T$ and its
row-column normalized version $\widehat K_T$ from
\eqref{eq:harmonic-kernel} and \eqref{eq:harmonic-algorithm-kernel}:
\[
 (K_T)_{ts}=\frac1{2(t-s)-1}\qquad(0\le t,s<T),
\]
\[
 (\widehat K_T)_{ts}=
 \begin{cases}
  1/[h_t(2(t-s)-1)],&s<t,\\
  -1/[h_{s+1}(2(s-t)+1)],&t\le s,
 \end{cases}
 \qquad 0\le t,s<T,
\]
where $h_t=\sum_{j=0}^{t-1}1/(2j+1)$.
Also recall that algorithm~\eqref{eq:general-matrix-algorithm} can be
instantiated with any matrix in the class
\[
 \widehat{\mathcal H}_T=\left\{H\in\R^{T\times T}:\;
 \begin{array}{ll}
 H_{ts}\ge0 &(s<t),\\[3pt]
 H_{ts}\le0 &(t\le s),\\[3pt]
 \displaystyle\sum_{s=0}^{t-1}H_{ts}=1 &(1\le t<T),\\[6pt]
 \displaystyle-\sum_{t=0}^{s}H_{ts}=1 &(0\le s<T)
 \end{array}\right\}.
\]
Choosing $H=\widehat K_T$ gives our harmonic algorithm. The following
corollary shows that this choice minimizes the matrix objective up to
an absolute constant factor, simultaneously for every pair of test sets.

\begin{corollary}[Optimality of harmonic weights]
\label{cor:optimality-harmonic-weights}
There is an absolute constant $C$ such that, for every pair of nonempty
compact convex sets $\mathcal A,\mathcal B\subseteq\R^d$ with
$0\in\mathcal A$ and $\mathcal B=-\mathcal B$, every $T\ge2$, and
every $H\in\widehat{\mathcal H}_T$,
\begin{equation}\label{eq:optimality-harmonic-weights}
 \Gamma_{\mathcal A,\mathcal B}^{(T)}(\widehat K_T)
 \le C\,\Gamma_{\mathcal A,\mathcal B}^{(T)}(H).
\end{equation}
\end{corollary}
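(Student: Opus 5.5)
The plan is to deduce the corollary from the unnormalized optimality statement announced in the introduction,
\[
 \Gamma_{\mathcal A,\mathcal B}^{(T)}(K_T)\le C_0\min_{H'\in\mathcal H_T}\Gamma_{\mathcal A,\mathcal B}^{(T)}(H'),
\]
which I will take as the main theorem of this section, together with Lemma~\ref{lem:harmonic-kernel-comparison}. The only real work in the deduction is to turn a matrix $H\in\widehat{\mathcal H}_T$ into a member of $\mathcal H_T$ by multiplying by a scalar of order $h_T$. Here $\mathcal H_T$ consists of matrices with the sign pattern of $\widehat{\mathcal H}_T$ and total absolute mass equal to that of $K_T$.

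\textbf{Step 1 (rescaling).} Every $H\in\widehat{\mathcal H}_T$ has the sign pattern required by $\mathcal H_T$. Its total absolute mass is $(T-1)+T=2T-1$, since the positive rows below the diagonal sum to $1$ and the nonpositive columns on and above the diagonal sum to $-1$. Let $m_T=\sum_{t,s<T}|(K_T)_{ts}|$. Each row of $K_T$ has absolute sum between roughly $h_T$ and $2h_T$, because the entries are $1/|2j-1|$ over a window of $T$ consecutive $j$. Hence $m_T=\Theta(Th_T)$, and $H':=\frac{m_T}{2T-1}H\in\mathcal H_T$ with $\frac{m_T}{2T-1}\le c\,h_T$ for an absolute constant $c$.

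\textbf{Step 2 (homogeneity and chaining).} Since $\Gamma^{(T)}_{\mathcal A,\mathcal B}$ is positively homogeneous in $H$, we get
\[
 h_T\Gamma(\widehat K_T)\le 3\Gamma(K_T)\le 3C_0\Gamma(H')\le 3C_0c\,h_T\,\Gamma(H).
\]
The first inequality is Lemma~\ref{lem:harmonic-kernel-comparison}, which needs $\mathcal B=-\mathcal B$. Dividing by $h_T>0$ gives \eqref{eq:optimality-harmonic-weights} with $C=3cC_0$. One point must be checked: $\Gamma(H)\ge0$, so that scaling by a positive constant preserves the inequality direction. This holds because $0\in\mathcal A$, so choosing all $a_t=0$ shows the supremum is at least $0$.

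\textbf{Main obstacle: the unnormalized theorem itself.} The deduction above is routine; if that theorem is not yet available, proving it is where the difficulty lies. My plan would be to exploit that $\Gamma$ is a supremum of linear functionals, hence convex in $H$, and is nonincreasing under $H\mapsto MHN$ whenever the induced maps send $\mathcal A^T$ into $\mathcal A^T$ and $\mathcal B^T$ into $\mathcal B^T$. For $\mathcal A$ this holds when $M^{\mathsf T}$ is nonnegative and row-substochastic, using $0\in\mathcal A$ and convexity. For $\mathcal B$ it holds when $N$ has columns of $\ell_1$ norm at most $1$, using symmetry. I would then average an arbitrary $H\in\mathcal H_T$ over time shifts and dilations (block averaging), realized by such contractions, to produce a near-Toeplitz matrix. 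The target is to show that this averaged matrix dominates a constant multiple of $K_T$, up to terms controllable by the lower bound $\Gamma(H)\ge\max_{a}\|a\|\sum_s|\sum_tH_{ts}|$ used in Theorem~\ref{thm:matrix-calibration}. The hardest point is matching the specific $1/(2j-1)$ profile. I would first try to show that the optimal Toeplitz symbol under the mass constraint must be comparable to the Hilbert symbol, which has a sharp Fourier jump.
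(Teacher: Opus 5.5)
Your deduction is correct and is exactly the paper's argument. Every $H\in\widehat{\mathcal H}_T$ has mass $2T-1$, so $\frac{T(2h_T-1)}{2T-1}H\in\mathcal H_T$; chaining Lemma~\ref{lem:harmonic-kernel-comparison}, Theorem~\ref{thm:universal-harmonic-comparison}, positive homogeneity, and $\Gamma^{(T)}_{\mathcal A,\mathcal B}(H)\ge0$ (from $0\in\mathcal A$) then gives the claim, with the scaling factor, at most $2h_T$, absorbed into the constant. Your contraction-and-averaging sketch in the last paragraph is not needed, since the paper proves Theorem~\ref{thm:universal-harmonic-comparison} separately, using a pulse embedding and a logarithmically distributed random scale rather than averaging $H$.
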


We prove a stronger property: $K_T$ is optimal over the more general class
\begin{equation}\label{eq:optimality-mass-class}
 \mathcal H_T=\left\{H\in\R^{T\times T}:\begin{array}{l}
 H_{ts}\ge0\text{ for }s<t,\\
 H_{ts}\le0\text{ for }t\le s,\\
 \|H\|_1=\|K_T\|_1=T(2h_T-1)
 \end{array}\right\}.
\end{equation}
Here $\|H\|_1=\sum_{t,s}|H_{ts}|$ is the entrywise absolute mass.
The class $\mathcal H_T$ has the same sign constraints as
$\widehat{\mathcal H}_T$, but constrains only the total absolute mass,
not the masses of individual lower rows and upper columns. We choose
this total mass so that $K_T\in\mathcal H_T$: for each $0\le j<T$,
there are $2T-2j-1$ entries of $K_T$ with magnitude $1/(2j+1)$, and hence
\[
 \|K_T\|_1
 =\sum_{j=0}^{T-1}\frac{2T-2j-1}{2j+1}
 =2T\sum_{j=0}^{T-1}\frac1{2j+1}-T
 =T(2h_T-1).
\]

Our main theorem in this section is the following.

\begin{theorem}[Universal optimality of the discrete Hilbert transform]
\label{thm:universal-harmonic-comparison}
There is an absolute constant $C$ such that, for every pair of nonempty
compact convex sets $\mathcal A,\mathcal B\subseteq\R^d$ with
$0\in\mathcal A$ and $\mathcal B=-\mathcal B$, every $T\ge2$, and
every $H\in\mathcal H_T$,
\begin{equation}\label{eq:optimality-hilbert-comparison}
 \Gamma_{\mathcal A,\mathcal B}^{(T)}(K_T)
 \le C\,\Gamma_{\mathcal A,\mathcal B}^{(T)}(H).
\end{equation}
The constant is independent of the sets, their dimension, and the horizon.
\end{theorem}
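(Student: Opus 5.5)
The plan is to prove the comparison with two ingredients. The first is a class of \emph{contractive} operations on $H$ that cannot increase $\Gamma_{\mathcal A,\mathcal B}^{(T)}$. The second is a normalization argument that turns an arbitrary $H\in\mathcal H_T$ into something comparable to $K_T$ using only $O(1)$ total weight of such operations.

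\textbf{Step 1 (contraction principle).} Let $X\in\R^{T\times T}$ be entrywise nonnegative with column sums at most $1$, and let $Y$ be a real matrix with $\sum_v|Y_{vs}|\le1$ for every $s$. Since $0\in\mathcal A$ and $\mathcal A$ is convex, the vectors $a'_t=\sum_u X_{ut}a_u$ lie in $\mathcal A$. Since $\mathcal B=-\mathcal B$ is convex, the vectors $b'_s=\sum_v Y_{vs}b_v$ lie in $\mathcal B$. Substituting these into the definition \eqref{eq:matrix-gamma} gives
\[
\Gamma_{\mathcal A,\mathcal B}^{(T)}(X^{\mathsf T}HY)\le\Gamma_{\mathcal A,\mathcal B}^{(T)}(H).
\]
The objective $\Gamma$ is also sublinear in $H$, being a supremum of linear functionals. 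Hence for any decomposition $M=\sum_k\lambda_kX_k^{\mathsf T}HY_k$ with admissible $X_k,Y_k$ we get
\[
\Gamma(M)\le\Big(\sum_k|\lambda_k|\Big)\Gamma(H).
\]
Truncated shifts $e_t\mapsto e_{t+i}$ (discarding indices $\ge T$) are $0/1$ substochastic, so they qualify. Block-averaging (dilation) maps qualify on both sides as well.

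\textbf{Step 2 (Toeplitzization and the sign pattern).} Apply simultaneous shifts $S^i$ on both sides and average over $i$. This replaces $H$ by a matrix close to the Toeplitz matrix with diagonal profile $g(j)=\frac1T\sum_t H_{t,t-j}$. Here $g(j)\ge0$ for $j\ge1$, $g(j)\le0$ for $j\le0$, and $\sum_j|g(j)|\approx 2h_T-1$ by the mass constraint. Boundary effects from truncation should be absorbed by running the same argument on a horizon of $2T$, or by restricting to a central block. I also want a lower bound showing $\Gamma(H)\gtrsim\Gamma$ of this Toeplitz surrogate. This follows directly from the convexity and contraction bound above, since the average is a convex combination of contractions of $H$.

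\textbf{Step 3 (from an arbitrary profile to the harmonic one: main obstacle).} For a Toeplitz matrix with profile $g$, I will show that the harmonic profile $\frac1{2j-1}$ can be written as $g*\phi$ plus a controlled error, where $\|\phi\|_{\ell^1}=O(1)$, possibly together with dilated copies of $g$ of total weight $O(1)$. Convolution with $\phi$ is a signed combination of shifts of total weight $\|\phi\|_1$, so Step 1 gives $\Gamma(K_T)\le C\,\Gamma(H)$.

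The difficulty is that $g$ has fixed total mass of order $\log T$ but an arbitrary shape. My first attempt uses the dyadic dilation operators. Averaging $g$ over the roughly $\log T$ scales $2^k$, with weights $1/\log T$, produces a scale-invariant profile. Such a profile must decay like $1/j$, with the correct sign on each side of the diagonal, and the mass constraint should pin its constant.

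The remaining piece is to compare two sign-patterned, roughly $1/j$ profiles. For this I would use Fourier multipliers on $[0,\pi]$ as in Lemma~\ref{lem:harmonic-finite-moments}. If the ratio of their symbols is the symbol of an $\ell^1$ kernel with $O(1)$ norm, the comparison follows. If that fails, I would fall back on a direct lower bound: plug explicit test vectors into \eqref{eq:matrix-gamma}, namely $a_t$ piecewise constant on dyadic blocks and $b_s=\pm b$. This should show that $\Gamma(H)$ dominates, up to constants, the same block-sign quantities that control $\Gamma(K_T)$. Making either route work with constants independent of $\mathcal A$, $\mathcal B$, $d$, and $T$ is where I expect the real work to lie.
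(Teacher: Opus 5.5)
\textbf{There is a genuine gap.} Your Step~1 is correct up to a transpose: for $X^{\mathsf T}HY$ you need the \emph{rows} of $X$ to be substochastic and the rows of $Y$ to have $\ell^1$ norm at most $1$. It is even compatible with the paper's proof, whose random test vectors amount to random $0/1$ sampling matrices applied to the maximizers of $K_T$. The gap is that Steps~2--3 never supply a mechanism that turns an arbitrary $H\in\mathcal H_T$ into something dominating $K_T$ at $O(1)$ cost, and the mechanisms you name fail as stated.

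\emph{Dilations.} Discrete dilations are not mass preserving. Averaging both sides over blocks of length $\lambda$ sends $H$ to the matrix with entries $\lambda^{-2}H_{\lfloor t/\lambda\rfloor,\lfloor s/\lambda\rfloor}$. This sees only the top-left $(T/\lambda)$-block of $H$, so for a Toeplitz-like $H$ it keeps about a $1/\lambda$ fraction of the mass. The replication maps (testing index $u$ of $H$ with the vector at $\lfloor u/\lambda\rfloor$) keep mass, but they compress scales, let the two signs of $H$ cancel inside diagonal blocks, and fill only a $(T/\lambda)$-corner. Wrapping around to fill the rest scrambles the sign pattern. So the average over $\log T$ dyadic scales does not come at $O(1)$ cost. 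Even granting a scale-averaged profile, nothing in your sketch controls the ratio of symbols that the multiplier comparison needs.

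\emph{Fallback.} This cannot work in principle. Tests with every $b_s=\pm b$ reduce to a scalar problem and certify at most $r\,\Gamma_{[-1,1],[-1,1]}^{(T)}(H)$, which is $O(rT)$ already for $H=K_T$. This is just the baseline \eqref{eq:optimality-vector-baseline}. By contrast, $\Gamma_{\mathcal A,\mathcal B}^{(T)}(K_T)$ can exceed $rT$ by a geometry-dependent factor: by Theorem~\ref{thm:bourgain-lower-bound}, by $\Omega(d^{1/3})$ with $r=1$ for suitable norm balls and large $T$. A lower bound matching $\Gamma(K_T)$ must be built from the maximizers of $K_T$ themselves.

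\emph{Toeplitzization.} Separately, the shift average in Step~2 is exactly the Toeplitz matrix with entries $D_{t-s}/(2T-1)$, where $D_j$ is the $j$-th diagonal sum. Its mass $\sum_j(T-|j|)|D_j|/(2T-1)$ can be far below $\|H\|_1$ when $H$ lives on long diagonals.

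\textbf{The missing idea} is to make the comparison blind to the shape of $H$. From maximizers $a_t,b_s$ for $K_T$, the paper builds one random family $a'_t\in\mathcal A$, $b'_s\in\mathcal B$ satisfying, for \emph{every} pair $(t,s)$,
\[
\operatorname{sgn}(t-s-\tfrac12)\,\E\langle a'_t,b'_s\rangle\ge\alpha:=\frac{\Gamma_{\mathcal A,\mathcal B}^{(T)}(K_T)/(32T)-3r/16}{\log(256T^2)}.
\]
Then every sign-compatible $H$ of mass $T(2h_T-1)$ scores at least $\|H\|_1\alpha$, with no analysis of $H$ at all.

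Concretely, the maximizers become width-$\tfrac12$ pulses at $2t$ and $2s+1$ in functions $f,g$ of period $4T$ whose second half is empty. One sets $a'_t=f(x+tu)$, $b'_s=g(x+(s+\tfrac12)u)$, or, with probability $\tfrac12$, the reflected family $f(x-tu)$, $-g(x-(s+\tfrac12)u)$; this is where $\mathcal B=-\mathcal B$ enters. Here $x$ is uniform modulo the period and $u$ has density proportional to $1/u$ on $[1/(2T),128T]$. This gives $\E\langle a'_t,b'_s\rangle=\frac1{\log(256T^2)}\int\phi(ku)\,\frac{du}u$ with $k=t-s-\tfrac12$ and $\phi$ odd. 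Scale invariance of $dv/v$ makes this $\operatorname{sgn}(k)$ times an integral that no longer depends on $|k|$.

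Sampling a periodic embedding at a continuous random scale is exactly the mass-preserving dilation at all scales simultaneously that your discrete maps lack. The empty half-period, the odd part, and the cotangent and pulse-width estimates of Lemma~\ref{lem:optimality-pulses} keep the wrap-around, truncation and discretization errors at $O(r)$ in the common margin. That is $O(rT)$ after summing against $|H_{ts}|$, which the scalar baseline $\Gamma_{\mathcal A,\mathcal B}^{(T)}(H)\ge rT/18$ absorbs. This is the one place where dyadic sign tests like your fallback are actually useful.
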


Corollary~\ref{cor:optimality-harmonic-weights} follows from
Lemma~\ref{lem:harmonic-kernel-comparison} and the fact that every matrix in
$\widehat{\mathcal H}_T$ has total absolute mass $2T-1$, so
\begin{equation}\label{eq:optimality-feasible-inclusion}
 \frac{T(2h_T-1)}{2T-1}\widehat{\mathcal H}_T\subseteq\mathcal H_T.
\end{equation}

\subsection{Scalar warm-up}\label{sec:scalar-warmup}

As a warm-up to proving Theorem~\ref{thm:universal-harmonic-comparison},
we show that $\Gamma_{\mathcal A,\mathcal B}^{(T)}(H)=\Omega(T)$ for
every $H\in\mathcal H_T$. This matches the $T$
dependence of $\Gamma_{\mathcal A,\mathcal B}^{(T)}(K_T)=O(T)$ proved
in Theorem~\ref{thm:john}. In Section~\ref{sec:optimality-full-proof},
we use a more involved construction to obtain a lower bound on
$\Gamma_{\mathcal A,\mathcal B}^{(T)}(H)$ that matches not only the
$\Theta(T)$ growth of $\Gamma_{\mathcal A,\mathcal B}^{(T)}(K_T)$,
but also its dependence on $d$ through the geometry of $\mathcal A$ and
$\mathcal B$ (see Table~\ref{tab:geometric-gamma-bounds}).

We prove the $\Gamma_{\mathcal A,\mathcal B}^{(T)}(H)=\Omega(T)$ lower bound
in the scalar setting $\mathcal A=\mathcal B=[-1,1]$.
We obtain the lower bound for general sets by embedding this scalar construction.
Note that $\|H\|_1=\Theta(T\log T)$ for all $H\in\mathcal H_T$.
Below we prove the general lower bound
$\Gamma_{[-1,1],[-1,1]}^{(T)}(H)=\Omega(\|H\|_1/\log T)$.

\begin{lemma}[Dyadic scalar tests]\label{lem:optimality-scalar-baseline}
Let $T\ge2$ and let $H\in\R^{T\times T}$ satisfy $H_{ts}\ge0$ for
$t>s$ and $H_{ts}\le0$ for $t\le s$. With $M=\|H\|_1$,
\begin{equation}\label{eq:optimality-scalar-baseline}
 \Gamma_{[-1,1],[-1,1]}^{(T)}(H)
 =\Omega\left(\frac{M}{\log T}\right).
\end{equation}
\end{lemma}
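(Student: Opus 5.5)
The plan is a probabilistic argument: for suitable random test vectors, the expected value of $\sum_{t,s}H_{ts}a_tb_s$ equals a nonnegative part of the mass of $H$. Since the supremum in $\Gamma_{[-1,1],[-1,1]}^{(T)}(H)$ is at least any average over admissible tests, such an expectation is a lower bound.

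First split the mass as $M=M_{<}+M_{=}+M_{>}$. Here $M_{<}=\sum_{s<t}H_{ts}$ is the strictly lower mass, $M_{=}=-\sum_tH_{tt}$ is the diagonal mass, and $M_{>}=-\sum_{t<s}H_{ts}$ is the strictly upper mass. It suffices to show $\Gamma\ge M_{=}$, $\Gamma\ge cM_{<}/\log T$ and $\Gamma\ge cM_{>}/\log T$, since then $\Gamma\ge cM/(3\log T)$.

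\textbf{Diagonal.} Take independent Rademacher signs $\varepsilon_t$ and set $a_t=\varepsilon_t$, $b_t=-\varepsilon_t$. Every off-diagonal term has mean zero, so the expectation is exactly $-\sum_tH_{tt}=M_{=}$.

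\textbf{Lower triangle.} Embed the indices in $\{0,\dots,2^m-1\}$ with $m=\lceil\log_2T\rceil$, padding $H$ by zeros. Every pair $s<t$ has a unique lowest common dyadic ancestor interval $I$, with $s$ in its left child $L_I$ and $t$ in its right child $R_I$. Grouping pairs by the level $k$ of $I$ gives $M_{<}=\sum_{k<m}M_{<}^{(k)}$. By pigeonhole, some level has $M_{<}^{(k)}\ge M_{<}/m$.

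For that level, assign an independent sign $\varepsilon_I$ to each level-$k$ interval $I$. Set $a_t=\varepsilon_I$ for $t\in R_I$ and $b_s=\varepsilon_I$ for $s\in L_I$, and make all remaining coordinates $0$. A term $H_{ts}a_tb_s$ is nonzero only when $t\in R_I$ and $s\in L_J$.
\begin{itemize}
\item If $I\ne J$, the term has mean zero. This covers the upper-triangular entries with the unfavourable sign, as well as lower pairs from other intervals.
\item If $I=J$, the pair is a lower pair $s<t$ whose ancestor is exactly $I$, and it contributes $H_{ts}\ge0$.
\item No diagonal term occurs, since $L_I\cap R_I=\emptyset$.
\end{itemize}
Hence the expectation is $M_{<}^{(k)}\ge M_{<}/m$.

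\textbf{Upper triangle.} The strict upper triangle is symmetric. Put $a_t=\varepsilon_I$ on $L_I$ and $b_s=-\varepsilon_I$ on $R_I$; the expectation is then $M_{>}^{(k)}$ at the best level.

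Combining the three cases with $m=O(\log T)$ for $T\ge2$ gives \eqref{eq:optimality-scalar-baseline}.

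The main obstacle is the cancellation that a naive deterministic block test suffers. If one simply puts $a=1$ on right children and $b=1$ on left children, upper-triangular entries between different intervals enter with a negative sign. The per-interval random signs are the step that removes all such cross terms in expectation. The only remaining point to verify is that every same-interval pair at the chosen level really has the favourable sign, and the lowest-common-ancestor bookkeeping guarantees this.
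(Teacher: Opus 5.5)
Your proposal is correct and follows essentially the paper's argument: partition the entries by the depth of the lowest common dyadic ancestor, pigeonhole over the $O(\log T)$ levels, use independent signs per node to cancel cross terms in expectation, and use Rademacher signs $a_t=\epsilon_t$, $b_t=-\epsilon_t$ for the diagonal. The only difference is that the paper captures the lower and upper entries of a level in one test, with the two-sign assignment $(\epsilon_v,\eta_v)$ below the left child and $(\eta_v,-\epsilon_v)$ below the right child. You instead use two separate one-sign tests that zero out half the coordinates, which costs only a constant factor.
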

\begin{proof}
Put $q=\lceil\log_2 T\rceil$ and pad $H$ with zero rows and columns
to size $2^q\times2^q$. View the row and column indices as the leaves,
from left to right, of a complete binary tree of depth $q$,
with the root at depth $0$ and the leaves at depth $q$.
Partition the $2^{2q}$ entries of $H$ into $q+1$ parts: entry $H_{ts}$
belongs to part $\ell$ if the lowest common ancestor of leaves $t$
and $s$ has depth $\ell$. Thus the diagonal entries form part $q$.

\begin{center}
 \definecolor{dyadiczero}{HTML}{ED8A86}
\definecolor{dyadicone}{HTML}{DDA451}
\definecolor{dyadictwo}{HTML}{8BCB91}
\definecolor{dyadicthree}{HTML}{83B7E5}
\begin{tikzpicture}[x=0.5cm,y=-0.5cm,font=\small]
 \fill[dyadiczero] (0,0) rectangle (8,8);
 \foreach \k in {0,4}
   \fill[dyadicone] (\k,\k) rectangle ({\k+4},{\k+4});
 \foreach \k in {0,2,4,6}
   \fill[dyadictwo] (\k,\k) rectangle ({\k+2},{\k+2});
 \foreach \k in {0,...,7}
   \fill[dyadicthree] (\k,\k) rectangle ({\k+1},{\k+1});
 \foreach \k in {1,...,7} {
   \draw[white,line width=0.35pt] (\k,0) -- (\k,8);
   \draw[white,line width=0.35pt] (0,\k) -- (8,\k);
 }
 \draw[black!60,line width=0.4pt] (0,0) rectangle (8,8);
 \foreach \k in {0,...,7} {
   \node[font=\scriptsize] at ({\k+0.5},-0.45) {$\k$};
   \node[anchor=east,font=\scriptsize] at (-0.15,{\k+0.5}) {$\k$};
 }
 \node at (4,-1.15) {$s$};
 \node at (-1.05,4) {$t$};
 \node at (13.5,-0.8) {Example: $q=3$};
 \draw[black!55] (11.5,1) -- (13.5,0) -- (15.5,1);
 \foreach \x in {11.5,15.5}
   \draw[black!55] ({\x-1},2) -- (\x,1) -- ({\x+1},2);
 \foreach \x in {10.5,12.5,14.5,16.5}
   \draw[black!55] ({\x-0.5},3) -- (\x,2) -- ({\x+0.5},3);
 \fill[dyadiczero] (13.5,0) circle[radius=2pt];
 \foreach \x in {11.5,15.5}
   \fill[dyadicone] (\x,1) circle[radius=2pt];
 \foreach \x in {10.5,12.5,14.5,16.5}
   \fill[dyadictwo] (\x,2) circle[radius=2pt];
 \foreach \k in {0,...,7} {
   \fill[dyadicthree] ({10+\k},3) circle[radius=2pt];
   \node[font=\scriptsize] at ({10+\k},3.5) {$\k$};
 }
 \foreach \partIndex/\partColor/\partText in {
   0/dyadiczero/{ (root)},
   1/dyadicone/{},
   2/dyadictwo/{},
   3/dyadicthree/{ (diagonal)}} {
   \fill[\partColor] (10,{4.45+0.95*\partIndex})
      rectangle (10.6,{5.05+0.95*\partIndex});
   \node[anchor=west] at (10.8,{4.75+0.95*\partIndex})
      {Depth $\ell=\partIndex$\partText};
 }
\end{tikzpicture}

\end{center}

Let $M_\ell$ be the absolute mass of part $\ell$.
Since $\sum_{\ell=0}^q M_\ell=M$, some part has mass at least
$M/(q+1)=\Omega(M/\log T)$. It therefore suffices to construct,
for each $\ell$, random signs $a_t,b_s\in\{-1,1\}$ such that
\[
 \E\sum_{t,s}H_{ts}a_t b_s=M_\ell.
\]
This guarantees a realization whose objective
is at least the mass of the largest part.

\Needspace{8\baselineskip}
Fix $\ell<q$ and choose independent uniform signs
$\epsilon_v,\eta_v\in\{-1,1\}$ for all nodes $v$ at depth $\ell$.
For each leaf $t$ below $v$, set
\[
 (a_t,b_t)=
 \begin{cases}
  (\epsilon_v,\eta_v),&t\text{ lies below the left child of }v,\\
  (\eta_v,-\epsilon_v),&t\text{ lies below the right child of }v.
 \end{cases}
\]
If $H_{ts}$ belongs to part $\ell$, then $t$ and $s$ lie in opposite
child subtrees of the same node $v$. The product $a_t b_s$ is $+1$
when $t>s$ and $-1$ when $t<s$, so $H_{ts}a_t b_s=|H_{ts}|$.
All other products have expectation $0$: leaves in the same child
subtree use the independent signs $\epsilon_v,\eta_v$, and leaves
below different nodes use independent signs. Thus the expected
objective is $M_\ell$.
For the diagonal part, take independent Rademacher signs $\epsilon_t$
and set $a_t=\epsilon_t$, $b_t=-\epsilon_t$; the expected objective
is $-\sum_t H_{tt}=M_q$, completing the construction.
\end{proof}

Note that the discrete Hilbert transform matrix $K_T$ spreads its
$\Theta(T\log T)$ mass such that $M_\ell=O(T)$ for every $\ell$,
attaining the lower bound up to constants.

\paragraph{A lower bound for general test sets.}
Embedding the scalar construction gives a baseline lower bound that is
independent of dimension. In Section~\ref{sec:optimality-full-proof},
we use this bound to control an error term while proving the sharper
comparison with $\Gamma_{\mathcal A,\mathcal B}^{(T)}(K_T)$, which also
captures the dependence on the geometry of $\mathcal A$ and $\mathcal B$.
Let $\mathcal A,\mathcal B\subseteq\R^d$ be nonempty compact convex
sets with $0\in\mathcal A$ and $\mathcal B=-\mathcal B$, and put
\[
 r=\max_{a\in\mathcal A,\,b\in\mathcal B}\langle a,b\rangle.
\]
The scalar construction also gives, for every sign-compatible
$T\times T$ matrix $H$ of mass $M$ and $T\ge2$,
\begin{equation}\label{eq:optimality-vector-baseline}
 \Gamma_{\mathcal A,\mathcal B}^{(T)}(H)
 \ge\frac{r M}{2(1+\lceil\log_2 T\rceil)}
 \ge\frac{r M}{18\log T}.
\end{equation}
Indeed, choose $u\in\mathcal A$, $v\in\mathcal B$ with
$\langle u,v\rangle=r$. Replace the scalar signs $a_t,b_s$ by
$((1+a_t)/2)u$ and $b_sv$. These vectors are admissible, and
$\E b_s=0$ in every distribution above, so each expected pairing
is $r/2$ times its scalar value.

\subsection{Full proof of Theorem~\ref{thm:universal-harmonic-comparison}}
\label{sec:optimality-full-proof}

Throughout this subsection, write $K=K_T$.

\paragraph{Proof idea.}

To prove Theorem~\ref{thm:universal-harmonic-comparison}, we want to
show that, for every $H\in\mathcal H_T$,
\[
 \max_{\substack{a_{0:T-1}\in\mathcal A^T\\b_{0:T-1}\in\mathcal B^T}}
   \sum_{t,s<T}K_{ts}\langle a_t,b_s\rangle
 \le C\max_{\substack{a'_{0:T-1}\in\mathcal A^T\\b'_{0:T-1}\in\mathcal B^T}}
   \sum_{t,s<T}H_{ts}\langle a'_t,b'_s\rangle.
\]
The general proof relies on a stronger structural property of harmonic weights.
Fix vectors $a_0,\ldots,a_{T-1}\in\mathcal A$ and
$b_0,\ldots,b_{T-1}\in\mathcal B$ that attain the maximum for $K$.
We want to construct random vectors $a'_0,\ldots,a'_{T-1}\in\mathcal A$ and
$b'_0,\ldots,b'_{T-1}\in\mathcal B$ such that
\[
 \E\!\left[\sum_{t,s}H_{ts}\langle a'_t,b'_s\rangle\right]
 \ge\frac1C\sum_{t,s}K_{ts}\langle a_t,b_s\rangle
 \qquad\text{for every }H\in\mathcal H_T.
\]
For each fixed $H$, some choice of these vectors attains at least
this expectation, proving the desired comparison.

Since this bound must hold for every $H\in\mathcal H_T$, we consider
which choice of $H$ makes the expectation smallest for a given
distribution over $a'_{0:T-1}$ and $b'_{0:T-1}$.
Recall that $\mathcal H_T$ requires
$H_{ts}\ge0$ for $t>s$ and $H_{ts}\le0$ for $t\le s$, and fixes the
total absolute mass. Subject to these constraints, the minimizing $H$
puts all its mass on a single entry $H_{ts}$ minimizing
$\pm\E\langle a'_t,b'_s\rangle$, with $+$ for $t>s$ and $-$ for
$t\le s$. To prevent any one pair of
indices from limiting the bound, we aim to make the expected dot
products approximately equal up to sign:
\[
 \E\langle a'_t,b'_s\rangle\approx
 \begin{cases}
  \alpha,&t>s,\\
  -\alpha,&t\le s,
 \end{cases}
 \qquad\alpha>0.
\]

We will construct $a'_{0:T-1}$ and $b'_{0:T-1}$ as random weighted sums
of $a_{0:T-1}$ and $b_{0:T-1}$, respectively.
How can their expected dot product depend only on the sign of $t-s-\tfrac12$,
and not its magnitude? The key is harmonic weights, whose effect is
easiest to see with functions of a continuous variable.

For example, let $\phi:\R\to\R$ be an odd function. Suppose we are
tasked with constructing a new function $\phi'$ as a weighted integral
of values of $\phi$, such that $\phi'(k)$ depends only on the sign of
$k$. If the improper integral $I=\int_0^\infty\phi(v)\,dv/v$ converges,
then we can naturally construct $\phi'$ using harmonic weights:
\begin{equation}\label{eq:optimality-scale-invariance}
 \phi'(k):=\int_0^\infty\frac{\phi(ku)}u\,du
 =\operatorname{sgn}(k)\int_0^\infty\frac{\phi(v)}v\,dv
 =\operatorname{sgn}(k)I\qquad(k\ne0).
\end{equation}
Here we use oddness to factor out $\operatorname{sgn}(k)$, then
substitute $v=|k|u$. The harmonic weight $1/u$ cancels the change
of scale, leaving only the sign of $k$.

To make use of this identity, we have to turn our discrete lists of
vectors $a_0,\ldots,a_{T-1}$ and $b_0,\ldots,b_{T-1}$ into functions
$f$ and $g$ of a continuous variable, respectively. A natural way to
do this is to use pulse functions. The function $f$ will take the value
$a_t$ on a short interval of width $w$ for each $t$ and be zero
outside these intervals, and likewise $g$ takes the value $b_s$ on a
short interval for each $s$.
An antisymmetrized convolution of $f$ and $g$
produces the odd function $\phi$, which encodes the relevant dot
products $\langle a_t,b_s\rangle$.
The pulse locations align with the Hilbert denominators, so the
harmonic weights roughly give
\[
 I=\int_0^\infty\frac{\phi(v)}v\,dv
 \approx\frac1{32T}\sum_{t,s<T}K_{ts}\langle a_t,b_s\rangle
 =\frac{\Gamma_{\mathcal A,\mathcal B}^{(T)}(K)}{32T}.
\]

We then give a random construction of the vectors $a'_t,b'_s$ that
mirrors the harmonic averaging used to define $\phi'$, in such a way
that, for all $t,s$,
\[
 \E\langle a'_t,b'_s\rangle
 \approx\phi'(t-s-\tfrac12)
 =\operatorname{sgn}(t-s-\tfrac12)I.
\]
This gives the desired relationship between the expected dot products
and $\Gamma_{\mathcal A,\mathcal B}^{(T)}(K)$, with signs matching
those of the entries of $H$.

The above is a rough picture motivating our construction of $a'_t,b'_s$
and the proof of Theorem~\ref{thm:universal-harmonic-comparison}.
For the actual proof, we must also deal with error terms originating
from truncation and approximation. The weights $dv/v$ have infinite
total mass, whereas the distribution over $a'_t,b'_s$ must have total
mass $1$. Our actual construction therefore aligns the expected dot
products with a normalized, truncated integral. The resulting expectation is
\[
 \E\langle a'_t,b'_s\rangle
 =\frac1{\log(256T^2)}\int_{1/(2T)}^{128T}
       \frac{\phi((t-s-\tfrac12)v)}v\,dv,
\]
introducing an error in recovering
$\operatorname{sgn}(t-s-\tfrac12)I$. Additionally, the finite pulse
width and periodic copies introduce errors in recovering the Hilbert
objective. The proof below demonstrates how we can control all of
these errors.\footnote{Related uses of periodic Hilbert-transform bounds
and discrete Hilbert matrices appear in
Bourgain~\cite[Section~3]{bourgain1983remarks}.}

\begin{proof}[Proof of Theorem~\ref{thm:universal-harmonic-comparison}]
Let $r=\max_{a\in\mathcal A,\,b\in\mathcal B}\langle a,b\rangle$.
Fix vectors
$a_0,\ldots,a_{T-1}\in\mathcal A$ and
$b_0,\ldots,b_{T-1}\in\mathcal B$ that attain the maximum for $K$:
\[
 \sum_{t,s<T}\frac{\langle a_t,b_s\rangle}{2(t-s)-1}
 =\Gamma_{\mathcal A,\mathcal B}^{(T)}(K).
\]
\paragraph{Define pulse functions using $a,b$.}
For a visual aid, Appendix~\ref{app:pulse-visualization} illustrates
the construction of $f$, $g$, $\psi$, and $\phi$ in a scalar example.
Set $w=1/2$, $P=4T$, and define $P$-periodic pulse functions
$f:\R\to\mathcal A$, $g:\R\to\mathcal B$ as follows.
For each $t,s\in\{0,\ldots,T-1\}$, set
\[
 \begin{aligned}
  f(x)&=a_t &&\text{when }|x-2t|<w/2,\\
  g(x)&=b_s &&\text{when }|x-(2s+1)|<w/2.
 \end{aligned}
\]
Set both functions to $0$ elsewhere in $[-w/2,P-w/2]$, then repeat
this pattern with period $P$. That is, $f$ takes the value $a_t$ on
a pulse centered at $2t$ with width $w$, and $g$ takes the value $b_s$
on a pulse centered at $2s+1$ with width $w$. Note that the second
half of each period is empty, leaving a gap between repetitions of
the pulse configuration.

Thus $f(x)\in\mathcal A$ and
$g(x)\in\mathcal B$ for all $x\in\R$, and
$|\langle f(x),g(y)\rangle|\le r$ for all $x,y\in\R$.
The signed distance between the centers of the $a_t$ pulse in $f$
and the $b_s$ pulse in $g$ is exactly $2(t-s)-1$, the denominator of $K_{ts}$. The small width
keeps row and column supports separated.

Define
\begin{equation}\label{eq:optimality-correlation}
 \psi(v)=\frac1P\int_0^P\langle f(x+v),g(x)\rangle\,dx,
 \qquad \phi(v)=\frac{\psi(v)-\psi(-v)}2.
\end{equation}
The convolution is a finite sum of pulse-overlap functions, so it
is Lipschitz even though the pulses themselves are piecewise
constant. Moreover, $\phi$ is odd and periodic, $|\phi|\le r$, and
$\phi(v)=0$ for $|v|\le1/2$. Its mean over a period is zero, so
$\Phi(v)=\int_0^v\phi(z)\,dz$ is periodic with $\|\Phi\|_\infty\le r P$.
These facts give convergence of
\begin{equation}\label{eq:optimality-periodic-integral}
 I=\int_0^\infty\frac{\phi(v)}v\,dv.
\end{equation}
To see why this integral recovers the original objective
$\Gamma_{\mathcal A,\mathcal B}^{(T)}(K)$, consider
the contribution of an $a_t$ pulse in $f$ and a $b_s$ pulse in $g$
to $\psi(v)$. Before adding its periodic copies, this is
\[
 \frac{\langle a_t,b_s\rangle}{P}
       \bigl(w-|v-(2t-(2s+1))|\bigr)_+,
\]
where $(a)_+=\max\{a,0\}$. This is a signed triangular bump centered
at $2t-(2s+1)$, the denominator of the shifted discrete Hilbert
transform matrix entry $K_{ts}$. Its area is
$w^2\langle a_t,b_s\rangle/P$. Across this narrow bump,
$1/v$ is close to $K_{ts}$. Taking the odd part, integrating,
and summing over pulse pairs therefore gives
\[
 \int_0^\infty\frac{\phi(v)}v\,dv
 \approx\frac{w^2}{2P}\sum_{t,s<T}
            \frac{\langle a_t,b_s\rangle}{2(t-s)-1}
 =\frac{\Gamma_{\mathcal A,\mathcal B}^{(T)}(K)}{32T}.
\]
The next lemma controls the two
approximations: the finite pulse width and the periodic copies.

\begin{lemma}[The Hilbert objective survives the pulse embedding]
\label{lem:optimality-pulses}
The pulses just constructed satisfy
\begin{equation}\label{eq:optimality-pulse-integral-bound}
 I\ge\frac{\Gamma_{\mathcal A,\mathcal B}^{(T)}(K)}{32T}-\frac{r}{16}.
\end{equation}
\end{lemma}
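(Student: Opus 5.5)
We decompose $\psi$ into pulse-pair contributions and track the error separately. Write
\[
 \psi(v)=\frac1P\sum_{t,s<T}\sum_{m\in\mathbb Z}\langle a_t,b_s\rangle
 \bigl(w-|v-c_{ts}-mP|\bigr)_+,\qquad c_{ts}=2(t-s)-1 .
\]
Then $\phi(v)=\tfrac12(\psi(v)-\psi(-v))$ is a sum of odd pairs of triangular bumps centered at $\pm(c_{ts}+mP)$.

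Since $\phi$ is odd, we have $I=\tfrac12\int_{\R}\phi(v)/v\,dv$ as a principal value. This gives
\[
 I=\frac12\,\mathrm{p.v.}\int_{\R}\frac{\psi(v)}{v}\,dv .
\]
So we compute $\int\psi(v)/v$ bump by bump, handling the periodic sum by symmetric truncation $|m|\le N$, $N\to\infty$. This is legitimate because $\Phi$ is bounded and integration by parts gives convergence. No bump touches $0$, since $|c_{ts}+mP|\ge 1>w$.

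\textbf{Step 1: main ($m=0$) bumps.} For a single bump of area $w^2$ centered at $c$ with $|c|\ge1$, compute
\[
 \int\frac{(w-|v-c|)_+}{v}\,dv=\frac{w^2}{c}+\varepsilon(c).
\]
By convexity of $1/v$ on one side of $0$ and a second-order Taylor expansion, $\varepsilon(c)$ has the sign of $c$ and satisfies
\[
 |\varepsilon(c)|\le \frac{w^4}{12}\max_{|v-c|\le w}|v|^{-3}\le C w^4/|c|^3 .
\]
Summing, the main part of $I$ is
\[
 \frac{w^2}{2P}\sum_{t,s}\frac{\langle a_t,b_s\rangle}{c_{ts}}=\frac{\Gamma(K)}{32T}.
\]
The error is at most $\frac{r}{2P}\sum_{t,s}Cw^4/|c_{ts}|^3$. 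Since each odd $c$ occurs at most $T$ times and $\sum_{c\ \mathrm{odd}}|c|^{-3}<\infty$ (about $2.1$ with the worst-case $|v|\ge1/2$ bound, which is sharper in reality), this error is at most
\[
 \frac{rCw^4}{8}=O(r/128) .
\]

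\textbf{Step 2: periodic copies ($m\ne0$).} Here $|c_{ts}|\le 2T$ and $P=4T$, so pairing $m$ with $-m$ the contribution is
\[
 \frac{w^2}{c+mP}+\frac{w^2}{c-mP}=\frac{2cw^2}{c^2-m^2P^2},
\]
plus curvature errors of size $O(w^4/(mP)^3)$. The curvature errors summed over $t,s$ and $m$ give $O(rT^2/(PT^3))$, which is negligible. The main piece is bounded by
\[
 \frac{1}{2P}\sum_{t,s} r\,\frac{2|c|w^2}{(m^2-1/4)P^2},
\]
and with $\sum_{m\ge1}1/(m^2-1/4)=2$ this is at most
\[
 \frac{r w^2}{P^3}\,T^2\cdot 2T\cdot 4=\frac{r w^2\cdot 8T^3}{64T^3}=\frac{r}{32}.
\]
A more careful count, using that $|c|$ averages $O(T)$, also gives at most $r/32$.

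\textbf{Step 3: combine.} Adding the two error budgets gives a total of at most $r/16$, which yields $I\ge\Gamma(K)/(32T)-r/16$.

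The main obstacle is the constant bookkeeping, in particular getting the Step 1 curvature error and the Step 2 periodic sum to fit under $r/16$. If the crude bounds exceed the budget, I would exploit exact antisymmetry. The $m$ versus $-m$ pairing combined with the $t,s$ symmetry makes the aliasing term an exact periodic cotangent kernel, $\frac{\pi}{P}\cot(\pi c/P)-\frac1c$. Its size is $O(|c|/P^2)$ with an explicit small constant, since $|c|/P\le1/2$. The justification of the conditionally convergent interchange of sum and integral via bounded $\Phi$ is routine.
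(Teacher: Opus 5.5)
Your proposal is correct and is essentially the paper's proof, with the two approximations taken in the opposite order. The paper first rewrites $I$ exactly as a double integral against the periodic kernel $\frac{\pi}{P}\cot\frac{\pi z}{P}$, replaces that kernel by $1/z$ at cost $r/64$, and then replaces pulse pairs by their centers via $\frac1{c+q}=\frac1c-\frac q{c^2}+\frac{q^2}{c^2(c+q)}$ at cost $3r/64$. You instead do the curvature correction on every triangular bump, which also costs a negligible $O(r/T^2)$ for the periodic copies, and then sum the center reciprocals over $\pm m$; that gives the same aliasing bound $4|c|/P^2\le 2/P$. Two constants are slightly off, harmlessly: the Taylor remainder constant is $w^4/6$ rather than $w^4/12$ (absorbed into your $C$), and your aliasing count has an extra factor of $2$ (it is really at most $r/64$), so the total still fits under $r/16$.
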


Appendix~\ref{app:harmonic-pulses} proves the estimate by periodizing
the reciprocal kernel. The coefficient $1/(32T)$ is $w^2/(2P)$.
The two errors, from periodization and from replacing pulses by
their centers, are at most $r/64$ and $3r/64$, respectively.

\paragraph{Constructing $a'$ and $b'$.}
Our construction begins by sampling a scale variable $u>0$ with density
proportional to $1/u$ over a finite interval specified below. We first describe how to
construct $a'$ and $b'$ conditional on a fixed value of $u$.
Draw $x$ uniformly modulo $P$ and choose each of
the following two families with probability $1/2$:
\begin{equation}\label{eq:optimality-randomized-families}
 \begin{array}{lll}
  a'_t=f(x+tu),&b'_s=g(x+(s+\tfrac12)u),&\text{or}\\
  a'_t=f(x-tu),&b'_s=-g(x-(s+\tfrac12)u),&0\le t,s<T.
 \end{array}
\end{equation}
Every realization remains in $\mathcal A$ and $\mathcal B$: the central
symmetry of $\mathcal B$ allows us to negate the $b'_s$ values in the
second family.
For $k=t-s-1/2$,
\[
 \begin{aligned}
 \E\bigl[\langle a'_t,b'_s\rangle\mid u,\text{first family}\bigr]
 &=\frac1P\int_0^P
    \langle f(x+tu),g(x+(s+\tfrac12)u)\rangle\,dx\\
 &=\frac1P\int_0^P\langle f(y+ku),g(y)\rangle\,dy
 =\psi(ku),
 \end{aligned}
\]
using the substitution $y=x+(s+\tfrac12)u$.
\[
 \begin{aligned}
 \E\bigl[\langle a'_t,b'_s\rangle\mid u,\text{second family}\bigr]
 &=-\frac1P\int_0^P
    \langle f(x-tu),g(x-(s+\tfrac12)u)\rangle\,dx\\
 &=-\frac1P\int_0^P\langle f(y-ku),g(y)\rangle\,dy
 =-\psi(-ku),
 \end{aligned}
\]
using the substitution $y=x-(s+\tfrac12)u$.
Because each family is selected with probability $1/2$, we have
\[
 \E\bigl[\langle a'_t,b'_s\rangle\mid u\bigr]
 =\frac12\bigl(\psi(ku)-\psi(-ku)\bigr)=\phi(ku).
\]
The half-step shift makes $k>0$ for $t>s$ and $k<0$ for $t\le s$,
with $1/2\le|k|<T$.

\paragraph{A finite logarithmic average nearly removes the distance.}
Independently draw $u$ from $[1/(2T),128T]$ with density
$1/[u\log(256T^2)]$. Averaging over $u$ gives
\begin{equation}\label{eq:optimality-finite-scale-margin}
 \begin{aligned}
 \E\langle a'_t,b'_s\rangle
 &=\frac1{\log(256T^2)}\int_{1/(2T)}^{128T}\frac{\phi(ku)}u\,du\\
 &=\frac{\operatorname{sgn}(k)}{\log(256T^2)}
   \int_{|k|/(2T)}^{128T|k|}\frac{\phi(v)}v\,dv\\
 &=\frac{\operatorname{sgn}(k)}{\log(256T^2)}
   \left(I-\int_0^{|k|/(2T)}\frac{\phi(v)}v\,dv
           -\int_{128T|k|}^\infty\frac{\phi(v)}v\,dv\right),
 \end{aligned}
\end{equation}
using oddness and the substitution $v=|k|u$.
The first subtracted integral is $0$, since $|k|/(2T)\le1/2$
and $\phi(v)=0$ on $[0,1/2]$. To bound the upper tail, recall that
$\Phi(v)=\int_0^v\phi(z)\,dz$ satisfies $\|\Phi\|_\infty\le rP$.
For any $z>0$, integration by parts gives
\[
 \left|\int_z^\infty\frac{\phi(v)}v\,dv\right|
 =\left|-\frac{\Phi(z)}z+\int_z^\infty\frac{\Phi(v)}{v^2}\,dv\right|
 \le\frac{2\|\Phi\|_\infty}z\le\frac{2rP}z,
\]
where the boundary term at infinity vanishes because $\Phi$ is bounded.
Taking $z=128T|k|\ge64T=16P$ bounds the upper tail by $r/8$.
Thus every entry satisfies
\[
 \operatorname{sgn}(k)\E\langle a'_t,b'_s\rangle
 \ge\frac{I-r/8}{\log(256T^2)}
 \ge\frac{\Gamma_{\mathcal A,\mathcal B}^{(T)}(K)/(32T)-3r/16}
              {\log(256T^2)}.
\]
Fix any $H\in\mathcal H_T$. Multiplying the entrywise bounds by
$|H_{ts}|$ and summing gives
\[
 \begin{aligned}
 \Gamma_{\mathcal A,\mathcal B}^{(T)}(H)
 &\ge\E\!\left[\sum_{t,s<T}H_{ts}\langle a'_t,b'_s\rangle\right]\\
 &\ge\frac{\|H\|_1}{\log(256T^2)}
       \left(\frac{\Gamma_{\mathcal A,\mathcal B}^{(T)}(K)}{32T}
             -\frac{3r}{16}\right).
 \end{aligned}
\]
Since $\|H\|_1=T(2h_T-1)$, rearranging yields
\begin{equation}\label{eq:optimality-pulse-bound}
 \Gamma_{\mathcal A,\mathcal B}^{(T)}(K)
 \le\frac{32\log(256T^2)}{2h_T-1}
      \Gamma_{\mathcal A,\mathcal B}^{(T)}(H)+6rT.
\end{equation}

For $T\ge2$, we have $\log(256T^2)\le10\log T$ and
$2h_T-1\ge\log T$, while
\eqref{eq:optimality-vector-baseline}
gives
\[
 \Gamma_{\mathcal A,\mathcal B}^{(T)}(H)
 \ge\frac{rT(2h_T-1)}{18\log T}\ge\frac{rT}{18}.
\]
Thus \eqref{eq:optimality-pulse-bound} yields
\begin{equation}\label{eq:fixed-horizon-core-bound}
 \Gamma_{\mathcal A,\mathcal B}^{(T)}(K)
 \le320\,\Gamma_{\mathcal A,\mathcal B}^{(T)}(H)+6rT
 \le512\,\Gamma_{\mathcal A,\mathcal B}^{(T)}(H).
\end{equation}
This proves the theorem with $C=512$.
\end{proof}

\bibliographystyle{alpha}
\bibliography{main}
\appendix
\section{Sampled forecasts by repetition}
\label{app:sampling}

The distributional guarantee has a sampled counterpart with an additional
cost, provided each virtual forecast distribution is repeated sufficiently
many times.  The guarantee in this appendix is in expectation.  Throughout,
the adversary may observe the current announced distribution and all past
sampled forecasts, but it must select the current outcome without observing
the current sampled forecast.  More precisely, the current forecast is a
fresh draw from the announced distribution conditional on the history and
the current outcome.  This timing condition is stronger than the condition
needed for the pathwise distributional theorem.

We retain the main text's time convention: actual rounds are indexed by
$t=0,\ldots,T-1$, with announced distribution $\mu_t$ and outcome
$y_t$.  Write $\widehat p_t\sim\mu_t$ for the sampled forecast, to
distinguish it from the stored points $p_s$ in Algorithm~\ref{alg:harmonic}.
Throughout, an abbreviated time sum such as $\sum_{t<T}$ means
$\sum_{t=0}^{T-1}$.  The sampled objective is
\begin{equation}
 \CAL_{\cL,T}^{\mathrm{sample}}
 =\sum_p\left\|\sum_{t=0}^{T-1}
       \mathbf 1\{\widehat p_t=p\}(y_t-p)\right\|_{\cL},
 \label{eq:sampled-objective}
\end{equation}
where the sum is over distinct realized point values.  In particular,
repeated occurrences of the same point always share one bucket.

\paragraph{Why an additional argument is necessary.}
There is no support-free transfer from distributional to sampled
calibration.  For example, fix $T\ge4$ distinct points within $1/T$ of
$1/2$ in $[0,1]$, and announce their uniform distribution on every round.
Against independent fair binary outcomes, the expected distributional
error is at most $\sqrt T/2+1$: its pathwise value
is at most $|\sum_{t=0}^{T-1}y_t-T/2|+1$.  Independent sampling from these distributions
creates an expected $T(1-1/T)^{T-1}$ singleton buckets.  Each singleton
contributes at least $1/2-1/T$ to sampled calibration, so the expected
sampled error is linear in $T$.  Thus the total number of forecast labels,
and not merely the support size on an individual round, matters in a
sampling proof.

\subsection{A vector martingale bound for predictable labels}

As in the main text, $\|\cdot\|_2$ denotes the standard Euclidean norm
on $\R^d$.  Fix a norm-comparison constant $c_{\mathrm E}>0$ and define
\begin{equation}
 \|v\|_{\cL}\le c_{\mathrm E}\|v\|_2\quad(v\in\R^d),\qquad
 \delta _2=\sup_{p,y\in\cY}\|y-p\|_2,\qquad C_{\mathrm{samp}}=c_{\mathrm E}\delta_2.
 \label{eq:sampling-geometry}
\end{equation}
The constant $c_{\mathrm E}$ is finite by equivalence of norms, and
$\delta_2$ is finite because the forecast set $\cY$ is compact.
The constant $C_{\mathrm{samp}}$ controls only the sampling error.

\begin{lemma}[Predictably created forecast labels]
\label{lem:predictable-sampling}
Suppose at most $m$ distinct forecast points are ever announced during
$T$ rounds, where $m$ is a deterministic bound.  Each point is chosen
before any sample that can select it.  Under the timing condition above,
\begin{equation}
 \mathbb E\CAL_{\cL,T}^{\mathrm{sample}}
 \le \mathbb E\CAL_{\cL,T}(\mu,y)+C_{\mathrm{samp}}\sqrt{mT},
 \label{eq:global-sampling-bound}
\end{equation}
where
$\CAL_{\cL,T}(\mu,y)
=\sum_p\|\sum_{t<T}\mu_t(p)(y_t-p)\|_{\cL}$.
The labels may be random and may depend on past sampled forecasts.
\end{lemma}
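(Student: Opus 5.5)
We compare the sampled and distributional objectives bucket by bucket. Enumerate the (at most $m$) distinct forecast labels in order of creation as $q_1,q_2,\ldots,q_m$; since $m$ is deterministic, we may pad with dummy labels so that there are exactly $m$ label slots, each label being a predictable (history-measurable) point in $\cY$ fixed before any sample that can select it. For slot $j$ write
\[
 S_j=\sum_{t<T}\mathbf 1\{\widehat p_t=q_j\}(y_t-q_j),\qquad
 D_j=\sum_{t<T}\mu_t(q_j)(y_t-q_j).
\]
If two slots carry the same point value, we merge them; by the triangle inequality this only helps on the distributional side. On the sampled side, repeated values already share a bucket by definition, so we enumerate slots by distinct values. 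The triangle inequality then gives $\CAL^{\mathrm{sample}}_{\cL,T}\le\sum_j\|D_j\|_{\cL}+\sum_j\|S_j-D_j\|_{\cL}$, and $\sum_j\|D_j\|_{\cL}=\CAL_{\cL,T}(\mu,y)$. It therefore suffices to show $\E\sum_j\|S_j-D_j\|_{\cL}\le C_{\mathrm{samp}}\sqrt{mT}$.

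The key step is a martingale computation. Write
\[
 S_j-D_j=\sum_t\xi_{t,j},\qquad \xi_{t,j}=\bigl(\mathbf 1\{\widehat p_t=q_j\}-\mu_t(q_j)\bigr)(y_t-q_j).
\]
Let $\mathcal F_t$ be the $\sigma$-field generated by the history before the round-$t$ sample, including $\mu_t$, $y_t$, and every label already created. The timing condition says $\widehat p_t$ is a fresh draw from $\mu_t$ given $\mathcal F_t$. The factor $y_t-q_j$ is $\mathcal F_t$-measurable: if $q_j$ is created after round $t$, the indicator and $\mu_t(q_j)$ both vanish, and otherwise $q_j$ is already fixed. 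Hence $\E[\xi_{t,j}\mid\mathcal F_t]=0$, and $(\xi_{t,j})_t$ is a vector martingale difference sequence. We pass to Euclidean norms via $\|\cdot\|_{\cL}\le c_{\mathrm E}\|\cdot\|_2$ and apply Jensen to get $\E\|S_j-D_j\|_2\le(\E\|S_j-D_j\|_2^2)^{1/2}$. Orthogonality of increments gives
\[
 \E\|S_j-D_j\|_2^2=\sum_t\E\|\xi_{t,j}\|_2^2\le\delta_2^2\sum_t\E\,\mathrm{Var}_t\bigl(\mathbf 1\{\widehat p_t=q_j\}\bigr)\le\delta_2^2\sum_t\E\,\mu_t(q_j).
\]

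We finish with Cauchy--Schwarz across slots:
\[
 \sum_j\sqrt{\textstyle\sum_t\E\mu_t(q_j)}\le\sqrt m\,\Bigl(\textstyle\sum_t\sum_j\E\mu_t(q_j)\Bigr)^{1/2}\le\sqrt{mT},
\]
using $\sum_j\mu_t(q_j)\le1$. Multiplying by $c_{\mathrm E}\delta_2=C_{\mathrm{samp}}$ yields \eqref{eq:global-sampling-bound}.

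The main obstacle is the measurability bookkeeping in the martingale step. The labels are random and are created adaptively from past samples, so "slot $j$" must be defined predictably, for instance as the $j$-th distinct point ever announced. We must also check that the cross terms vanish even though $q_j$ may be created at a random time. We would handle this by setting $\xi_{t,j}=0$ on the event that slot $j$ has not been created by round $t$; this event is $\mathcal F_t$-measurable, so the martingale property is preserved. Identifying slots that share a value is done by always using distinct values as slots from the start, which keeps $m$ as the count of distinct announced points.
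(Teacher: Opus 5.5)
Your proposal is correct and follows essentially the paper's proof. It assigns a permanent slot to each distinct point in order of first announcement, with zero increments before that slot is born, and writes each sampled residual pathwise as the distributional residual plus a vector martingale. Euclidean orthogonality then bounds the second moments, and Cauchy--Schwarz runs over the deterministic $m$ slots. The only cosmetic difference is that you apply Jensen slot by slot and then Cauchy--Schwarz to the expectations, whereas the paper applies Cauchy--Schwarz pathwise and then Jensen once; both yield $C_{\mathrm{samp}}\sqrt{mT}$.
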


\begin{proof}
Assign each newly announced distinct point a permanent slot in order of
first announcement, using a fixed measurable order within a round.
The slots are $k=1,\ldots,m$.  A point already seen retains its slot;
thus equal point values are merged as they appear.  Fix an anchor
$p_{\mathrm{init}}\in\cY$.  Let $p_{t,k}$ be the value of slot $k$ at round $t$
if that slot has been born, and let $p_{t,k}=p_{\mathrm{init}}$ before its birth.
Let $w_{t,k}$ be its announced probability, with $w_{t,k}=0$ before birth.
Once born, a slot's point value never changes.

For precision about conditioning, let $\mathcal F_t$ contain the entire
history before the current outcome is selected, including the announced
distribution and all labels born by that announcement.  Set
$\mathcal G_t=\mathcal F_t\vee\sigma(y_t)$, and take
$\mathcal F_{t+1}$ to contain $\mathcal G_t$ and the current sampled
forecast.  Writing $I_t$ for the sampled slot, the timing condition is
\[
 \mathbb P(I_t=k\mid\mathcal G_t)=w_{t,k}.
\]
The outcome may depend on all earlier samples.  Define the $\R^d$-valued
increments
\[
 \xi_{t,k}=(\mathbf 1\{I_t=k\}-w_{t,k})(y_t-p_{t,k}),
 \qquad \Xi_k=\sum_{t<T}\xi_{t,k}.
\]
Before birth these increments are zero, and otherwise their point value
is already known.  Consequently
\begin{align*}
 \mathbb E[\xi_{t,k}\mid\mathcal G_t]&=0,\\
 \mathbb E[\|\xi_{t,k}\|_2^2\mid\mathcal G_t]
 &=w_{t,k}(1-w_{t,k})\|y_t-p_{t,k}\|_2^2.
\end{align*}
For $i<t$, the increment $\xi_{i,k}$ is $\mathcal G_t$-measurable, so
$\mathbb E\langle \xi_{i,k},\xi_{t,k}\rangle=0$.  Euclidean martingale
orthogonality and $\sum_kw_{t,k}=1$ therefore give
\begin{align}
 \sum_{k=1}^m\mathbb E\|\Xi_k\|_2^2
 &=\sum_{t<T}\sum_{k=1}^m
   \mathbb E\left[w_{t,k}(1-w_{t,k})
                   \|y_t-p_{t,k}\|_2^2\right]\notag\\
 &\le \delta _2^2T.
 \label{eq:slot-quadratic-variation}
\end{align}
In particular, no conditioning on the terminal collection of labels
has been used.  Cauchy--Schwarz over the fixed $m$ slots, followed by
Jensen's inequality, yields
\[
 \mathbb E\sum_{k=1}^m\|\Xi_k\|_{\cL}
 \le c_{\mathrm E}\,\mathbb E\sqrt{m\sum_{k=1}^m\|\Xi_k\|_2^2}
 \le c_{\mathrm E}\sqrt{m\sum_{k=1}^m\mathbb E\|\Xi_k\|_2^2}
 \le C_{\mathrm{samp}}\sqrt{mT}.
\]
Pathwise, for each born slot the sampled residual equals its
distributional residual plus $\Xi_k$.  Slots not born contribute zero.
The triangle inequality proves \eqref{eq:global-sampling-bound}.
Alternatively, an algorithm may maintain separate slots for repeated
values.  Apply the variance argument to these slots and then sum their
discrepancies within each point bucket; this merging contracts the sum
of discrepancy norms.  Such a proof requires a deterministic bound
on the total number of slots, including the ones never sampled.
\end{proof}

\subsection{The repeated harmonic algorithm}

Run Algorithm~\ref{alg:harmonic} for $n\ge2$ virtual rounds, indexed by
$u=0,\ldots,n-1$, and choose an integer repetition count $S\ge1$.
Write $\overline\mu_u$ and $\overline y_u$ for its distribution and outcome.
Virtual round $u$ occupies actual rounds $t=uS,\ldots,(u+1)S-1$,
so the actual horizon is $T=nS$.  On every round in this block, announce
$\mu_t=\overline\mu_u$ and draw a fresh forecast from this distribution.
After the block, return the average outcome
\begin{equation}
 \overline y_u=\frac1S\sum_{t=uS}^{(u+1)S-1}y_t
 \label{eq:virtual-block-outcome}
\end{equation}
to the virtual algorithm.  Convexity ensures that
$\overline y_u\in\cY$.  In particular, $\overline\mu_u$ uses only
averages from blocks completed before virtual round $u$.
Within a block, the distribution remains fixed even if the adversary
adapts to past sampled forecasts.  The virtual initialization is
$\overline\mu_0=\delta_{p_{\mathrm{init}}}$; the stored virtual points are
$\overline p_u$, obtained by applying Algorithm~\ref{alg:harmonic} to the
averages $\overline y_0,\ldots,\overline y_u$.  The virtual run announces
only $p_{\mathrm{init}},\overline p_0,\ldots,\overline p_{n-2}$, hence at
most $n$ distinct forecast points.

\begin{theorem}[Expected sampled calibration]
\label{thm:sampled-harmonic}
Suppose there is a constant $C_{\mathrm{cal}}\ge0$ such that the harmonic
algorithm satisfies, for every virtual outcome sequence and every $n\ge2$,
\begin{equation}
 \frac1n\CAL_{\cL,n}(\overline\mu,\overline y)
 \le\frac{C_{\mathrm{cal}}}{h_n}.
 \label{eq:virtual-error-assumption}
\end{equation}
Then the repeated algorithm satisfies, at the real horizon $T=nS$,
\begin{equation}
 \frac1T\mathbb E\CAL_{\cL,T}^{\mathrm{sample}}
 \le\frac{C_{\mathrm{cal}}}{h_n}+\frac{C_{\mathrm{samp}}}{\sqrt S}.
 \label{eq:sampled-harmonic-rate}
\end{equation}
In particular, for every $\varepsilon>0$, the choices
\begin{equation}
 n=\max\{2,\lceil \exp(4C_{\mathrm{cal}}/\varepsilon)\rceil\},\qquad
 S=\max\{1,\lceil4C_{\mathrm{samp}}^2/\varepsilon^2\rceil\},\qquad T=nS
 \label{eq:sampled-horizon-choice}
\end{equation}
give
\begin{equation}
 \mathbb E\CAL_{\cL,T}^{\mathrm{sample}}
 \le\varepsilon T,
 \qquad T\le n(1+4C_{\mathrm{samp}}^2/\varepsilon^2).
 \label{eq:sampled-final-horizon}
\end{equation}
One may take $C_{\mathrm{cal}}=9\gamma(\cY,\cL)$: the proof of
Theorem~\ref{thm:general-geometric-rate}, before replacing $h_n$ by a
logarithm, gives
\[
 \frac1n\CAL_{\cL,n}
 \le\frac3n\Gamma_{\widetilde\cY,\cL^\circ}^{(n)}(\widehat K_n)
 \le\frac{9\gamma(\cY,\cL)}{h_n},
\]
using Lemma~\ref{lem:harmonic-kernel-comparison}.
\end{theorem}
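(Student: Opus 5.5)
The plan is to compare the sampled error of the repeated algorithm with the distributional error of the actual announced sequence via Lemma~\ref{lem:predictable-sampling}. That distributional error is then identified with $S$ times the virtual calibration error, which assumption \eqref{eq:virtual-error-assumption} controls.

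\textbf{Step 1 (label count and sampling error).} The repeated algorithm announces only the points $p_{\mathrm{init}},\overline p_0,\ldots,\overline p_{n-2}$. Each $\overline p_u$ is a function of block averages from blocks completed before any round whose distribution can select it. So the labels are predictably created and their number is deterministically bounded by $m=n$. The timing condition holds because each forecast is a fresh draw from $\mu_t=\overline\mu_u$ given the history and $y_t$. Lemma~\ref{lem:predictable-sampling} then gives
\[
 \mathbb E\CAL_{\cL,T}^{\mathrm{sample}}\le\mathbb E\CAL_{\cL,T}(\mu,y)+C_{\mathrm{samp}}\sqrt{nT}.
\]
Since $T=nS$, we have $\sqrt{nT}=T/\sqrt S$, which produces the second term of \eqref{eq:sampled-harmonic-rate} after dividing by $T$.

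\textbf{Step 2 (distributional error equals scaled virtual error).} This holds pathwise. For every point $p$, the announced weight $\mu_t(p)=\overline\mu_u(p)$ is constant over block $u$. Hence
\[
 \sum_{t<T}\mu_t(p)(y_t-p)=\sum_{u<n}\overline\mu_u(p)\sum_{t=uS}^{(u+1)S-1}(y_t-p)=S\sum_{u<n}\overline\mu_u(p)(\overline y_u-p).
\]
Taking $\|\cdot\|_{\cL}$ and summing over $p$ gives $\CAL_{\cL,T}(\mu,y)=S\,\CAL_{\cL,n}(\overline\mu,\overline y)$. Assumption \eqref{eq:virtual-error-assumption} then bounds this by $SnC_{\mathrm{cal}}/h_n=TC_{\mathrm{cal}}/h_n$ for every realization, hence also in expectation. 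This uses that $\overline y_u\in\cY$, which holds by convexity, so that the virtual sequence is a legal outcome sequence even though it depends adaptively on samples. Combining with Step 1 gives \eqref{eq:sampled-harmonic-rate}.

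\textbf{Step 3 (parameter choices).} Use $h_n\ge\frac12\log n$. With $n\ge\exp(4C_{\mathrm{cal}}/\varepsilon)$ we get $C_{\mathrm{cal}}/h_n\le 2C_{\mathrm{cal}}/\log n\le\varepsilon/2$. With $S\ge4C_{\mathrm{samp}}^2/\varepsilon^2$ we get $C_{\mathrm{samp}}/\sqrt S\le\varepsilon/2$. The bound $S\le1+4C_{\mathrm{samp}}^2/\varepsilon^2$ gives the horizon estimate in \eqref{eq:sampled-final-horizon}.

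\textbf{Step 4 (the value of $C_{\mathrm{cal}}$).} Chain Theorem~\ref{thm:matrix-calibration}, giving $\CAL\le3\Gamma(\widehat K_n)$, with Lemma~\ref{lem:harmonic-kernel-comparison}, giving $h_n\Gamma(\widehat K_n)\le3\Gamma(K_n)$. Then apply $\Gamma(K_n)\le\gamma n$. The result is $\CAL_{\cL,n}/n\le9\gamma/h_n$.

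\textbf{Main obstacle.} The only delicate point is Step 1: verifying the hypotheses of Lemma~\ref{lem:predictable-sampling}. The adversary adapts to past samples, so the virtual points are random. One has to check that the slot bound $n$ is deterministic and that every point is fixed before it can be sampled. Both follow from the block structure, since $\overline\mu_u$ depends only on completed blocks.
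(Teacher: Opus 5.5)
Your proposal is correct and follows the paper's proof essentially step for step. You use the pathwise blockwise identity $\CAL_{\cL,T}(\mu,y)=S\,\CAL_{\cL,n}(\overline\mu,\overline y)$, a single global application of Lemma~\ref{lem:predictable-sampling} with the deterministic slot bound $m=n$ (so that $\sqrt{nT}=T/\sqrt S$), the parameter choices via $2h_n\ge\log n$ and $\max\{1,\lceil x\rceil\}\le1+x$, and the $3\cdot3=9$ chain from Theorem~\ref{thm:matrix-calibration} and Lemma~\ref{lem:harmonic-kernel-comparison}.
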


\begin{proof}
Write $\mu_t=\overline\mu_u$ on block $u$.  For every point value $p$, exactly
\[
 \sum_{t<T}\mu_t(p)(y_t-p)
 =S\sum_{u<n}\overline\mu_u(p)(\overline y_u-p).
\]
Thus the real distributional calibration error equals $S$ times the
virtual distributional error.  Although the block averages may be
random and depend on within-block samples, the virtual guarantee is
pathwise for every outcome sequence, so its use here is valid.
The virtual run announces at most $n$ distinct points, including
$p_{\mathrm{init}}$, each known before it can be sampled.  Lemma~\ref{lem:predictable-sampling},
applied once over all $T$ real rounds, adds at most $C_{\mathrm{samp}}\sqrt{nT}$.
Since $T=nS$, division by $T$ proves
\eqref{eq:sampled-harmonic-rate}.

The elementary inequality $2h_n\ge\log n$ and
\eqref{eq:sampled-horizon-choice} bound the virtual term by
$\varepsilon/2$.  The same choices bound the sampling term by
$\varepsilon/2$.  Finally,
$\max\{1,\lceil x\rceil\}\le1+x$ for $x\ge0$, proving the stated
integer horizon bound, including the degenerate case $C_{\mathrm{samp}}=0$.
\end{proof}

The global application of Lemma~\ref{lem:predictable-sampling} is
essential for the stated cost.  Applying a separate support bound within
each block and adding the results would instead give the valid but
larger normalized cost $C_{\mathrm{samp}}\sqrt{n/S}$.

\begin{corollary}[The three principal sampled horizons]
\label{cor:sampled-examples}
Let $0<\varepsilon\le1$. Choose an absolute constant $C_0>0$ such that
$4C_{\mathrm{cal}}\le C_0$ for the Euclidean unit ball and
$2C_{\mathrm{cal}}\le C_0\log(2d)$ both for the cube with the infinity
norm and for the simplex with the one norm, using the geometric bounds
in Section~\ref{sec:geometry}. Set
\[
 n_{\mathrm{ball}}=\left\lceil\exp\left(\frac{C_0}{\varepsilon}\right)
            \right\rceil,
 \qquad
 n_{\mathrm{poly}}=\left\lceil(2d)^{2C_0/\varepsilon}\right\rceil,
\]
increasing $C_0$ if necessary so that $n_{\mathrm{poly}}\ge2$.  Expected sampled
calibration at most $\varepsilon$ is attained at the following real
horizons:
\begin{align}
 T_{B_2^d,2}^{\mathrm{sample}}
 &=n_{\mathrm{ball}}\left\lceil\frac{16}{\varepsilon^2}\right\rceil
 \le n_{\mathrm{ball}}\left(1+\frac{16}{\varepsilon^2}\right),
 \label{eq:sampled-ball-horizon}\\
 T_{[-1,1]^d,\infty}^{\mathrm{sample}}
 &=n_{\mathrm{poly}}\left\lceil\frac{16d}{\varepsilon^2}\right\rceil
 \le n_{\mathrm{poly}}\left(1+\frac{16d}{\varepsilon^2}\right),
 \label{eq:sampled-cube-horizon}\\
 T_{\Delta_d,1}^{\mathrm{sample}}
 &=n_{\mathrm{poly}}\left\lceil\frac{8d}{\varepsilon^2}\right\rceil
 \le n_{\mathrm{poly}}\left(1+\frac{8d}{\varepsilon^2}\right).
 \label{eq:sampled-simplex-horizon}
\end{align}
Here these symbols denote particular sufficient horizons of the repeated
algorithm.  In particular their asymptotic upper bounds are
$\exp(C/\varepsilon)$, $(2d)^{C/\varepsilon}$, and
$(2d)^{C/\varepsilon}$, respectively, for an absolute constant $C$.
\end{corollary}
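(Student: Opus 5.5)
The plan is to apply Theorem~\ref{thm:sampled-harmonic} in each of the three geometries. For each geometry I need two inputs: a constant $C_{\mathrm{cal}}$ with $\CAL_{\cL,n}/n\le C_{\mathrm{cal}}/h_n$, and the sampling constant $C_{\mathrm{samp}}=c_{\mathrm E}\delta_2$ from \eqref{eq:sampling-geometry}.

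\textbf{Calibration constant.} For $C_{\mathrm{cal}}$ I take $9\gamma(\cY,\cL)$, as allowed by Theorem~\ref{thm:sampled-harmonic}, and then apply Lemma~\ref{lem:three-bodies}. For the ball this gives $\gamma\le\pi/2$. For the cube and the simplex, the centered sets $\widetilde\cY$ sit inside $B_\infty^d$ (respectively $2B_1^d$). So $\gamma$ is $O(\log d)$ by monotonicity of $\Gamma$ in $\mathcal A$, since the supremum is over a larger set, together with homogeneity. This justifies choosing an absolute $C_0$ as in the statement. With $C_0$ fixed, the virtual horizon $n=\max\{2,\lceil\exp(4C_{\mathrm{cal}}/\varepsilon)\rceil\}$ from \eqref{eq:sampled-horizon-choice} is dominated as follows:
\begin{itemize}
\item for the ball, by $n_{\mathrm{ball}}$, since $4C_{\mathrm{cal}}\le C_0$;
\item for the cube and simplex, by $n_{\mathrm{poly}}$, since $\exp(4C_{\mathrm{cal}}/\varepsilon)\le\exp(2C_0\log(2d)/\varepsilon)=(2d)^{2C_0/\varepsilon}$.
\end{itemize}
Enlarging $n$ is harmless because the virtual error bound $C_{\mathrm{cal}}/h_n$ only decreases in $n$.

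\textbf{Sampling constant.} For $C_{\mathrm{samp}}$ I compute the Euclidean diameter and the norm-comparison constant in each case:
\begin{itemize}
\item ball: $c_{\mathrm E}=1$ and $\delta_2=2$, so $C_{\mathrm{samp}}=2$ and $4C_{\mathrm{samp}}^2/\varepsilon^2=16/\varepsilon^2$;
\item cube $[-1,1]^d$ with $\|\cdot\|_\infty$: $c_{\mathrm E}=1$ and $\delta_2=2\sqrt d$, giving $16d/\varepsilon^2$;
\item simplex with $\|\cdot\|_1$: $c_{\mathrm E}=\sqrt d$ and $\delta_2=\sqrt2$, giving $8d/\varepsilon^2$.
\end{itemize}
These match the repetition counts $S$ in \eqref{eq:sampled-ball-horizon}--\eqref{eq:sampled-simplex-horizon}.

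\textbf{Conclusion.} Theorem~\ref{thm:sampled-harmonic} with these $(n,S)$ bounds the expected sampled error by $\varepsilon/2+\varepsilon/2$. The inequality $\lceil x\rceil\le1+x$ gives the stated upper bounds. Since $\varepsilon\le1$, $d\le(2d)^{1/\varepsilon}$, and $1/\varepsilon^2\le e^{2/\varepsilon}$, each of $1+16/\varepsilon^2$ and $1+O(d/\varepsilon^2)$ is absorbed into $\exp(C/\varepsilon)$ or $(2d)^{C/\varepsilon}$ after enlarging the constant. This yields the asymptotic forms.

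The main obstacle is the bookkeeping of the constant $C_0$, in particular making sure that the centered cube and simplex fit the hypotheses of Lemma~\ref{lem:three-bodies}, including the factor $2$ for the simplex. This requires monotonicity of $\Gamma$ in $\mathcal A$ and the linear dependence of $\Gamma$ on rescaling $\mathcal A$. Both facts follow directly from definition \eqref{eq:matrix-gamma}.
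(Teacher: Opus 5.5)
Your proposal is correct and follows the paper's proof. You instantiate Theorem~\ref{thm:sampled-harmonic} with $C_{\mathrm{cal}}=9\gamma(\cY,\cL)$, the bounds of Lemma~\ref{lem:three-bodies}, and the same values of $c_{\mathrm E}$ and $\delta_2$, then absorb the polynomial factors in $d$ and $1/\varepsilon$ into the exponent; your monotonicity--homogeneity step for the centered simplex and your domination of $n$ by $n_{\mathrm{ball}}$ and $n_{\mathrm{poly}}$ are the bookkeeping the paper leaves implicit, though you should phrase the geometric bound as $O(\log(2d))$ rather than $O(\log d)$ and read $\delta_2=\sqrt2$ as an upper bound, so that $d=1$ is covered ($\gamma([-1,1],[-1,1])\le\pi/2$ even though Lemma~\ref{lem:three-bodies} is stated for $d\ge2$, and $\Delta_1$ is a single point, as the paper notes).
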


\begin{proof}
For the Euclidean unit ball, take $c_{\mathrm E}=1$, $\delta _2=2$, and $C_{\mathrm{samp}}=2$, and use
$\gamma(B_2^d,B_2^d)\le\pi/2$.  For the cube, take $c_{\mathrm E}=1$ and
$\delta _2=2\sqrt d$, giving $C_{\mathrm{samp}}=2\sqrt d$.  For the simplex, the inequalities
$\|v\|_1\le\sqrt d\|v\|_2$ and
$\operatorname{diam}_2(\Delta_d)\le\sqrt2$ give
$C_{\mathrm{samp}}\le\sqrt{2d}$.  The displayed bounds follow from
Theorem~\ref{thm:sampled-harmonic}; the simplex at $d=1$ is a singleton
and in fact has zero error.  For $0<\varepsilon\le1$, the polynomial
factors in $d$ and $1/\varepsilon$ are absorbed by increasing the
absolute exponent constant, using $\log(1/\varepsilon)\le1/\varepsilon$
and $\log(2d)\ge\log2$.
\end{proof}

These are expectation statements at the specified product horizons.
No high-probability or anytime sampled guarantee is implicit.  The
Euclidean bound is dimension independent; the displayed Euclidean
embedding argument does not give dimension independence for every
norm on $\R^d$.

\section{Harmonic normalization and pulse estimates}
\label{app:harmonic-optimality}

We give the normalization calculations and analytic estimates used in
Sections~\ref{sec:harmonic} and~\ref{sec:harmonic-optimality}.
Throughout, $\mathcal A,\mathcal B\subseteq\R^d$ are nonempty compact convex sets,
$\mathcal B=-\mathcal B$, and
$r=\max_{a\in\mathcal A,\,b\in\mathcal B}\langle a,b\rangle$.
Subsection~\ref{app:harmonic-pulses} also assumes $0\in\mathcal A$,
as in Section~\ref{sec:harmonic-optimality}.

\subsection{Mass identities and normalization corrections}
\label{app:harmonic-normalization}

Set $h_0=0$, the empty-sum convention, for the identities below.
The positive lower row sums of $K_T$ are $h_t$, and its
negative upper column sums have magnitudes $h_{s+1}$. Interchanging
finite sums gives
\begin{equation}\label{eq:harmonic-sum-identities}
 \sum_{j=0}^{T-1}h_j=(T-\tfrac12)h_T-T/2,\qquad
 \sum_{j=1}^{T}h_j=(T+\tfrac12)h_T-T/2.
\end{equation}
For example, the first sum is
$\sum_{k=0}^{T-1}(T-1-k)/(2k+1)$; the second differs by $h_T$.
Adding the identities proves
$\|K_T\|_1=T(2h_T-1)$, and hence $K_T\in\mathcal H_T$.
Normalizing each positive lower row and each negative upper column
separately gives $\widehat K_T$ in
\eqref{eq:harmonic-algorithm-kernel}. There are $T-1$ such rows
and $T$ such columns, so its total absolute mass is $2T-1$.
The same mass identity holds for every member of
$\widehat{\mathcal H}_T$ and proves the scaled inclusion in
\eqref{eq:optimality-feasible-inclusion}.
We also use
\[
 2h_T-1=1+\sum_{j=1}^{T-1}\frac2{2j+1}
 \ge\sum_{j=1}^T\frac1j\ge\log T.
\]

For matrices of the same size, the bound $|\langle a,b\rangle|\le r$
and central symmetry of $\mathcal B$ give
\begin{equation}\label{eq:optimality-gamma-lipschitz}
 |\Gamma_{\mathcal A,\mathcal B}^{(T)}(A)-\Gamma_{\mathcal A,\mathcal B}^{(T)}(B)|
 \le\Gamma_{\mathcal A,\mathcal B}^{(T)}(A-B)\le r\sum_{t,s<T}|A_{ts}-B_{ts}|.
\end{equation}
This turns a bound on correction mass into a bound on the objective.

\begin{proposition}[Normalization correction]
\label{lem:optimality-normalization}\label{prop:harmonic-matrix}
For every $T\ge1$,
\begin{equation}\label{eq:optimality-two-sided-normalization}
 \left|h_T\Gamma_{\mathcal A,\mathcal B}^{(T)}(\widehat K_T)
       -\Gamma_{\mathcal A,\mathcal B}^{(T)}(K_T)\right|\le r(T-h_T).
\end{equation}
\end{proposition}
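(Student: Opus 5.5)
Since $\Gamma^{(T)}_{\mathcal A,\mathcal B}$ is positively homogeneous, $h_T\Gamma^{(T)}_{\mathcal A,\mathcal B}(\widehat K_T)=\Gamma^{(T)}_{\mathcal A,\mathcal B}(h_T\widehat K_T)$. By the Lipschitz bound \eqref{eq:optimality-gamma-lipschitz}, it then suffices to show
\[
 \sum_{t,s<T}\bigl|(h_T\widehat K_T)_{ts}-(K_T)_{ts}\bigr|\le T-h_T .
\]
The whole proof is this entrywise mass computation, split into the strictly lower part and the part on and above the diagonal.

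\textbf{Lower part ($s<t$).} Here $(h_T\widehat K_T)_{ts}=\frac{h_T}{h_t}(K_T)_{ts}$. Since $h_T\ge h_t$, each entry of row $t$ is multiplied by $h_T/h_t\ge1$. Row $t$ of the lower part of $K_T$ has mass $h_t$, so the row discrepancy is $(h_T/h_t-1)h_t=h_T-h_t$. Summing over $1\le t<T$ gives $\sum_{t=1}^{T-1}(h_T-h_t)$.

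\textbf{Upper part ($t\le s$).} Here $(h_T\widehat K_T)_{ts}=\frac{h_T}{h_{s+1}}(K_T)_{ts}$, and column $s$ of this part of $K_T$ has mass $h_{s+1}$. The column discrepancy is therefore $h_T-h_{s+1}$, and summing over $0\le s<T$ gives $\sum_{j=1}^{T}(h_T-h_j)$.

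\textbf{Total.} Using \eqref{eq:harmonic-sum-identities} with $h_0=0$, the lower part equals
\[
 (T-1)h_T-\bigl[(T-\tfrac12)h_T-T/2\bigr]=T/2-h_T/2,
\]
and the upper part equals
\[
 Th_T-\bigl[(T+\tfrac12)h_T-T/2\bigr]=T/2-h_T/2 .
\]
Their sum is exactly $T-h_T$, which gives \eqref{eq:optimality-two-sided-normalization}.

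The only point needing care is the sign and direction of the comparison: whether $h_T\widehat K_T$ dominates $K_T$ entrywise in absolute value, so that the absolute differences telescope to the mass differences. This holds because $h_t$ and $h_{s+1}$ are at most $h_T$, the latter for $s\le T-1$. The identities \eqref{eq:harmonic-sum-identities} are taken from earlier in this appendix. I do not expect a genuine obstacle here. Note that the symmetry $\mathcal B=-\mathcal B$ is already built into \eqref{eq:optimality-gamma-lipschitz}, which is what permits bounding the difference of two $\Gamma$ values by $\Gamma$ of the difference matrix.
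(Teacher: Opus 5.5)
Your proof is correct and matches the paper's: the paper writes $\widehat K_T=K_T/h_T+C_T-G_T^{\mathsf T}$, obtains the correction mass $T/h_T-1$ from the same lower-row and upper-column sums via \eqref{eq:harmonic-sum-identities}, applies \eqref{eq:optimality-gamma-lipschitz}, and multiplies by $h_T$. You instead scale by $h_T$ before applying the Lipschitz bound, which is an immaterial reordering.
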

\begin{proof}
Write $h=h_T$ and
\[
 \widehat K_T=K_T/h+C_T-G_T^{\mathsf T},
\]
where the nonnegative matrices $C_T,G_T$ have entries
\[
 (C_T)_{ts}=\frac{1/h_t-1/h}{2(t-s)-1}\quad(s<t),\qquad
 (G_T)_{st}=\frac{1/h_{s+1}-1/h}{2(s-t)+1}\quad(t\le s),
\]
and zero entries elsewhere. Their supports in the displayed
decomposition are disjoint. By \eqref{eq:harmonic-sum-identities},
\[
 \sum_{t,s}(C_T)_{ts}
 =\sum_{t=1}^{T-1}(1-h_t/h)=\frac{T}{2h}-\frac12,
 \qquad
 \sum_{s,t}(G_T)_{st}
 =\sum_{s=0}^{T-1}(1-h_{s+1}/h)=\frac{T}{2h}-\frac12.
\]
The correction therefore has absolute mass $T/h-1$.
Apply \eqref{eq:optimality-gamma-lipschitz} and multiply by $h$.
\end{proof}

\begin{proof}[Proof of Lemma~\ref{lem:harmonic-kernel-comparison}]
For $T=1$, both matrices equal $(-1)$ and $h_1=1$, so the claim is
immediate. Assume $T\ge2$.
Write $h=h_T$, $a=\Gamma_{\mathcal A,\mathcal B}^{(T)}(K_T)$, and
$b=\Gamma_{\mathcal A,\mathcal B}^{(T)}(\widehat K_T)$. The normalization correction in
Proposition~\ref{prop:harmonic-matrix} gives
\[
 |hb-a|\le r(T-h).
\]
To absorb the additive term, choose $v\in\mathcal A$ and $u\in\mathcal B$
with $\langle v,u\rangle=r$. Use
\[
 a_t=v,\qquad b_s=\frac{T-1-2s}{T-1}\,u.
\]
These right tests lie in the segment joining $u$ and $-u$.
Summing their objectives using \eqref{eq:harmonic-sum-identities} gives
\[
 a\ge r\frac{T^2-h}{2(T-1)}\ge\frac{r T}{2},\qquad
 hb\ge\frac{r h}{T-1}\sum_{t=1}^{T-1}\frac{t}{h_t}
       \ge\frac{r T}{2},
\]
where we used $h_t\le h\le T$.
It follows that $hb\le a+r(T-h)\le3a$ and
$a\le hb+r(T-h)\le3hb$. Hence
\[
 \frac13\,\Gamma_{\mathcal A,\mathcal B}^{(T)}(K_T)
 \le h_T\Gamma_{\mathcal A,\mathcal B}^{(T)}(\widehat K_T)
 \le3\,\Gamma_{\mathcal A,\mathcal B}^{(T)}(K_T),
\]
as claimed.
\end{proof}

For completeness, the comparison need not be exact minimization.
When $\mathcal A=\mathcal B=[-1,1]$ and $T=2$, the individually normalized matrices are
\[
 H(a)=\begin{pmatrix}-1&-a\\1&-(1-a)\end{pmatrix},\quad 0\le a\le1,
 \qquad\Gamma_{[-1,1],[-1,1]}^{(2)}(H(a))=\max\{1+2a,3-2a\}.
\]
The minimum is $2$ at $a=1/2$, whereas $\widehat K_2$ has $a=1/4$
and value $5/2$.

\subsection{Recovering the Hilbert objective from pulses}
\label{app:harmonic-pulses}

\begin{proof}[Proof of Lemma~\ref{lem:optimality-pulses}]
Use the maximizing vectors and pulses from the main text, with
$w=1/2$ and $P=4T$. Set $Q=[-w/2,P-w/2]$. The purpose of this calculation
is to recover the original maximizing sum
\[
 \sum_{t,s<T}\frac{\langle a_t,b_s\rangle}{2(t-s)-1}
 =\Gamma_{\mathcal A,\mathcal B}^{(T)}(K_T)
\]
from the integral $I$ that determines the randomized witnesses'
common signed correlation. We will justify the two approximations
\[
 I\ \approx\ \frac1{2P}\int_Q\int_Q
       \frac{\langle f(x),g(y)\rangle}{x-y}\,dx\,dy
 \ \approx\ \frac{w^2}{2P}
       \sum_{t,s<T}\frac{\langle a_t,b_s\rangle}{2(t-s)-1},
\]
with absolute errors at most $r/64$ and $3r/64$, respectively.
The second approximation is easy to interpret: on a row/column
pulse rectangle the dot product is the constant
$\langle a_t,b_s\rangle$, the rectangle has area $w^2$, and
$x-y$ is close to the center difference $2(t-s)-1$.
The first approximation accounts for the periodic copies of these
rectangles. This is why a periodic reciprocal kernel enters.

\paragraph{Account for periodic copies.}
Taking the odd part gives
\[
 \int_0^\infty\frac{\phi(v)}{v}\,dv
 =\frac12\lim_{R\to\infty}\int_{-R}^{R}\frac{\psi(v)}{v}\,dv.
\]
Substituting the definition of $\psi$ and setting $x=y+v$ turns the
weight $1/v$ into $1/(x-y)$. Because the functions are periodic,
all translated copies contribute as well. Their summed kernel is

\begin{equation}\label{eq:optimality-cotangent-integral}
 I=\frac1{2P}\int_Q\int_Q
       \langle f(x),g(y)\rangle k_P(x-y)\,dx\,dy,
 \qquad k_P(z)=\frac\pi P\cot\frac{\pi z}{P}.
\end{equation}
Here the factor $1/P$ comes from shift averaging, and the factor
$1/2$ from taking the odd part. To justify the identity, observe
that on the two supports $1/2\le|x-y|<P/2$.
With $R_m=(m+1/2)P$, change variables $x=y+v$ modulo $P$ to obtain
\[
 \int_0^{R_m}\frac{\phi(v)}v\,dv
 =\frac12\int_{-R_m}^{R_m}\frac{\psi(v)}v\,dv
 =\frac1{2P}\int_Q\int_Q\langle f(x),g(y)\rangle
          \sum_{j=-m}^{m}\frac1{x-y+jP}\,dx\,dy.
\]
There is no singularity on the supports. The cotangent
partial-fraction expansion
\[
 \frac1z+\sum_{j=1}^\infty
       \left(\frac1{z+jP}+\frac1{z-jP}\right)
       =\frac\pi P\cot\frac{\pi z}{P}
\]
converges uniformly on these differences: the paired terms equal
$2z/(z^2-j^2P^2)=O(1/(Pj^2))$. Passing to the limit proves
\eqref{eq:optimality-cotangent-integral}.

For $0<|z|\le P/2$,
\begin{equation}\label{eq:optimality-cotangent-error}
 \left|k_P(z)-1/z\right|\le2/P.
\end{equation}
Indeed, $1/u-\cot u$ increases from zero to $2/\pi$ on
$(0,\pi/2]$. The support lengths of $f$ and $g$ are both $wT$,
and every pairing has magnitude at most $r$.
Thus replacing $k_P$ by $1/z$ in
\eqref{eq:optimality-cotangent-integral} costs at most
$r w^2T^2/P^2=r/64$.

\paragraph{Recover the original Hilbert denominators.}
Next replace each pulse pair by its centers. Put $c=2(t-s)-1$ and
write $z=c+u-v$ with $|u|,|v|\le w/2$. In the identity
\[
 \frac1{c+q}=\frac1c-\frac q{c^2}+\frac{q^2}{c^2(c+q)},
 \qquad q=u-v,
\]
the linear term integrates to zero. Since $|q|\le w=1/2$ and
$|c|\ge1$, the remainder has magnitude at most $2w^2/|c|^3$.
The integral over a pulse pair therefore differs from $w^2/c$ by
at most $2w^4/|c|^3$. Using
\[
 \sum_{t,s<T}|2(t-s)-1|^{-3}
 \le T\sum_{j\in\mathbb Z}|2j-1|^{-3}<3T,
\]
the total error, including the prefactor $1/(2P)$ and the pairing
bound $r$, is at most $2r w^4(3T)/(2P)=3r/64$. The center terms equal
\[
 \frac{w^2}{2P}\sum_{t,s<T}
      \frac{\langle a_t,b_s\rangle}{2(t-s)-1}
 =\frac{\Gamma_{\mathcal A,\mathcal B}^{(T)}(K_T)}{32T}.
\]
The two errors sum to $r/16$, proving the lemma.
\end{proof}

\clearpage
\subsection{A visual example of the pulse construction}
\label{app:pulse-visualization}

For illustration, consider a one-dimensional example with $T=2$,
$\mathcal A=\mathcal B=[-1,1]$, and $a_0=a_1=b_0=b_1=1$.
Then $w=1/2$ and $P=8$: the pulses of $f$ are centered at $0,2$,
and those of $g$ at $1,3$. The plots below show these functions together
with their correlation $\psi$ and its odd part $\phi$ from
\eqref{eq:optimality-correlation}. The correlation $\psi(v)$ is the total
overlap length of the shifted pulses $f(x+v)$ and the pulses $g(x)$,
divided by $8$. Two pulse pairs have center difference $-1$, giving
the middle triangle twice the height of the other two. Taking
$(\psi(v)-\psi(-v))/2$ gives the odd function $\phi$.

\begin{center}
 \includegraphics[width=\textwidth]{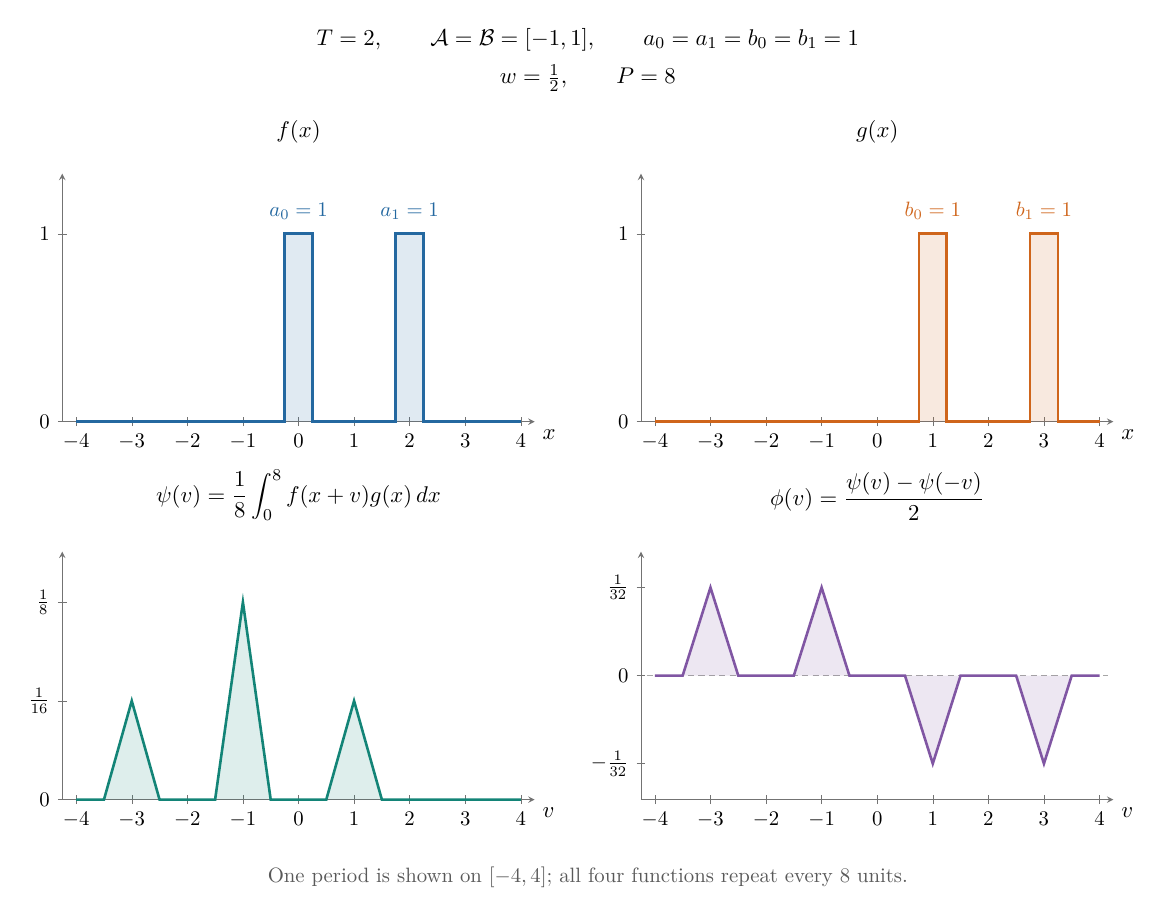}
\end{center}

\newpage
\section{Fourier maps as an analogue of the SVD}
\label{app:fourier-factorization}

This appendix explains the Fourier representation used to bound the
$2$-norm of $K=K_T$ in Lemma~\ref{lem:harmonic-finite-moments}.
For readers familiar with matrices but new to Fourier analysis, the
singular value decomposition (SVD) provides a useful analogy.
In the usual SVD $K=U_0\Sigma V_0^*$, the $T\times T$ matrices
$U_0,V_0$ have orthonormal columns, so they preserve Euclidean norms.
The diagonal matrix $\Sigma$ then determines how much a vector can
be stretched.

The Fourier maps
\[
 (Uy)(\theta)=\sum_{t=0}^{T-1}y_t e^{i(2t)\theta},\qquad
 (Vx)(\theta)=\sum_{s=0}^{T-1}x_s e^{i(2s+1)\theta},
 \qquad 0\le\theta\le\pi,
\]
play a similar role. Think of each map as a matrix with $T$ columns
and an uncountable collection of rows indexed by $\theta\in[0,\pi]$.
Multiplying by a vector gives a function of $\theta$ in place of a
vector of $T$ coordinates. We take inner products of these functions
by averaging over the interval:
\[
 \langle f,g\rangle
 =\frac1\pi\int_0^\pi\overline{f(\theta)}g(\theta)\,d\theta,
 \qquad
 \|f\|_2^2=\frac1\pi\int_0^\pi|f(\theta)|^2\,d\theta.
\]

The columns of each map are orthonormal under this inner product.
For $U$,
\[
 \frac1\pi\int_0^\pi
 \overline{e^{i(2t)\theta}}e^{i(2u)\theta}\,d\theta
 =\frac1\pi\int_0^\pi e^{i2(u-t)\theta}\,d\theta
 =\begin{cases}1,&t=u,\\0,&t\ne u.\end{cases}
\]
The same calculation applies to $V$, since the difference between
two odd frequencies is again an even integer. Expanding the squares
and using this orthogonality gives
\[
 \|Uy\|_2^2=\sum_{t=0}^{T-1}|y_t|^2=\|y\|_2^2,
 \qquad
 \|Vx\|_2^2=\sum_{s=0}^{T-1}|x_s|^2=\|x\|_2^2.
\]
Thus $U,V$ are \emph{isometries}: just like matrices with orthonormal
columns, they preserve the $2$-norm.

The adjoint $U^*$ takes inner products with the columns of $U$:
\[
 (U^*f)_t=\frac1\pi\int_0^\pi
 f(\theta)\overline{e^{i(2t)\theta}}\,d\theta.
\]
It records the coefficients of the orthogonal projection onto those
columns, so $\|U^*f\|_2\le\|f\|_2$. For every odd integer $k$,
\[
 \frac i2\int_0^\pi e^{-ik\theta}\,d\theta=\frac1k.
\]
Taking $k=2(t-s)-1$ therefore gives the factorization
\[
 K=\frac{i\pi}{2}U^*V.
\]
The factor $\pi$ appears because the inner product defining $U^*$
uses $d\theta/\pi$, whereas the preceding integral uses $d\theta$.
We can now read off the norm bound:
\[
 \|Kx\|_2=\frac\pi2\|U^*Vx\|_2
 \le\frac\pi2\|Vx\|_2=\frac\pi2\|x\|_2.
\]
This is the sense in which the Fourier maps give an analogue of an
SVD: the factorization isolates the scalar $\pi/2$, while the remaining
maps have norm at most one. It gives an upper bound on every singular
value of $K$.

\newpage
\section{Further geometric bounds}
\label{app:harmonic-refinements}

We prove the additional geometric bounds listed in Section~\ref{sec:harmonic}:
general $\ell_p$ norms, symmetric polytopes, and products of Euclidean
balls. We then give a polynomial lower bound for $\gamma$ for some norms.
Throughout, $C$ denotes an absolute constant that may change between
occurrences. We use the column-norm representation of the matrix objective
from Section~\ref{sec:geometry}.

Each upper bound gives a calibration guarantee through
Theorem~\ref{thm:general-geometric-rate}. More generally, if
$\cY\subseteq c+r\cL$, then $c_{\cY}\in\cY$ implies
$\widetilde\cY\subseteq2r\cL$, so
$\gamma(\cY,\cL)\le2r\gamma(\cL,\cL)$.

\subsection{General \texorpdfstring{$\ell_p$}{lp} norms}

\begin{theorem}\label{thm:lp}
For every $d\ge1$ and $1\le p\le\infty$,
\[
 \gamma(B_p^d,B_p^d)\le C\min\{p^*,\log(2d)\},
\]
where $p^*=\max\{p,p/(p-1)\}$ for $1<p<\infty$, and
$p^*=\infty$ at the endpoints. Consequently, for
$\cY\subseteq c+rB_p^d$ and $\cL=B_p^d$, Algorithm~\ref{alg:harmonic}
achieves $\varepsilon$-calibration whenever
\[
 T\ge\exp\!\left(\frac{Cr\min\{p^*,\log(2d)\}}{\varepsilon}\right).
\]
\end{theorem}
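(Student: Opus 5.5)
We prove two separate bounds and take the minimum. Throughout we use the column-norm representation from Section~\ref{sec:geometry}: for $\cY=\cL=B_p^d$ we have $c_{\cY}=0$, so $\widetilde\cY=B_p^d$, and
\[
 \frac1T\Gamma^{(T)}_{B_p^d,B_{p'}^d}(K_T)
 =\sup_{A\in(B_p^d)^T}\frac1T\sum_{s<T}\|(AK_T)_{:,s}\|_p,
\]
where $p'=p/(p-1)$ is the dual exponent. Once $\gamma(B_p^d,B_p^d)\le C\min\{p^*,\log(2d)\}$ is established, the calibration statement follows from the remark at the start of this appendix ($\gamma(\cY,\cL)\le 2r\gamma(\cL,\cL)$ when $\cY\subseteq c+r\cL$) together with Theorem~\ref{thm:general-geometric-rate}.

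\textbf{The $p^*$ bound for $2\le p<\infty$.} We mimic the cube argument with exponent $q=p$ instead of $\log d$. Set $Z=AK$. By power means across columns,
\[
 \frac1T\sum_s\|Z_{:,s}\|_p\le\Bigl(\frac1T\sum_s\|Z_{:,s}\|_p^p\Bigr)^{1/p}.
\]
Exchanging the order of summation gives $\sum_s\|Z_{:,s}\|_p^p=\sum_i\|A_{i,:}K\|_p^p$. Applying the moment bound for $K^{\mathsf T}$ at real exponent $p$, via the Riesz--Thorin extension of Lemma~\ref{lem:harmonic-finite-moments} noted in footnote~\ref{fn:hilbert-interpolation}, yields
\[
 \sum_i\|A_{i,:}K\|_p^p\le(3p)^p\sum_t\|A_{:,t}\|_p^p\le(3p)^pT.
\]
This gives $\gamma\le3p$. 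If we prefer to avoid interpolation, we can instead take $q$ to be the next power of two at or above $p$. That route needs $\|Z_{:,s}\|_p$ controlled by $\|Z_{:,s}\|_q$ up to a dimension factor, which fails, so the Riesz--Thorin extension is the intended tool.

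\textbf{The $p^*$ bound for $1<p\le2$.} We use duality. Swapping the roles of rows and columns, as in the multi-class case of Lemma~\ref{lem:three-bodies}, gives
\[
 \Gamma_{B_p,B_{p'}}(K)=\Gamma_{B_{p'},B_p}(K^{\mathsf T})=\Gamma_{B_{p'},B_p}(K),
\]
where the last equality uses index reversal. Hence $\gamma(B_p^d,B_p^d)=\gamma(B_{p'}^d,B_{p'}^d)\le3p'$ by the previous case, since $p'\ge2$.

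\textbf{The $\log(2d)$ bound, valid for every $p$.} For $p\ge2$ we compare with the cube. Since $B_p^d\subseteq B_\infty^d$, we have $\|v\|_p\le d^{1/p}\|v\|_\infty$. A cleaner route is to rerun the cube computation with $q\ge p$: within each column $\|Z_{:,s}\|_p\le d^{1/p-1/q}\|Z_{:,s}\|_q$, and $\sum_t\|A_{:,t}\|_q^q\le T$ because $\|a\|_q\le\|a\|_p\le1$. This gives the bound $Cq\,d^{1/p-1/q}$. Choosing $q\approx\max\{p,\log d\}$ makes $d^{1/p-1/q}\le e^{\log d/p}$, which is not good enough when $p$ is small compared with $\log d$. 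So instead we take $q=p$ when $p\le\log d$, which gives $O(p)=O(\log d)$. When $p>\log d$, we take $q$ to be a power of two near $\log d$; this gives $d^{1/p-1/q}\le1$ directly, but it requires $\|A_{:,t}\|_q^q$ bounded, and $q<p$ here, so we bound it by $d^{1-q/p}\le e$. Combining the two regimes gives $\min\{p,\log(2d)\}$ up to constants for $p\ge2$. The case $p\le2$ then follows by the duality step above. The endpoints $p=1,\infty$ are exactly the cube and simplex cases already handled in Lemma~\ref{lem:three-bodies}.

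The main obstacle is the $p>\log d$ regime. There we must check carefully that the inequality $\sum_t\|A_{:,t}\|_q^q\le Td^{1-q/p}$, obtained from Hölder within each column, combined with the $q$-moment bound, gives
\[
 Cq\,d^{1/q-1/p}\cdot d^{0}\le Cq\,e,
\]
and we need to track the exponents correctly in both directions of the norm comparison.
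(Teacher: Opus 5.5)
Your proposal is correct and follows essentially the paper's proof: the Riesz--Thorin moment bound at real exponent $q$, the choice $q=\min\{p,\max\{2,\log d\}\}$ for $p\ge2$ (so the only dimension loss is $d^{1/q-1/p}\le e$ on the inputs), and duality for $p\le2$. The one difference is that the paper dualizes the scalar estimate to $1<q<2$ and picks $q=\max\{p,q_0/(q_0-1)\}$, whereas you dualize the objective itself, $\gamma(B_p^d,B_p^d)=\gamma(B_{p'}^d,B_{p'}^d)$; this is inessential. One slip to fix: in the regime $p>\log d$ the claim $d^{1-q/p}\le e$ is false (e.g.\ $p=\infty$ gives $d$), and what you need, as your final paragraph correctly states, is its $q$-th root $d^{1/q-1/p}\le d^{1/q}\le e$.
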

\begin{proof}
Lemma~\ref{lem:harmonic-finite-moments} and the interpolation in
Footnote~\ref{fn:hilbert-interpolation} give the scalar moment estimate
for every real $q\ge2$. Duality extends it to $1<q<2$, since reversing
both coordinate orders turns $K_T$ into $K_T^{\mathsf T}$. Thus
\begin{equation}\label{eq:lp-scalar-moments}
 \|K_T^{\mathsf T}x\|_q\le Cq^*\|x\|_q,
 \qquad q^*=\max\{q,q/(q-1)\},\quad 1<q<\infty.
\end{equation}
For $x_0,\ldots,x_{T-1}\in B_q^d$, put
$z_s=\sum_{t<T}(K_T)_{ts}x_t$. Applying the scalar estimate in each
coordinate gives
\[
 \sum_{s<T}\|z_s\|_q^q
 \le(Cq^*)^q\sum_{t<T}\|x_t\|_q^q
 \le(Cq^*)^q T.
\]
H\"older over $s$ and the column-norm representation give
$\gamma(B_q^d,B_q^d)\le Cq^*$.

For every $v\in\R^d$ and $1\le p\le\infty$,
\[
 \|v\|_q\le d^{(1/q-1/p)_+}\|v\|_p,\qquad
 \|v\|_p\le d^{(1/p-1/q)_+}\|v\|_q,
\]
where $a_+=\max\{a,0\}$ and $1/\infty=0$.
Applying these comparisons to the inputs and output columns yields
\[
 \gamma(B_p^d,B_p^d)
 \le C\inf_{1<q<\infty}d^{|1/p-1/q|}q^*.
\]
Let $q_0=\max\{2,\log d\}$. For $p\ge2$, choose
$q=\min\{p,q_0\}$, taking $q=q_0$ at $p=\infty$.
Either $q=p$, or $d^{1/q_0-1/p}\le e$.
For $1\le p\le2$, choose $q=\max\{p,q_0/(q_0-1)\}$.
Either $q=p$, or $d^{1/p-1/q}\le d^{1/q_0}\le e$.
In both cases, $q^*\le C\min\{p^*,\log(2d)\}$.
This proves the geometric bound, and
Theorem~\ref{thm:general-geometric-rate} gives the calibration guarantee.
\end{proof}

\subsection{Symmetric polytopes}

\begin{theorem}\label{thm:polyhedral-gamma}
Suppose the norm unit ball is
\[
 \cL=\{x\in\R^d:|\langle u_i,x\rangle|\le1,\ 1\le i\le m\}.
\]
Then $\gamma(\cL,\cL)\le C\log(2m)$. For $\cY=\cL$,
Algorithm~\ref{alg:harmonic} achieves $\varepsilon$-calibration whenever
$T\ge(2m)^{C/\varepsilon}$.
\end{theorem}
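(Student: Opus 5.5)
We reduce the polytope case to the $\ell_\infty^m$ computation in Lemma~\ref{lem:three-bodies} through the linear map $U:\R^d\to\R^m$, $Ux=(\langle u_i,x\rangle)_{i\le m}$. By definition $\|x\|_{\cL}=\|Ux\|_\infty$. Since $\cL$ is a norm ball, it is bounded, so $U$ is injective. Because $\cL=-\cL$, the center of mass is $c_{\cL}=0$, and hence $\widetilde\cY=\cL$. Using the column-norm representation from Section~\ref{sec:geometry}, we must bound
\[
 \frac1T\sup_{A\in\cL^T}\sum_{s<T}\|(AK_T)_{:,s}\|_{\cL}
 =\frac1T\sup_{A\in\cL^T}\sum_{s<T}\|(UAK_T)_{:,s}\|_\infty .
\]
For $A\in\cL^T$, the $m\times T$ matrix $UA$ has columns $UA_{:,t}$ with $\|UA_{:,t}\|_\infty=\|A_{:,t}\|_{\cL}\le1$. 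So $UA\in(B_\infty^m)^T$, and the quantity above is at most $\frac1T\sup_{B\in(B_\infty^m)^T}\sum_s\|(BK_T)_{:,s}\|_\infty$. This is exactly the quantity bounded in the multi-event part of Lemma~\ref{lem:three-bodies}, with $d$ replaced by $m$.

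Concretely, we rerun that argument in dimension $m$. Power mean with exponents $1$ and $q$ across columns, followed by $\|\cdot\|_\infty\le\|\cdot\|_q$ within columns, gives
\[
 \frac1T\sum_s\|(BK_T)_{:,s}\|_\infty\le\Big(\frac1T\sum_{i\le m}\|B_{i,:}K_T\|_q^q\Big)^{1/q}.
\]
Lemma~\ref{lem:harmonic-finite-moments}, applied to $K_T^{\mathsf T}$, bounds each row term by $(Cq)^q\|B_{i,:}\|_q^q$. Summing over rows gives at most $(Cq)^q Tm$, since every entry of $B$ has absolute value at most $1$. The result is $Cq\,m^{1/q}$. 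We take $q$ to be the least power of two with $q\ge\max\{2,\log(2m)\}$, so that $m^{1/q}\le e$ and $q=O(\log(2m))$. This yields $\gamma(\cL,\cL)\le C\log(2m)$, uniformly in $T$.

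The calibration claim then follows from Theorem~\ref{thm:general-geometric-rate}. The threshold $\exp(C\gamma/\varepsilon)$ becomes $\exp(C'\log(2m)/\varepsilon)=(2m)^{C'/\varepsilon}$.

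There is no real obstacle here. The only points to check are that $\widetilde\cY=\cL$ when $\cY=\cL$ (this needs central symmetry, which holds because $\cL$ is a norm ball), and that the $m=1$ case is absorbed by using $\log(2m)$ in place of $\log m$.
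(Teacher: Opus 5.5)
Your proof is correct and is essentially the paper's argument: passing through $U$ so that $f_i(z_s)=(UAK_T)_{i,s}$ reduces the bound to the $\ell_\infty^m$ moment computation, just as the paper does directly with the functionals $f_i(x)=\langle u_i,x\rangle$. The only difference is minor. You take $q$ to be a power of two, so you need only the dyadic Lemma~\ref{lem:harmonic-finite-moments}. The paper instead takes $q=\max\{2,\log m\}$, which relies on the interpolated real-$q$ estimate from the footnote.
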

\begin{proof}
Write $f_i(x)=\langle u_i,x\rangle$, so
$\|x\|_{\cL}=\max_i|f_i(x)|$.
For $x_t\in\cL$, put $z_s=\sum_{t<T}(K_T)_{ts}x_t$.
For every $q\ge2$, H\"older over time and
\eqref{eq:lp-scalar-moments} give
\[
 \begin{aligned}
 \frac1T\sum_{s<T}\|z_s\|_{\cL}
 &\le\left(\frac1T\sum_{s<T}\sum_{i=1}^m|f_i(z_s)|^q\right)^{1/q}\\
 &\le Cq\left(\frac1T\sum_{t<T}\sum_{i=1}^m|f_i(x_t)|^q\right)^{1/q}
 \le Cq\,m^{1/q}.
 \end{aligned}
\]
Choose $q=\max\{2,\log m\}$. Then $m^{1/q}\le e$ and
$q=O(\log(2m))$. Taking the supremum over the inputs and the horizon
proves the claim; Theorem~\ref{thm:general-geometric-rate} gives the
calibration guarantee.
\end{proof}

\subsection{Products of Euclidean balls}

\begin{theorem}\label{cor:structured}
Let $\cL=\prod_{j=1}^k B_2^{d_j}$, with associated norm
$\|v\|_{\cL}=\max_j\|v_j\|_2$. Then
\[
 \gamma(\cL,\cL)\le C\log(2k),
\]
independently of the dimensions $d_j$. For $\cY=\cL$,
Algorithm~\ref{alg:harmonic} achieves $\varepsilon$-calibration whenever
$T\ge(2k)^{C/\varepsilon}$.
\end{theorem}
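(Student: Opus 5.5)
We follow the template of the multi-event bound in Lemma~\ref{lem:three-bodies} and the polytope bound in Theorem~\ref{thm:polyhedral-gamma}, replacing scalar coordinates by Euclidean blocks. Since $\cL$ is centrally symmetric with center of mass $0$, we have $\widetilde\cY=\cL$. By the column-norm representation of Section~\ref{sec:geometry}, it suffices to bound
\[
 \frac1T\sum_{s<T}\|z_s\|_{\cL},\qquad z_s=\sum_{t<T}(K_T)_{ts}x_t,
\]
uniformly over $x_t\in\cL$. Write $x_t=(x_{t,1},\ldots,x_{t,k})$ with $x_{t,j}\in B_2^{d_j}$, and correspondingly $z_{s,j}=\sum_t(K_T)_{ts}x_{t,j}$.

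Fix $q\ge2$ a power of two. H\"older over $s$ and $\max_j\le(\sum_j\cdot^q)^{1/q}$ give
\[
 \frac1T\sum_{s<T}\|z_s\|_{\cL}\le\Bigl(\frac1T\sum_{j=1}^k\sum_{s<T}\|z_{s,j}\|_2^q\Bigr)^{1/q}.
\]
The key step is a vector-valued moment bound: for each block $j$,
\[
 \sum_s\|z_{s,j}\|_2^q\le(Cq)^q\sum_t\|x_{t,j}\|_2^q.
\]
This is the main obstacle, because Lemma~\ref{lem:harmonic-finite-moments} is stated for scalar sequences, and applying it coordinatewise inside $B_2^{d_j}$ would lose a dimension factor.

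I would handle it by a standard Gaussian/Khintchine averaging (Marcinkiewicz--Zygmund) argument. Let $g$ be a standard Gaussian vector in $\R^{d_j}$. For any $v$, $\E|\langle g,v\rangle|^q=m_q\|v\|_2^q$, where $m_q=\E|g_1|^q$. Applying the scalar bound of Lemma~\ref{lem:harmonic-finite-moments}, valid also for $K_T^{\mathsf T}$, to the real sequence $\langle g,x_{t,j}\rangle$ gives
\[
 m_q\sum_s\|z_{s,j}\|_2^q=\E\sum_s|\langle g,z_{s,j}\rangle|^q\le(Cq)^q\,\E\sum_t|\langle g,x_{t,j}\rangle|^q=(Cq)^q m_q\sum_t\|x_{t,j}\|_2^q.
\]
Dividing by $m_q$ yields the claim with no dimension loss. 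An alternative route is to rerun the Cotlar identity \eqref{eq:finite-cotlar} directly with Hilbert-space values, but the Gaussian trick is cleaner.

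Summing over $j$ and using $\|x_{t,j}\|_2\le1$ bounds the right-hand side by $(Cq)^q kT$. Hence
\[
 \frac1T\sum_s\|z_s\|_{\cL}\le Cq\,k^{1/q}.
\]
Choosing $q$ as the least power of two with $q\ge\max\{2,\log k\}$ gives $k^{1/q}\le e$, so $\gamma(\cL,\cL)\le C\log(2k)$. Theorem~\ref{thm:general-geometric-rate} then gives the horizon $(2k)^{C/\varepsilon}$.
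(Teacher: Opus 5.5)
Your proposal is correct and matches the paper's proof. The paper likewise extends the scalar moment bound to Euclidean-valued sequences by applying it to $(\langle g,f_t\rangle)_t$ for a standard Gaussian $g$ and cancelling $m_q$, then uses H\"older over time together with the block maximum and takes $q\approx\log k$. Your restriction to $q$ a power of two is slightly more self-contained: it uses Lemma~\ref{lem:harmonic-finite-moments} directly, whereas the paper's choice $q=\max\{2,\log k\}$ relies on the interpolated estimate \eqref{eq:lp-scalar-moments}.
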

\begin{proof}
First extend \eqref{eq:lp-scalar-moments} to a sequence
$f_0,\ldots,f_{T-1}$ in a finite-dimensional Euclidean space.
For a standard Gaussian vector $g$ in that space and $q\ge2$,
\[
 \E|\langle g,v\rangle|^q=m_q\|v\|_2^q
\]
for a positive scalar $m_q$. Apply the scalar estimate to
$(\langle g,f_t\rangle)_{t<T}$, take expectations, and cancel $m_q$.
This gives
\[
 \sum_{s<T}\left\|\sum_{t<T}(K_T)_{ts}f_t\right\|_2^q
 \le(Cq)^q\sum_{t<T}\|f_t\|_2^q.
\]
Now take $x_t\in\cL$ and let $z_s=\sum_{t<T}(K_T)_{ts}x_t$.
Applying this estimate in each block yields
\[
 \begin{aligned}
 \frac1T\sum_{s<T}\|z_s\|_{\cL}
 &\le\left(\frac1T\sum_{s<T}\sum_{j=1}^k\|(z_s)_j\|_2^q\right)^{1/q}\\
 &\le Cq\left(\frac1T\sum_{t<T}\sum_{j=1}^k\|(x_t)_j\|_2^q\right)^{1/q}
 \le Cq\,k^{1/q}.
 \end{aligned}
\]
Choose $q=\max\{2,\log k\}$ and take the supremum over the inputs
and the horizon. Theorem~\ref{thm:general-geometric-rate} gives the
calibration guarantee. Its buckets are complete joint forecasts.
\end{proof}

\subsection{A polynomial lower bound for the geometric coefficient}
\label{app:geometric-lower-bound}

Bourgain constructed sums of $n$ sines of size $O(n^{2/3})$, although
the corresponding cosine sum has value $n$ at zero. We use translates
of these sine sums as witnesses for the matrix objective.

\begin{theorem}\label{thm:bourgain-lower-bound}
There is an absolute constant $c>0$ such that, for every $d\ge2$,
some norm ball $\cL\subseteq\R^d$ satisfies
\[
 \gamma(\cL,\cL)\ge c\,d^{1/3}.
\]
\end{theorem}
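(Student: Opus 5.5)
We plan to use Bourgain's sine sums as explicit test vectors. By Bourgain~\cite{bourgain1983sinus}, there is a set of $n\asymp d$ positive integers $k_1<\dots<k_n$ such that
\[
S(\theta)=\sum_j\sin(k_j\theta)
\]
satisfies $\|S\|_\infty\le Cn^{2/3}$, while the conjugate cosine sum $\sum_j\cos(k_j\theta)$ equals $n$ at $\theta=0$.

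The first step is to pick a norm ball adapted to these frequencies. Work in $\R^{d}$ with $d=2n$, with coordinates indexed by pairs $(j,\pm)$. For each shift $x$, form the vector
\[
v(x)=\bigl(\cos(k_jx),\sin(k_jx)\bigr)_{j\le n}.
\]
Let $\cL$ be the symmetric convex hull of $\{v(x)\}$, rescaled so that $\cL\supseteq B_2^d$ or a similar normalization. The dual pairing $\langle v(x),v(y)\rangle=\sum_j\cos(k_j(x-y))$ is the cosine sum, of size $n$ at coincidence. A rotated pairing, swapping each $(\cos,\sin)$ block by a quarter turn, instead yields the sine sum $S(x-y)$, which is only $O(n^{2/3})$.

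The second step is to exhibit witnesses. Take $a_t=v(\theta t)$ and $b_s=J\,v(\theta(s+\tfrac12))$ with $J$ the blockwise rotation, or the reverse assignment. The objective is then
\[
\sum_{t,s}K_{ts}\langle a_t,b_s\rangle=\sum_{t,s}\frac{\pm S\text{ or }C(\theta(t-s-\tfrac12))}{2(t-s)-1}.
\]
Applying the Hilbert transform to a cosine sum yields the sine sum and vice versa. Discretely, $\sum_m \cos(k(m-\frac12)\theta)/(2m-1)$ behaves like $\frac{\pi}{4}\operatorname{sgn}$ times a sine. Summing over $t$ should therefore give an objective of order $T\cdot n$ from the ``diagonal'' cosine-type behavior. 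Meanwhile, the normalization of the norm ball, that is, the $\cL$-norm of the $a_t$ and the $\cL^\circ$-norm of the $b_s$, is controlled by $\sup|S|\lesssim n^{2/3}$. Taking the ratio gives $\gamma\gtrsim n/n^{2/3}=n^{1/3}$.

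Concretely, I would define the norm so that the witness vectors have norm about $n^{1/3}$ relative to the quantity produced. This means setting $\|\cdot\|_\cL$ via the polar body generated by the rotated vectors $Jv(x)$, so that $\sup_{x,y}|\langle v(x),Jv(y)\rangle|=\|S\|_\infty$ bounds the normalization $r$. Using $\cY=\cL$, we have $r=\max_{a,b}\langle a,b\rangle$, which must be $O(n^{2/3})$ after the choice of scaling. The Hilbert sum over $t-s$ then converts the $J$-rotated pairing back into an unrotated cosine pairing of size $n$, with a gain factor of order $\pi/2$ times the density of aligned pairs.

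The main obstacle is making the lower bound of order $Tn$ rigorous uniformly over the frequencies. The Hilbert sum at finite $T$ converges only conditionally. Frequencies $k_j\theta$ near multiples of $\pi$ also give degenerate contributions. I would first try averaging $\theta$ uniformly, or choosing $\theta$ small so that all $k_j\theta\in(0,\pi/2)$. I would then use the symbol of $K_T$, namely $\sum_m e^{-i(2m-1)\phi}/(2m-1)\to -\frac{i\pi}{2}\operatorname{sgn}$, to show that each frequency contributes at least $cT$ once $T\gg k_n/\theta$. Boundary effects contribute $O(\log T)$ per frequency, and these are absorbed by letting $T\to\infty$, which is allowed since $\gamma$ is a supremum over $T$.
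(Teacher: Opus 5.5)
Your proposal is correct and is essentially the paper's construction. Taking $\cL$ to be the absolutely convex hull of the evaluation vectors $v(x)$ makes it the polar of the paper's sup-norm ball on $\operatorname{span}\{\cos(\lambda_j\cdot),\sin(\lambda_j\cdot)\}$, and your witnesses $a_t=v(\theta t)\in\cL$ and $b_s=Jv(\theta(s+\tfrac12))/\|S\|_\infty\in\cL^\circ$ are the paper's point evaluations and normalized translates of $S$ with their roles swapped, so you get (up to a sign absorbed by symmetry) the same pairing $S((t-s-\tfrac12)\theta)/\|S\|_\infty$ and the same $\pi n/(2\|S\|_\infty)$ limit. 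The finite-$T$ convergence you flag as the main obstacle needs no uniformity over frequencies, since $n$ is fixed and only $T\to\infty$ matters: the paper notes that the weights $1-|k|/T$ Ces\`aro-average the symmetric partial sums of the square-wave series, which sum to $\pi/2$ for each $\omega_j\in(0,2\pi)$.
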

\begin{proof}
Bourgain's sine-sum theorem~\cite{bourgain1983sinus} gives, for every
$n\ge1$, distinct positive integers $\lambda_1,\ldots,\lambda_n$ such
that
\[
 S(u)=\sum_{j=1}^n\sin(\lambda_j u),
 \qquad A:=\sup_{u\in\R}|S(u)|\le C_0 n^{2/3},
\]
where $C_0$ is an absolute constant. Consider the real vector space
\[
 E=\operatorname{span}\{\sin(\lambda_j\,\cdot),
                       \cos(\lambda_j\,\cdot):1\le j\le n\},
 \qquad \|f\|=\sup_{u\in\R}|f(u)|.
\]
Identify $E$ with $\R^{2n}$ using the displayed basis, and let $\cL$
be its norm unit ball. Translation preserves this space and its norm,
and every evaluation $f\mapsto f(u)$ belongs to $\cL^\circ$.

Choose $0<\eta<2\pi/\max_j\lambda_j$ and write
$\omega_j=\lambda_j\eta\in(0,2\pi)$. For $0\le t,s<T$, take
\[
 a_t(u)=\frac{S(u+t\eta)}A,
 \qquad b_s(f)=f\bigl(-(s+\tfrac12)\eta\bigr).
\]
Then $a_t\in\cL$ and $b_s\in\cL^\circ$, and
$\langle a_t,b_s\rangle=A^{-1}\sum_{j=1}^n
\sin((t-s-\tfrac12)\omega_j)$. Counting the $T-|k|$ pairs with
$t-s=k$ therefore gives
\begin{equation}\label{eq:bourgain-witnesses}
 \frac{\Gamma_{\cL,\cL^\circ}^{(T)}(K_T)}T
 \ge\frac1A\sum_{j=1}^n
 \sum_{k=-(T-1)}^{T-1}
 \left(1-\frac{|k|}T\right)
 \frac{\sin((k-\tfrac12)\omega_j)}{2k-1}.
\end{equation}
For every $0<\omega<2\pi$, the odd sine series gives
\[
 \lim_{N\to\infty}\sum_{k=-N}^{N}
 \frac{\sin((k-\tfrac12)\omega)}{2k-1}
 =2\sum_{m=0}^\infty
 \frac{\sin((2m+1)\omega/2)}{2m+1}
 =\frac\pi2.
\]
Both tails converge by Dirichlet's test; the last equality is the
Fourier series of the square wave on $(0,\pi)$. The weights
$1-|k|/T$ in \eqref{eq:bourgain-witnesses} take the arithmetic mean
of the symmetric partial sums with $N=0,\ldots,T-1$, so the weighted
sum has the same limit. Since $\cL$ is centrally symmetric, its center
of mass is zero. Letting $T\to\infty$ in
\eqref{eq:bourgain-witnesses} yields
\[
 \gamma(\cL,\cL)\ge\frac{\pi n}{2A}
 \ge\frac{\pi}{2C_0}n^{1/3}.
\]
This proves the claim in dimension $2n$. For odd $d$, take
$n=\lfloor d/2\rfloor$ and replace this ball by
$\cL\times[-1,1]$. Extending the vectors and functionals above by a
zero coordinate preserves their objective values. Since
$n\ge d/3$ for $d\ge2$, adjusting the absolute constant proves the
claim in every dimension.
\end{proof}

This rules out a uniform polylogarithmic bound on $\gamma(\cL,\cL)$.
It is a limitation of the guarantee in
Theorem~\ref{thm:general-geometric-rate}; it does not by itself give a
lower bound on calibration error.

\end{document}